\pgfplotsset{compat=newest}
\def\beq{\begin{equation}}
\def\eeq{\end{equation}\noindent}
\newcommand{\bp}{\begin{psfrags}}
\newcommand{\ep}{\end{psfrags}}
\newcommand{\bc}{\begin{center}}
\newcommand{\ec}{\end{center}}
\colorlet{soulcyan}{cyan!30}
\let\Ginclude@graphics\@org@Ginclude@graphics
\newcommand{\norm}[1] {\left\lVert #1 \right\rVert}
\renewcommand{\vec}[1]{\boldsymbol{#1}}
\newcommand{\mytitle}
{Practical and Fast Momentum-Based Power Methods}
\newcommand{\myshorttitle}{Practical and Fast Momentum-Based Power Methods}
\title[\myshorttitle]{\mytitle}
\begin{document}
\maketitle

\begin{abstract}%
  The power method is a classical algorithm with broad applications in machine learning tasks, including streaming PCA, spectral clustering, and low-rank matrix approximation. The distilled purpose of the vanilla power method is to determine the largest eigenvalue (in absolute modulus) and its eigenvector of a matrix. A momentum-based scheme can be used to accelerate the power method, but achieving an optimal convergence rate with existing algorithms critically relies on additional spectral information that is unavailable at run-time, and sub-optimal initializations can result in divergence. In this paper, we provide a pair of novel momentum-based power methods, which we call the \textit{delayed momentum power method} (DMPower) and a streaming variant, the \textit{delayed momentum streaming method} (DMStream). Our methods leverage inexact deflation and are capable of achieving near-optimal convergence with far less restrictive hyperparameter requirements. We provide convergence analyses for both algorithms through the lens of perturbation theory. Further, we experimentally demonstrate that DMPower routinely outperforms the vanilla power method and that both algorithms match the convergence speed of an oracle running existing accelerated methods with perfect spectral knowledge.

\end{abstract}

\begin{keywords}%
  matrix decomposition, PCA, power methods, momentum acceleration, streaming PCA
\end{keywords}

\section{Introduction}\label{sec:intro}


Approximating the dominant eigenvector of a matrix $A\in\mathbb{R}^{d\times d}$ is a task common to many statistical and industrial applications. The vanilla power method is a simple and inexpensive algorithm for computing the dominant eigenvector $v_1$ of a matrix $A$. For an initial $q_0\in\mathbb{R}^d$ non-orthogonal to $v_1$, the power method performs the following update eventually converging to $v_1$,
\begin{equation}
\label{intro-vanilla-update}
q_k = A^kq_{k-1}/||A^kq_{k-1}|| . 
\end{equation}
Owing to its ease of implementation and modest assumptions for convergence, the power method has found its use in a variety of machine learning tasks. It can be used to assist the k-means algorithm for class separation of large datasets, which is referred to as power iteration clustering (PIC) \protect{\citep{lin2010power,lin2010very,thang2013deflation}}. It is also used in sparse PCA, which projects data onto sparse principal components, that is, components with small $\ell_0$ norm \protect{\citep{journee2010generalized,yuan2013truncated}}. In particular, word-embedding matrices for NLP models can be dimensionally-reduced via sparse PCA \protect{\citep{gawalt2010sparse, drikvandi2020sparse}.} 

De Sa et al. \citep{de2018accelerated} introduced the \textit{power method with momentum}, abbreviated as Power+M along with a stochastic variant, Mini-Batch Power+M. Both of these methods outperform their vanilla counterparts (the stochastic version of equation \ref{intro-vanilla-update} uses instead an unbiased estimate $\widehat{A}$ of $A$). Here, speed is measured in the sense of iteration complexity, i.e., the number of outer loop iterations/updates required to output a vector $q_k$ with precision $\epsilon$, i.e., $\sin^2 \theta(q_k, v_1)\triangleq 1-(q_k^{\top}v_1)^2<\epsilon$. Recently, other algorithms have been developed which adopt a momentum-based scheme \citep{kimstochastic,mai2019noisy}. However, a notable drawback to Power+M, Mini-Batch Power+M, and other existing momentum-based methods is that achieving accelerated iteration complexity requires knowledge of $\lambda_2$, the second greatest eigenvalue of $A$, which is an impractical assumption. Specifically, the momentum coefficient $\beta$, a hyperparameter of momentum-based power methods, must be set near $\lambda_2^2/4$ to achieve improved iteration complexity over the vanilla power method. 

In this work, we develop a scheme which enjoys a near-optimal acceleration without the strict spectral knowledge requirements of other momentum-based methods. Our scheme consists of two phases. In the \textit{pre-momentum phase}, we run a vanilla (or stochastic for the online setting) power method to approximate the dominant eigenvector and an inexact Hotelling deflation \citep{saad2011numerical} to estimate $\lambda_2$ and later assign $\beta\approx{\lambda_2^2}/4$ (our momentum coefficient). In the \textit{momentum phase}, we run Power+M (or Mini-Batch Power+M for the online setting) with the near-optimal $\beta$ assignment taken from the previous phase for the remaining iterations until convergence. Our main contributions, the \textit{delayed momentum power method} (DMPower) and the \textit{delayed momentum streaming power method} (DMStream), are realizations of this scheme.

\paragraph{Relaxing Spectral Knowledge}
 As explained above, an optimal acceleration of momentum-based methods relies on the proper selection of $\beta$. Previous approaches rely on expensive guess-and-check auto-tuning, whereby the user chooses $\beta$, and at each round conducts many experimental iterations with $0.67\beta$, 0.99$\beta$, $\beta$, 1.01$\beta$, etc.,  revising the coefficient after determining which adjustment results in the largest Rayleigh quotient \citep{de2018accelerated}. To the best of our knowledge, DMPower and DMStream are the first momentum-based algorithms to approximate optimal $\beta$ in a partially-adaptive manner while still benefiting from acceleration. 
 We present an informal version of our major results.
\begin{theorem}[\bf Informal] \label{dmpminformal}
 Let $\Delta = |\lambda_1-\lambda_2|$ denote the absolute difference between the largest and second-largest eigenvalues. With high probability, our proposed practical \emph{DMPower}, after an efficient pre-momentum warm-up stage, outputs an $\epsilon$-close estimate of the leading eigenvector within the state-of-the-art $\mathcal{O}\Bigl(\frac{1}{\sqrt{\Delta}}\log\big(\frac{1}{\epsilon}\big)\Bigr)$ iteration complexity using a momentum acceleration, without requiring knowledge of $\lambda_2$ or  hyperparameter selection for $\lambda_2$.
 \emph{DMPower} is extended to \emph{DMStream} in the streaming setting, with similar iteration complexity.
\end{theorem}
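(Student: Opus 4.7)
The plan is to decompose the argument into two stages that mirror DMPower's two phases, and then bridge them with a perturbation lemma that quantifies how sensitive momentum-accelerated power iteration is to imperfect knowledge of $\lambda_2$. The target rate $\mathcal{O}\bigl(\frac{1}{\sqrt{\Delta}}\log(1/\epsilon)\bigr)$ is already known to be attained by Power+M when $\beta = \lambda_2^2/4$ exactly \citep{de2018accelerated}, so the essential novelty is showing (i) the warm-up phase delivers an approximation $\tilde{\lambda}_2$ with error small enough to preserve acceleration, and (ii) the overall iteration budget remains $\mathcal{O}\bigl(\frac{1}{\sqrt{\Delta}}\log(1/\epsilon)\bigr)$ after adding the warm-up cost.

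First I would analyze the pre-momentum warm-up. Running vanilla power iteration on $A$ for $T_1 = \mathcal{O}\bigl(\frac{1}{\Delta}\log(1/\delta)\bigr)$ steps yields $\tilde{v}_1$ with $\sin\theta(\tilde{v}_1, v_1) \le \delta$ via the classical convergence bound, and the Rayleigh quotient gives $\tilde{\lambda}_1 = \tilde{v}_1^\top A \tilde{v}_1$ with error $\mathcal{O}(\delta)$. Forming the inexact Hotelling deflation $B = A - \tilde{\lambda}_1 \tilde{v}_1 \tilde{v}_1^\top$, I would invoke Weyl / Davis--Kahan-type bounds: since $\|B - (A - \lambda_1 v_1 v_1^\top)\|_{\op} = \mathcal{O}(\lambda_1 \delta)$, a further $\mathcal{O}\bigl(\frac{1}{\Delta}\log(1/\eta)\bigr)$ power iterations on $B$ followed by a Rayleigh quotient produce $\tilde{\lambda}_2$ with $|\tilde{\lambda}_2 - \lambda_2| \le \eta$. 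Choosing $\delta, \eta$ small (but only logarithmically small in $\epsilon$) keeps the warm-up cost subdominant. In the momentum phase, $\beta = \tilde{\lambda}_2^2/4$ is plugged into Power+M and I would reuse De Sa et al.'s Chebyshev-polynomial analysis: the three-term recurrence is equivalent to iterating a Chebyshev polynomial whose contraction factor is governed by the ratio $\lambda_1/(2\sqrt{\beta})$ on the interval $[-\lambda_2, \lambda_2]$, which yields the optimal $\sqrt{\Delta}$-type acceleration when $\beta = \lambda_2^2/4$.

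The main obstacle is the stability of this Chebyshev-polynomial contraction under perturbation of $\beta$. Concretely, I need to show that if $|\beta - \lambda_2^2/4| \le \eta'$ with $\eta'$ on the order of $\Delta$ (up to constants), then the per-iteration contraction rate degrades only by a constant factor, so the asymptotic iteration complexity is still $\mathcal{O}\bigl(\frac{1}{\sqrt{\Delta}}\log(1/\epsilon)\bigr)$. I would carry this out via a Taylor expansion of the Chebyshev ratio in $\beta$ around the optimum, being careful to respect the one-sided feasibility condition $\beta \le \lambda_2^2/4$ that prevents divergence; our use of a slightly shrunk estimator $\tilde{\lambda}_2$ (or equivalently a safety factor in $\beta$) guarantees we stay on the admissible side with high probability. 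Combining this perturbation lemma with the warm-up guarantee yields the stated total iteration complexity, with the failure probability controlled by the randomness in the initial vector (standard Gaussian-initialization arguments).

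For the streaming extension to DMStream, the deterministic Rayleigh quotients and matrix-vector products are replaced by mini-batch estimates $\widehat{A}$, and I would invoke matrix Bernstein (or a noisy-power-method bound in the style of \citet{mai2019noisy}) to obtain $\|\widehat{A} - A\|_{\op} \le \tau$ with high probability for a batch size polynomial in $1/\Delta$ and $\log(1/\epsilon)$. This noise propagates into both the warm-up (degrading $\delta, \eta$ by an additive $\tau/\Delta$ factor) and the momentum phase (producing an additional $\tau$-dependent term in the iteration-wise recurrence), but neither effect changes the asymptotic $\mathcal{O}\bigl(\frac{1}{\sqrt{\Delta}}\log(1/\epsilon)\bigr)$ iteration complexity, provided batch sizes are chosen so that $\tau \ll \Delta$. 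The perturbation framework from the deterministic case then transfers essentially verbatim.
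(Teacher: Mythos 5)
Your two-phase decomposition matches the paper's high-level structure, and you correctly recognize that the only novelty lies in (i) how to estimate $\lambda_2$ without spectral knowledge and (ii) robustness of the accelerated Chebyshev rate to perturbation of $\beta$. However, there are several concrete gaps.

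\textbf{(1) You have the feasibility constraint backwards.} You write that $\beta \le \lambda_2^2/4$ is the ``admissible side'' that prevents divergence and propose a safety factor that shrinks $\tilde{\lambda}_2$ to stay on that side. In fact the danger is $\beta > \lambda_1^2/4$ (equivalently $2\sqrt{\beta} > \lambda_1$), where there is no convergence guarantee. The interval on which Power+M \emph{accelerates} is $\lambda_2^2/4 \le \beta < \lambda_1^2/4$ (Theorem~\ref{PowerMThm}); if $\beta$ falls just \emph{below} $\lambda_2^2/4$ the method still converges but via a different rate expression (Theorem~\ref{PowerMThm-Restated}). The paper simply uses $\beta = \mu_J^2/4$ with no safety shrinkage, and shows via Proposition~\ref{prop-rhoprecision} that $|\widehat{\lambda}_2 - \lambda_2| \le \Delta_{1,2}$ suffices to keep $\beta$ below $\lambda_1^2/4$. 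Your one-sided-shrinkage argument solves a problem that doesn't exist while missing the constraint that actually matters.

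\textbf{(2) You omit the second eigengap $\lambda_2 - \lambda_3$, which is essential.} Deflation converges to $v_2$ only at rate $(\lambda_3/\lambda_2)^k$; the paper's pre-momentum iteration count $J$ explicitly scales as $\frac{\lambda_2}{\lambda_2-\lambda_3}\log\frac{d\tau}{\rho}$, and the Davis--Kahan / noisy-power-method bounds you would need also require the perturbation to be small relative to $\lambda_2 - \lambda_3$, not $\Delta_{1,2}$. Without assuming $\lambda_2 > \lambda_3$ your warm-up analysis does not close, and your claim that the warm-up cost is subdominant cannot be made. Relatedly, the warm-up costs $\Theta\bigl(\frac{1}{\Delta_{1,2}} + \frac{1}{\Delta_{2,3}}\bigr)$ iterations, which dominates $1/\sqrt{\Delta}$ in the eigengap; it is ``efficient'' only in the sense of being independent of $\epsilon$, not in the sense of being $O(1/\sqrt{\Delta})$.

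\textbf{(3) Your warm-up analyzes a different algorithm than DMPower.} You propose to first run vanilla power iteration to convergence, \emph{then} form a fixed deflation matrix $B$ and power-iterate on $B$. DMPower interleaves both iterations: the deflation matrix $A - \nu_j q_j q_j^\top$ changes every round as $q_j$ and $\nu_j$ update. The resulting per-round perturbation $G_j = \lambda_1 v_1 v_1^\top - \nu_j q_j q_j^\top$ shrinks geometrically (Lemma~\ref{power-perturbation}), and the convergence argument requires the noisy power method machinery of Hardt--Price to handle the time-varying noise. Your sequential variant is cleaner to analyze but is a different algorithm; a correct proof of this theorem must track whatever Algorithm~\ref{alg:dmpm} actually does. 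Your Taylor-expansion approach to the Chebyshev perturbation lemma is a reasonable alternative to the paper's algebraic Proposition~\ref{prop-rhoprecision}, but it would also need to be two-sided given point (1).
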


In section \ref{sec:analysis} we provide the full version the above theorem along with a companion streaming theorem. Although the momentum phases utilize existing methods, neither of our algorithms are true hybrids; DMPower and DMStream are the first of their kind to utilize inexact Hotelling deflation for second eigenvalue recovery with guarantees. We will show that the iteration complexity of the DMPower and DMStream momentum phases respectively match the iteration complexity of Power+M and Mini-Batch Power+M without having to assign $\beta$ at initialization.

Our experiments show that DMPower converges faster than the vanilla power method to various precisions of estimation and for matrices with tighter eigengaps, $\Delta=0.01, \Delta=0.001$, our DMPower matches the convergence speed of Power+M running with optimal $\beta$. DMPower also outperforms than the vanilla power method and closely mimics the performance of optimal Power+M when employed in the unsupervised learning task of spectral clustering.  Additionally, in comparison to another concurrent iteration method, the simultaneous power iteration, our experiments demonstrate that DMPower determines a more accurate approximation of $\lambda_2$. DMStream outputs a more accurate estimation of the dominant eigenvector for a variety of batch sizes when compared against Oja's algorithm and performs nearly identically to Mini-Batch Power+M initialized with optimal $\beta$. 

\paragraph{Summary of Contributions}
(1) Our proposed algorithms DMPower and DMStream achieve close to optimal performance when compared against other momentum-based power methods, which have been initialized with a priori unknowable optimal hyperparameters.
(2) To the best of our knowledge, we are to first to provide a convergence analysis of an inexact deflation receiving approximate dominant eigenvectors supplied by a power iteration, both in the deterministic setting (Lemma \ref{power-perturbation}) and streaming setting (Proposition \ref{gt-bound}).
(3) Our Proposition \ref{practical-bound} provides a guarantee for acceleration if one can provide lower bounds on $|\lambda_1-\lambda_2|$ and $|\lambda_2-\lambda_3|$, which is a far more practical requirement at run-time.
(4) For many estimation precision requirements, DMPower outperforms the state-of-the-art Lanczos algorithm in iteration complexity when factoring in the recommended number $d$ of tri-diagonalization iterations needed for numerical stability of the Lanczos algorithm. Additionally, DMPower runs noticeably faster (in seconds) than the Lanczos algorithm. 

\section{Related Works}\label{sec:related}
\paragraph{Speedy Deterministic Power Methods}
Variations of the vanilla power method intended to improve its iteration complexity of $\mathcal{O}(\frac{1}{\lambda_1-\lambda_2}\log\frac{1}{\epsilon})$ have been suggested. The Lanczos algorithm \citep{golub2012matrix} itself may be thought of as a fast variation of the power method, which has iteration complexity $\mathcal{O}(\frac{1}{\sqrt{\lambda_1-\lambda_2}}\log\frac{1}{\epsilon})$ but only after a tri-diagonalization process which is expensive in high dimensions. Lei et al. \citep{lei2016coordinate} consider a coordinate-wise update with complexity $\mathcal{O}\bigl(\frac{\lambda_1}{\lambda_1-\lambda_2}\log\frac{\tan \theta_0}{\epsilon}\bigr)$. Based on the heavy ball method first studied by Polyak \citep{polyak1964some}, De Sa et al. \citep{de2018accelerated} first proposed the addition of a momentum term to accelerate the basic power method. The term is controlled by a momentum coefficient, $\beta$, which is a hyperparameter selected at initialization. Their method achieves $\mathcal{O}\bigl(\frac{1}{\sqrt{\lambda_1-\lambda_2}}\log\frac{1}{\epsilon} \bigr)$, but only with a precise selection of $\beta$ requiring unrealistic spectral knowledge at run-time. Mai and Johannson \citep{mai2019noisy} created a momentum-based algorithm \texttt{NAPI} for solving the canonical correlation analysis (CCA) problem similar requiring impractical hyperparameter selection. In contrast, our proposed DMPower requires no spectral knowledge and achieves the same speed as existing momentum methods running optimally. 

\paragraph{Block Iterations}
To enjoy the effects of acceleration, DMPower concurrently runs a second power iteration for a finite number of rounds and extracts an optimal momentum coefficient. This approach is reminiscent, though not the same as block power iterations such as the simultaneous power iteration for matrices \citep{trefethen1997numerical}, extended to tensors by Wang and Lu \citep{wang2017tensor}, which intends to recover multiple eigenvectors at once. The simultaneous power method for matrices is well-known to suffer from rounding errors \citep{golub2012matrix}\citep{borm2012numerical}. In contrast, our inexact deflation is experimentally shown to achieve greater accuracy in extracting the second eigenvalue, which is critically important when eigengaps are tight. The improved "practical" simultaneous iteration and QR algorithm \citep{borm2012numerical} compute a QR factorization followed by a reversed RQ computation, which requires $O(d^2)$ more flops than inexact deflation. Furthermore, these block iterations do not employ any acceleration schemes: the convergence rate of an $\epsilon$-close approximation of $v_1$ in a 2-vector simultaneous iteration is $\mathcal{O}\bigl(\frac{1}{\min\{\lambda_1-\lambda_2,\lambda_2-\lambda_3\}} \log\frac{1}{\epsilon} \bigr)$.

\paragraph{Inexact Power Methods} 
In the pre-momentum phase, DMPower relies on an inexact deflation step every round. This step may be regarded as a Hardt-Price noisy power method, which has been studied in differential privacy-preserving PCA \citep{hardt2013beyond} \citep{kapralov2013differentially}, but primarily as a meta-algorithm. The DMPower may be regarded then as one practical application of the noisy power method, with an added momentum phase and thus requiring novel analysis beyond the scope of existing noisy power method literature. Furthermore, most existing results on deflation schemes operate under the assumption that eigenvectors and eigenvalues have been exactly recovered, but in practice, only approximate eigenvectors are used. To the best of our knowledge, we are the first to provide a guarantee on obtaining a second eigenvalue through deflation using inaccurate dominant eigenvectors (Lemma \ref{power-perturbation} and Proposition \ref{gt-bound}). Our algorithm is the first in the class of momentum-based power methods to incorporate an acceleration scheme without the requirement of guessing an approximation of the second eigenvalue at runtime.  

\paragraph{Streaming Methods} DMStream is a streaming companion to DMPower, which instead uses unbiased estimates of the underlying covariance matrix $A$ for all of its updates. The streaming setting assumes that $A$ is either inaccessible or expensive to obtain, and several algorithms have been developed to address this situation \citep{mitliagkas2013memory, shamir2015stochastic, kimstochastic, jain2016streaming}. The momentum phase of DMStream is built upon Mini-Batch Power+M \citep{de2018accelerated}, whose iteration complexity is $\mathcal{O}\Bigl(\frac{1}{\sqrt{\lambda_1-\lambda_2}}\log\big(\frac{1}{\epsilon}\big)\Bigr)$, which is desirable in that it matches the offline state-of-the-art Lanczos complexity, but similar to Power+M, requires a momentum hyperparameter $\beta$ close to $\lambda_2^2/4$. Just like DMPower, DMStream efficiently approximates $\lambda_2$ in its pre-momentum phase and then drops into a momentum phase where it enjoys the aforementioned optimal offline iteration complexity.  

\paragraph{Gradient Descent Methods} From the optimization perspective, leading eigenvector computation is equivalent to minimizing $-\textbf{x}^\top A \textbf{x}$ where $\textbf{x}\in S^2$, i.e., the unit sphere. Although this problem is geodesically non-convex, it is possible to use gradient descent to solve this problem \protect{\citep{absil2009optimization,wen2013feasible,pitaval2015convergence}}.In particular, the global convergence rate has been shown to be $\mathcal{O}\Bigl(\bigl(\frac{\lambda_1}{\lambda_1-\lambda_2}\bigr)^2\log\frac{1}{\epsilon}\Bigr)$, which is generally incomparable to the guarantee of DMPower \protect{\citep{xu2018convergence}}. Conjugate gradient (CG) methods have also been employed to compute the dominant eigenvector. The Fletcher-Reeves Gradient Descent (FRGD) \protect{\citep{wang2020stochastic}}, a fully-adaptive momentum-based CG method, achieves a convergence rate of $\mathcal{O}\Bigl(\frac{\sqrt{\kappa}-1}{\sqrt{\kappa}+1}\Bigr)$, where $\kappa=\lambda_1/\lambda_n$ and thus this rate is also not directly comparable to the convergence rate of DMPower and other classical power methods, since they largely depend on the first eigengap.

\section{Accelerated Momentum-Based Power Methods via Inexact Deflation}\label{sec:algo}

\paragraph{Problem Setup} 
We first outline the setting and assumptions typical of deterministic momentum-based PCA. We discuss the streaming setting in section \ref{subsec:algo-streaming}. Let $x_1,x_2,\dots, x_n\in \mathbb{R}^d$ be data points. Our goal is to recover the top eigenvector and dominant eigenvalue of the symmetric PSD covariance matrix $A=\frac{1}{n}\sum_{i=1}^n x_ix_i^{\top}\in\mathbb{R}^{d\times d}$. We assume that $A$ has eigenvalues $1\geq \lambda_1 > \lambda_2 > \lambda_3 \geq \lambda_4 \geq \cdots \geq \lambda_d \geq 0$ with associated orthonormal eigenvectors $v_1, v_2, \dots, v_d$. Unless noted otherwise, $\norm{\cdot}$ refers to the 2-norm for vectors and matrices. We let $\Delta_{1,2}:=\lambda_1-\lambda_2$ and $\Delta_{2,3}:=\lambda_2-\lambda_3$. The vanilla power method, at each round $k=1,2,\dots$ performs the following update,
\begin{align}
    q_k &= \frac{A^kq_{k-1}}{\norm{A^kq_{k-1}}} \label{vanilla_vector_update}\\
    \nu_k &= q_k^{\top}Aq_k \label{vanilla_rayleigh}
\end{align}
where $q_0$ is a random unit vector non-orthogonal to $v_1$. Under these conditions, $q_k\rightarrow v_1$ and $\nu_k\rightarrow \lambda_1$ at a geometric convergence rate, with ratio $\bigl(\frac{\lambda_2}{\lambda_1}\bigr)^2$. Here, error is measured as the sine squared of the angle $\theta(q_k,v_1)$ between our unit $q_k$ and $v_1$, that is,  $\sin^2\theta(q_k, v_1) \triangleq 1 - (q_k^{\top}v_1)^2$.
The power method with momentum uses the alternative update,
\begin{equation}
    \label{powerm_update}
    q_k=\frac{Aq_{k-1}-\beta q_{k-2}}{\norm{Aq_{k-1}-\beta q_{k-2}}}
\end{equation}
where $q_{-1} = \vec{0}$ and $\beta$ is the \textit{momentum coefficient} chosen by the user at initialization.
Sa et al. \citep{de2018accelerated} establish the following theorem and its corollary,
\begin{theorem} [Convergence of Power+M~\citep{de2018accelerated}]
\label{PowerMThm}
Given a PSD matrix $A\in\mathbb{R}^{d\times d}$ with eigenvalues $1\geq \lambda_1>\lambda_2\geq \lambda_3\dots \lambda_d\geq 0$, running with $\lambda_2 < 2\sqrt{\beta} \leq \lambda_1$ results in $q_k$ with
\begin{equation}
    \sin^2\theta(q_k, v_1)=1-(q_k^{\top}v_1)^2\leq \frac{4}{|q_0^{\top}v_1|^2} \biggl( \frac{2\sqrt{\beta}}{\lambda_1+\sqrt{\lambda_1^2-4\beta}}\biggr)^{2k}
\end{equation}
\end{theorem}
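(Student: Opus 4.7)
The plan is to recognize the update \eqref{powerm_update} as a Chebyshev recurrence of the second kind and then diagonalize. Because the per-step normalization only changes the length, it suffices to track the unnormalized iterates $p_k$ defined by $p_k = Ap_{k-1} - \beta p_{k-2}$ with $p_0 = q_0$ and $p_{-1} = \vec{0}$; then $q_k \propto p_k$ and $\sin^2\theta(q_k, v_1) = 1 - (p_k^\top v_1)^2/\norm{p_k}^2$. Rescaling $\widetilde{p}_k := p_k/(\sqrt{\beta})^k$ yields $\widetilde{p}_k = 2B\widetilde{p}_{k-1} - \widetilde{p}_{k-2}$ with $B := A/(2\sqrt{\beta})$, which is exactly the three-term recurrence $U_k(x) = 2xU_{k-1}(x) - U_{k-2}(x)$; hence $p_k = (\sqrt{\beta})^k\, U_k\!\bigl(A/(2\sqrt{\beta})\bigr)\, q_0$.

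Writing $q_0 = \sum_{i=1}^{d} c_i v_i$ in the eigenbasis, each coefficient evolves independently as $c_i^{(k)} := v_i^\top p_k = c_i (\sqrt{\beta})^k U_k(\lambda_i/(2\sqrt{\beta}))$. The hypothesis $\lambda_2 < 2\sqrt{\beta} \leq \lambda_1$ now cleanly splits the spectrum into a ``hyperbolic'' and a ``trigonometric'' regime. For $i=1$, since $\lambda_1/(2\sqrt{\beta}) \geq 1$, set $\cosh\psi_1 := \lambda_1/(2\sqrt{\beta})$; then $U_k(\cosh\psi_1) = \sinh((k+1)\psi_1)/\sinh\psi_1$, and using $e^{\psi_1} = (\lambda_1 + \sqrt{\lambda_1^2 - 4\beta})/(2\sqrt{\beta})$ together with $\sinh((k+1)\psi_1) \geq \cosh(k\psi_1)\sinh\psi_1 \geq (e^{k\psi_1}/2)\sinh\psi_1$ gives the lower bound $|c_1^{(k)}| \geq \frac{1}{2}|c_1|\bigl((\lambda_1 + \sqrt{\lambda_1^2 - 4\beta})/2\bigr)^k$. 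For $i \geq 2$, since $\lambda_i/(2\sqrt{\beta}) < 1$, set $\cos\phi_i := \lambda_i/(2\sqrt{\beta})$; then $U_k(\cos\phi_i) = \sin((k+1)\phi_i)/\sin\phi_i$ yields the upper bound $|c_i^{(k)}| \leq |c_i|(\sqrt{\beta})^k/|\sin\phi_i|$, with $|\sin\phi_i| \geq |\sin\phi_2| > 0$ uniform in $i \geq 2$.

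Inserting these bounds into $\sin^2\theta(q_k, v_1) \leq \sum_{i\geq 2}(c_i^{(k)})^2/(c_1^{(k)})^2$, the geometric factors collapse to $\bigl((\sqrt{\beta})^k / ((\lambda_1 + \sqrt{\lambda_1^2 - 4\beta})/2)^k\bigr)^2 = \bigl(2\sqrt{\beta}/(\lambda_1 + \sqrt{\lambda_1^2 - 4\beta})\bigr)^{2k}$, matching the claimed rate. The main obstacle is the bookkeeping of the residual constants: the factor $\frac{1}{2}$ in the lower bound on $|c_1^{(k)}|$ squares to a $4$, and one must verify that the $1/|\sin\phi_i|$ factors together with the norm bound $\sum_{i \geq 2}c_i^2 \leq \norm{q_0}^2 = 1$ collapse to the clean prefactor $4/|q_0^\top v_1|^2$. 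This consolidation is expected to follow from the product identity $r_1^+ r_1^- = \beta$, where $r_1^\pm := (\lambda_1 \pm \sqrt{\lambda_1^2 - 4\beta})/2$ are the roots of the characteristic polynomial of the scalar recurrence along $v_1$, which expresses $\sinh\psi_1$ and $\sin\phi_i$ in compatible algebraic form and cancels the awkward square-root factors against the Chebyshev denominators.
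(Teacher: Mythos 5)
The paper does not prove Theorem~\ref{PowerMThm} itself --- it is cited from De Sa et al.\ (2018) --- so there is no internal proof to compare against; your argument must stand on its own. The Chebyshev change of variables is the right machinery (it is equivalent to De Sa et al.'s $2\times 2$ companion-matrix analysis), and your computations in the hyperbolic regime ($i=1$) and trigonometric regime ($i\ge 2$) --- the identity for $e^{\psi_1}$, the chain $\sinh((k+1)\psi_1)\ge\cosh(k\psi_1)\sinh\psi_1\ge\tfrac{1}{2}e^{k\psi_1}\sinh\psi_1$, and the bound $|U_k(\cos\phi_i)|\le 1/|\sin\phi_i|$ --- are all correct as far as they go. But there are two genuine gaps.

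First, the opening claim that ``the per-step normalization only changes the length'' and hence $q_k\propto p_k$ is not automatic from the update $q_k=(Aq_{k-1}-\beta q_{k-2})/\lVert Aq_{k-1}-\beta q_{k-2}\rVert$: if only $q_k$ is rescaled while the previous iterate is left at unit norm, the effective momentum coefficient drifts (already $q_2\propto A^2q_0-\beta\lVert Aq_0\rVert q_0$, not $A^2q_0-\beta q_0$). The polynomial picture $q_k\propto p_k(A)q_0$ requires rescaling $q_k$ and $q_{k-1}$ by the \emph{same} factor at each step, which is what De Sa et al.'s Algorithm~1 actually does; this assumption should be stated rather than waved away.

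Second, and more seriously, the final ``bookkeeping of residual constants'' does not close as you suggest. The product identity $r_1^+r_1^-=\beta$ pertains only to the $i=1$ block, and the $\sinh\psi_1$ it governs has already cancelled \emph{internally} in your lower bound on $U_k(\cosh\psi_1)$: the $\sinh\psi_1$ in the numerator of $\cosh(k\psi_1)\sinh\psi_1$ and the $\sinh\psi_1$ in the denominator $U_k(\cosh\psi_1)=\sinh((k+1)\psi_1)/\sinh\psi_1$ annihilate each other, leaving nothing for the $1/\sin\phi_i$ factors from the $i\ge 2$ blocks to cancel against. Carried out literally, your estimates give a prefactor of $\tfrac{4}{|q_0^\top v_1|^2}\sum_{i\ge 2}c_i^2/\sin^2\phi_i$ with $\sin^2\phi_i=1-\lambda_i^2/(4\beta)$, which diverges as $\lambda_2\uparrow 2\sqrt{\beta}$ --- precisely the regime of interest, since the optimal choice is $\beta=\lambda_2^2/4$. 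Swapping in the alternative bound $|U_k(\cos\phi_i)|\le k+1$ trades this for an extra $(k+1)^2$. Neither collapses to the bare $4/|q_0^\top v_1|^2$. To reach the stated constant one needs a sharper estimate on the \emph{ratio} $U_k(\cos\phi_i)/U_k(\cosh\psi_1)$ that does not bound numerator and denominator separately; as written, the argument does not establish the theorem.
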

\begin{corollary} [~\citep{de2018accelerated}]
\label{PowerMIterations}
For $\epsilon\in(0,1)$ after $T=\mathcal{O}\bigl(\frac{\sqrt{\beta}}{\sqrt{\lambda_1^2-4\beta}}\log\frac{1}{\epsilon}\bigr)$ iterations, \\$\sin^2\theta(q_T, v_1)\leq \epsilon.$  
\end{corollary}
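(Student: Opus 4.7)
The plan is to directly invert the geometric decay bound supplied by Theorem~\ref{PowerMThm}. Writing $r := \frac{2\sqrt{\beta}}{\lambda_1 + \sqrt{\lambda_1^2 - 4\beta}}$, Theorem~\ref{PowerMThm} says $\sin^2\theta(q_k,v_1) \leq \frac{4}{|q_0^\top v_1|^2}\, r^{2k}$. To force the right-hand side below $\epsilon$ it suffices to pick any $k$ satisfying
\begin{equation*}
2k \;\geq\; \frac{\log\!\bigl(4/(\epsilon\,|q_0^\top v_1|^2)\bigr)}{\log(1/r)} .
\end{equation*}
The treatment of the numerator is routine: for a random unit initialization $q_0$, $|q_0^\top v_1|^2$ is lower bounded by a problem-independent constant (up to logarithmic factors absorbed by the $\mathcal{O}$), so the numerator is $\Theta(\log(1/\epsilon))$. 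All the work is therefore in showing
\begin{equation*}
\frac{1}{\log(1/r)} \;=\; \mathcal{O}\!\left(\frac{\sqrt{\beta}}{\sqrt{\lambda_1^2-4\beta}}\right).
\end{equation*}

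To handle $\log(1/r)$, I would first rationalize:
\begin{equation*}
\frac{1}{r} \;=\; \frac{\lambda_1 + \sqrt{\lambda_1^2-4\beta}}{2\sqrt{\beta}} \;=\; \frac{\lambda_1}{2\sqrt{\beta}} + \frac{\sqrt{\lambda_1^2-4\beta}}{2\sqrt{\beta}} \;\geq\; 1 + \frac{\sqrt{\lambda_1^2-4\beta}}{2\sqrt{\beta}},
\end{equation*}
using the hypothesis $2\sqrt{\beta} \leq \lambda_1$. Setting $x := \sqrt{\lambda_1^2-4\beta}/(2\sqrt{\beta})$, we get $\log(1/r) \geq \log(1+x)$. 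The interesting regime of the corollary is $\beta$ close to $\lambda_1^2/4$, so $x$ is small; applying $\log(1+x) \geq x/(1+x) \geq x/2$ for $x\in[0,1]$ yields $\log(1/r) \geq \sqrt{\lambda_1^2-4\beta}/(4\sqrt{\beta})$, which inverts to the desired $\mathcal{O}(\sqrt{\beta}/\sqrt{\lambda_1^2-4\beta})$ bound. For $x \geq 1$ the ratio $\sqrt{\beta}/\sqrt{\lambda_1^2-4\beta}$ is bounded below by a constant, while $\log(1/r) \geq \log 2$, so the same asymptotic estimate still holds (with a slightly worse constant). Combining the two cases covers every feasible $\beta$.

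Putting the pieces together, any
\begin{equation*}
T \;=\; \Theta\!\left(\frac{\sqrt{\beta}}{\sqrt{\lambda_1^2-4\beta}}\,\log\frac{1}{\epsilon}\right)
\end{equation*}
satisfies the inversion inequality, so $\sin^2\theta(q_T,v_1)\leq \epsilon$ as claimed. The main obstacle is really a bookkeeping one, namely making the case split on $x$ clean enough to absorb both regimes into a single $\mathcal{O}(\cdot)$ and absorbing the initialization factor $|q_0^\top v_1|^{-2}$ into the constants without overstating the generality. No new ideas beyond Theorem~\ref{PowerMThm} are needed.
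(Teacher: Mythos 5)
This corollary is cited directly from De~Sa~et~al.~(2018); the paper does not re-derive it, so there is no in-paper proof to compare against. Your general strategy --- invert the geometric decay of Theorem~\ref{PowerMThm} by lower-bounding $\log(1/r)$ --- is the natural and correct one, and your treatment of the regime $x := \sqrt{\lambda_1^2-4\beta}/(2\sqrt{\beta}) \in [0,1]$ is sound.

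The gap is in the case $x \geq 1$. You assert that there ``the ratio $\sqrt{\beta}/\sqrt{\lambda_1^2-4\beta}$ is bounded below by a constant,'' but this is the reverse of the truth: $\sqrt{\beta}/\sqrt{\lambda_1^2-4\beta} = 1/(2x)$, which for $x \geq 1$ is at most $1/2$ and tends to $0$ as $x \to \infty$ (equivalently, as $\beta \to 0$). Meanwhile $\log(1/r)$, while $\geq \log 2$, grows only like $\log x$ in this regime since $1/r = \lambda_1/(2\sqrt\beta) + x \sim 2x$. Consequently the ratio
\begin{equation*}
\frac{1/\log(1/r)}{\sqrt{\beta}/\sqrt{\lambda_1^2-4\beta}} \;\sim\; \frac{2x}{\log(2x)}
\end{equation*}
is unbounded in $x$, so the claimed big-$\mathcal{O}$ with a universal constant simply fails for small $\beta$. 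The hypothesis $\lambda_2 < 2\sqrt{\beta}$ from Theorem~\ref{PowerMThm} does force $\beta > \lambda_2^2/4$, but that only caps $x$ by $\sqrt{\lambda_1^2-\lambda_2^2}/\lambda_2$, which is still unbounded as $\lambda_2 \to 0$. To close the gap you would need to either (i) restrict to $\beta = \Theta(\lambda_1^2)$ (equivalently $x = \mathcal{O}(1)$, which is the regime the corollary is actually used in, since the momentum coefficient is set near $\lambda_2^2/4$ with $\lambda_2$ close to $\lambda_1$), in which case the $x \in [0,1]$ argument already suffices after adjusting constants, or (ii) state the corollary with an iteration bound of the form $\mathcal{O}\bigl(\max\{1,\sqrt{\beta}/\sqrt{\lambda_1^2-4\beta}\}\log(1/\epsilon)\bigr)$, which covers both regimes honestly. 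As written, the case split does not absorb the $x \geq 1$ regime as claimed.
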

We have from these results that $\beta\in[\lambda_2^2/4,\lambda_1^2/4)$, and minimizing $\frac{\sqrt{\beta}}{\sqrt{\lambda_1-4\beta}}$ as in Corollary \ref{PowerMIterations} over this interval results in an optimal assignment $\beta=\lambda_2^2/4$.

\paragraph{Practical Considerations}
The power method with momentum is capable of achieving a faster convergence rate than the vanilla power iteration. For $\beta=\lambda_2^2/4$, we have, as in Theorem $\ref{PowerMThm}$, a geometric convergence with ratio $\Bigl(\frac{\lambda_2}{\lambda_1+\sqrt{\lambda_1^2-\lambda_2^2}}\Bigr)^2$, which is smaller and thus faster than the vanilla convergence ratio $\bigl(\frac{\lambda_2}{\lambda_1}\bigr)^2$. However, therein lies the impracticality of the algorithm: the user will generally not have knowledge of $\lambda_2$.
Guessing the momentum coefficient $2\sqrt{\beta}>\lambda_1$ can result in extremely slow convergence and in some cases, divergence, as shown in Figure~\ref{fig:divergent}. 
In fact, there is currently no known convergence guarantee for $2\sqrt{\beta}>\lambda_1$; we notice that in the setting of Theorem \ref{PowerMThm} and its associated Corollary, that such selection of $\beta$ results in an imaginary ratio, rendering the guarantee uninformative. For practical use, we should remove $\beta$ as a hyperparameter. 

Ultimately then, we must approximate $\lambda_2$. This naturally leads us to consider a concurrent/block iteration scheme that synchronously converges towards $v_1$ and $v_2$. From our approximations of $v_2$, we may obtain Rayleigh quotient estimations of $\lambda_2$. Accuracy is critical when the eigengap $\Delta_{1,2}$ is small, and since the simultaneous power iteration experiences rounding errors as shown in Figure \ref{fig:simultaneous}, it is not appropriate for use. The more accurate \textit{practical simultaneous power iteration} \citep{borm2012numerical} employs a thin QR factorization at each step, which is expensive. Our instinct, then, is to adopt a concurrent iteration which utilizes \textit{matrix deflation} to approximate $\lambda_2$. 

\captionsetup[figure]{labelfont=,}

\begin{figure}[!htbp]
\begin{minipage}[c]{0.5\textwidth}
\centering
	\resizebox{1.0\textwidth}{!}{
	\includegraphics[]{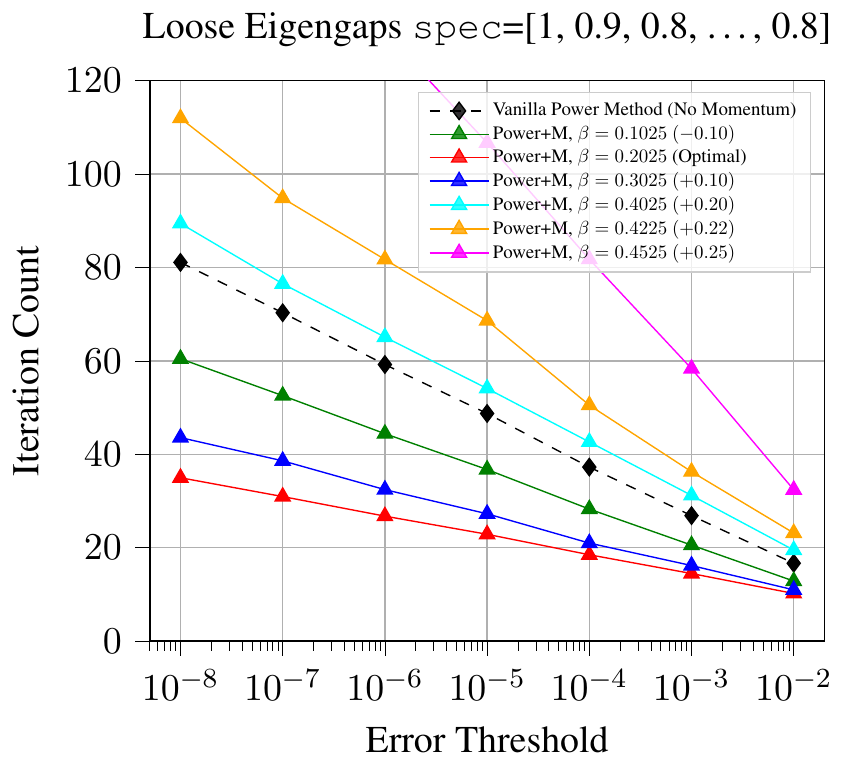}
	}
\end{minipage}\hfill
\begin{minipage}[c]{0.5\textwidth}
\caption{\textbf{Sub-optimal $\beta$ Selection for Power+M}. We measure convergence speeds at various momentum assignments. The X-axis is the error threshold $\epsilon$ between approximates $q_k,q_{k-1}$ of $v_1$ needed for Power+M to terminate, i.e., we run until $||q_k-q_{k-1}||<\epsilon$. The Y-axis is the total number of iterations $k$ needed to meet this condition according to the update equation $(\ref{powerm_update})$. Averaged over 1000 runs, each time run on a random PSD $A\in\mathbb{R}^{10\times 10}$ with spectrum $\lambda_1=1, \lambda_2=0.9, \lambda_3=0.8$, and remaining eigenvalues set to $0.8$. We observe that increased deviation from the optimal $\beta$ assignment results in worsened and eventually divergent performance.}
\label{fig:divergent}
\end{minipage}
\end{figure}

\paragraph{Approach for Smart Selection of $\beta$}
Deflation methods extract further eigenvalues along the spectrum (ordered in absolute modulus), once previous eigenvalues and eigenvectors are determined. Hotelling deflation \citep{golub2012matrix} is one such scheme upon which we model our algorithms. Assume for our symmetric PSD $A$ that it also has a positive second eigengap, i.e., $\lambda_2>\lambda_3$. If we form the deflation matrix $B = A-\lambda_1v_1v_1^{\top}$, then for $w_0$ non-orthogonal to $v_2$, we have that $w_k=\frac{Bw_{k-1}}{||Bw_{k-1}||}\rightarrow v_2$ and $q_k^{\top}Bq_k\rightarrow \lambda_2$ as $k\rightarrow\infty$. Clearly, we do not have access to $\lambda_1$ and $v_1$ (their approximation is the entire purpose of PCA), but at each round of a vanilla power iteration, we \textit{do} have approximations $\nu_k$ and $q_k$ as in update equations (\ref{vanilla_vector_update}) and (\ref{vanilla_rayleigh}), so we may instead form an inexact deflation matrix $A-\nu_kq_kq_k^{\top}$. 
\captionsetup[figure]{labelfont=,}

\begin{figure}[!thbp]
\begin{minipage}[c]{0.5\textwidth}
\centering
	\resizebox{1.0\textwidth}{!}{
	\includegraphics[]{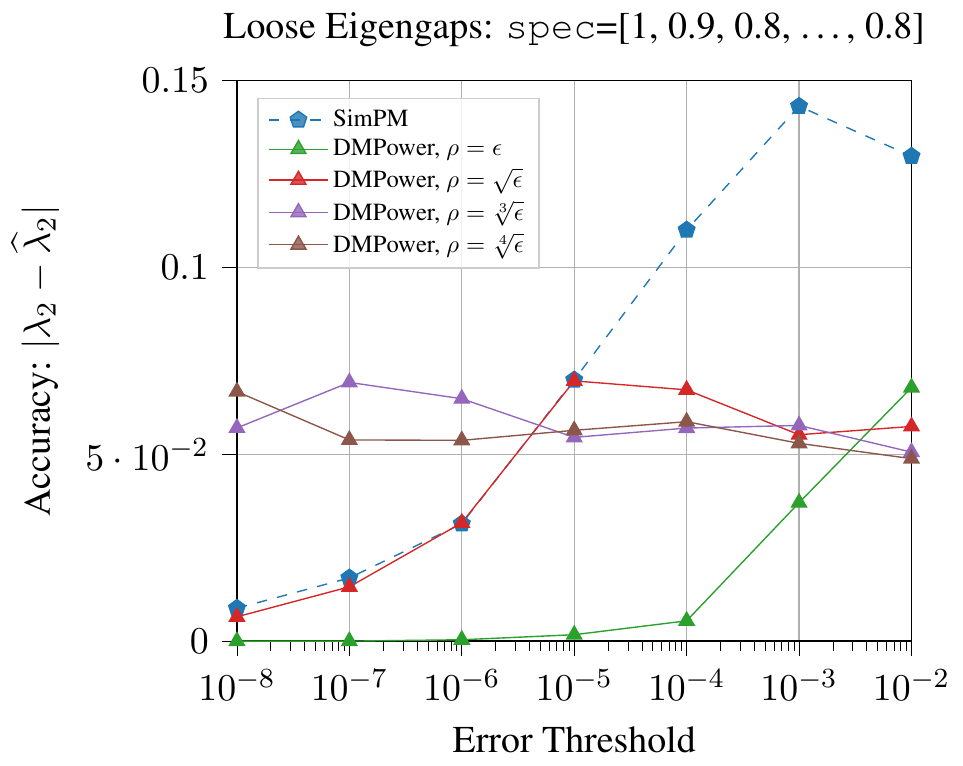}
	}
\end{minipage}\hfill
\begin{minipage}[c]{0.5\textwidth}
\caption{\textbf{Accuracy vs Simultaneous Power}. We measure the Rayleigh quotient accuracy in determining $\lambda_2$ between the Simultaneous Power Iteration (SimPM) and DMPower. The X-axis is the error threshold $\epsilon$ of the entire DMPower run.  We report the true accuracy of $\widehat{\lambda}_2$ output by the pre-momentum phase, which is reflected in the Y-axis. Averaged over 1000 runs, each time run on a random PSD $A\in\mathbb{R}^{10\times 10}$. We varied $\rho$ according to $\epsilon$ for convenience, but $\rho$ does not depend on $\epsilon$ in implementation. We observe that DMPower is overall the most accurate, while SimPM is inaccurate at low thresholds.}
\label{fig:simultaneous}
\end{minipage}
\end{figure}
One might wonder the implications of setting $\beta$ as $q_{J}^\top A q_{J}$. Setting $\beta = q_{J}^\top A q_{J}$ will result in a momentum parameter converging towards $\lambda_1$, which would not lie in the convergence zone $[\lambda_2^2/4, \lambda_1^2/4)$ of Theorem $\ref{PowerMThm}$. The next logical idea then would be to set $\beta = (q_{J}^{\top} A q_{J})^2/4$, which would converge towards $\lambda_1^2/4$, but employing this coefficient very nearly follows the same dynamics as the vanilla power method (see Theorem $\ref{PowerMThm}$) and hence, no improvement. 

Our first proposed algorithm the \textit{delayed momentum power method} (DMPower) is a realization of the above discussion and considerations, using inexact deflation to progressively approximate $\lambda_2$. Experimentally, DMPower outperforms the vanilla power method for all specified error thresholds, running near-optimally at tighter eigengaps $\Delta_{1,2}=0.01$ and $\Delta_{1,2}=0.001$. DMPower experiences decayed noise at each step, making it possible to establish a convergence guarantee, which we present in Theorem \ref{thm:main-DMPM}.

\subsection{Delayed Momentum Power Method (DMPower)}\label{subsec:algo-anchor}

\paragraph{Intuitions of DMPower} 
The delayed momentum power method, our primary contribution, experiences momentum-based acceleration after an initial waiting period with no required selection of $\beta$ at initialization. As stated before, we use inexact Hotelling deflation to approximate ${\lambda_2^2}/4$.  It is known that inexact deflation can succeed with controlled noise \citep{kapralov2013differentially,hardt2013beyond}, and we prove in Appendix \ref{app:noise-bounds} that the inexact deflation step in DMPower satisfies such conditions. We will eventually obtain an estimate $\beta\in[\lambda_2^2/4,\lambda_1^2/4)$. Using this well-behaved $\beta$, we can transition to a momentum-based update exhibiting acceleration. 

\paragraph{Overview}

\begin{algorithm}[!ht]
   \caption{Delayed Momentum Power Method (DMPower) 
   }
   \label{alg:dmpm}
\begin{algorithmic}[1]
\Require $A\in\mathbb{R}^{d\times d}$ symmetric, unit $q_0\in\mathbb{R}^d$, pre-momentum phase iterations $J$, momentum phase iterations $K$, unit $w_0\in\mathbb{R}^d$
\For{$j=1,2,\dots, J$}
\State $q_j \leftarrow Aq_{j-1}$
\State $q_j \leftarrow q_j/\norm{q_j}$
\State $\nu_j \leftarrow q_j^{\top}Aq_j$ \Comment{Rayleigh Quotient estimate of $\lambda_1$}
\State $P\leftarrow \nu_jq_jq_j^{\top}$
\State $w_j\leftarrow(A-P)w_{j-1}$ \Comment{Inexact deflation}
\State $w_j \leftarrow w_j/\norm{w_j}$
\State $\mu_j \leftarrow w_j^{\top}Aw_j$ \Comment{Rayleigh Quotient estimate of $\lambda_2$}
\EndFor
\State $\widehat{\lambda}_2=\mu_J$
\State $\beta \leftarrow \widehat{\lambda}_2^2/4$ \Comment{Approximated optimal momentum coefficient}
\State $q_1 \leftarrow q_J$  \Comment{Current estimate of $v_1$}
\State $q_0 \leftarrow \vec{0}$
\For{$k=1,2,\dots, K$} \Comment{while $\left\lVert q_{k}-q_{k-1}\right\rVert > \epsilon$}
\State $q_{k+1} \leftarrow Aq_{k}-\beta q_{k-1}$ \Comment{Momentum update}
\State $q_{k+1} \leftarrow q_{k+1}/\norm{q_{k+1}}$
\State $\nu_k \leftarrow q_{k+1}^{\top}Aq_{k+1}$
\EndFor
\Return{$q_K,\nu_K$} 
\end{algorithmic}
\end{algorithm}

Algorithm~\ref{alg:dmpm} describes DMPower. It proceeds as a routine vanilla power method, but at each step, our approximate top eigenvector and eigenvalue $q_k$ and $\nu_k$ are used to form an inexact deflation matrix $(A-P)=A-\nu_kq_kq_k^{\top}$, which is run in a power-iterative manner on an initial vector $w_0$, which is non-orthogonal to $v_2$. We refer to this portion of DMPower as the \textit{pre-momentum phase}. For a practical implementations, the first for-loop would exit once $|\mu_{k+1}-\mu_{k}|\leq \rho$, for some user-specified $\rho$, i.e., once our $\lambda_2$ approximates are close to each other. We discuss at length the practical and theoretical considerations of selecting $\rho$ in section \mbox{\ref{precision-section}}. 

After achieving $\rho$-accuracy between our approximates $\mu_k$, we set $\beta\leftarrow {\mu_k^2}/4$, and $q_0\leftarrow q_j$ as our new initial vector. We then proceed to the \textit{momentum phase}, which runs Power+M updates until $\epsilon$-accuracy is achieved among the $q_k$. Notice that $q_j$ has already made progress towards $v_1$. This greatly benefits our Power+M updates according to Theorem $\ref{PowerMThm}$, where $\sin^2\theta(q_t,v_1)$ is limited at each step by the constant $\frac{4}{|q_j^{\top}v_1|}$. We provide a convergence guarantee in Theorem~\ref{thm:main-DMPM} for DMPower.

\paragraph{Complexity}
The vanilla power iteration costs $\mathcal{O}(d^2)$ flops per round, and since we are concurrently running two vanilla power methods in addition to a Rayleigh quotient, we perform two additional $\mathcal{O}(d^2)$ flops. Although we are not asymptotically increasing the time complexity, we justify the increased flops: we prove in Theorem \ref{thm:main-DMPM} that for a fixed $\rho$, we will achieve our desired approximation of $\lambda_2$ after a finite number of rounds. Furthermore, we are only interested in the Rayleigh quotient approximations $\mu_j$ of ${\lambda_2}$ from each deflation step, and it is well-known that if $\norm{v_2-w_k}=\rho$, then $|\lambda_2-\mu_k|=\mathcal{O}(\rho^2)$ \citep{trefethen1997numerical}. That is, the Rayleigh quotient is a quadratically-accurate estimate of $v_2$.

\subsection{Delayed Momentum Streaming Power Method (DMStream)}\label{subsec:algo-streaming}
\paragraph{Overview}
We first review a typical setting for streaming PCA. Assume we have a stream of $\mathbb{R}^d$ inputs $x_1, x_2, \dots$ drawn from some unknown distribution $\mathcal{D}$ with underlying covariance matrix $A$. We wish to recover the dominant eigenvector $v_1$ of $A$.

Streaming algorithms have risen to address this challenge, which instead use a different unbiased estimate $\widehat{A}_t=\frac{1}{n}\sum_{i=1}^n x_ix_i^\top$ of $A$ at each round $t$ to conduct their updates, where $n$ is a fixed batch size, and the $x_i$'s are selected in a uniformly random manner \citep{shamir2015stochastic,jain2016streaming,mitliagkas2013memory,de2018accelerated}. In general, the total iteration complexity and runtime is dependent on several factors, including the variance $\mathbb{E}[(\widehat{A}_t-A)\otimes(\widehat{A}_t-A)]$ of each unbiased estimate (where $\otimes$ denotes the Kronecker product) and the batch size. The sample complexity is the total number of streaming inputs need overall to output an $\epsilon$-close estimate of $v_1$ for $\epsilon<1$.

Our second algorithm which we call the \textit{delayed momentum streaming power method} (DMStream) shown in Algorithm \ref{alg:dmspm} is a streaming companion to DMPower. Theorem \ref{momentum-stream-desa} due to De Sa et. al \cite{de2018accelerated} provides a convergence guarantee on a momentum-based streaming algorithm referred to as Mini-Batch Power+M. In particular, significant acceleration is experienced if $2\sqrt{\beta}\in [\lambda_2, \lambda_1)$, but as in the case of Power+M, selection of such a $\beta$ at run-time is an impractical ask. The full convergence guarantee of Mini-Batch Power+M is provided in Theorem {\ref{momentum-stream-desa}.} Similar to DMPower, DMStream approximates a converging momentum coefficient in a pre-momentum phase and then uses that coefficient to accelerate convergence in a secondary momentum phase.

\begin{algorithm}[!ht]
   \caption{Streaming Delayed Momentum Power Method (DMStream) 
   }
\label{alg:dmspm}
\begin{algorithmic}[1]
\Require Streaming samples $x_1, x_2, \dots, x_l \in\mathbb{R}^{d}$, batch size $n$, unit $q_0\in\mathbb{R}^d$, pre-momentum phase iterations $J$, momentum phase iterations $K$, unit $w_0\in\mathbb{R}^d$
\For{$j=1,2,\dots, J$}
\State Generate unbiased estimate $\widehat{A}_j=\frac{1}{n}\sum_{i=(j-1)n+1}^{jn} x_{i}x_{i}^\top $
\State $q_j \leftarrow \widehat{A}_jq_{j-1}$
\State $q_j \leftarrow q_j/\norm{q_j}$
\State $\nu_j \leftarrow q_j^{\top}\widehat{A}_jq_j$ \Comment{Rayleigh Quotient estimate of $\lambda_1$}
\State $P\leftarrow \nu_jq_jq_j^{\top}$
\State $w_j\leftarrow(\widehat{A}_j-P)w_{j-1}$ \Comment{Inexact deflation}
\State $w_j \leftarrow w_j/\norm{w_j}$
\State $\mu_j \leftarrow w_j^{\top}\widehat{A}_jw_j$ \Comment{Rayleigh Quotient estimate of $\lambda_2$}
\EndFor
\State $\hat{\lambda}_2 \leftarrow \mu_J$
\State $\beta \leftarrow \mu_J^2/4$ \Comment{Approximated optimal momentum coefficient}
\State $q_1 \leftarrow q_J$  \Comment{Current estimate of $v_1$}
\State $q_0 \leftarrow \vec{0}$
\For{$k=1,2,\dots, K$} \Comment{while $\left\lVert q_{k}-q_{k-1}\right\rVert > \epsilon$}
\State Generate unbiased estimate $\widehat{A}_k=\frac{1}{n}\sum_{i=(k-1)n+1}^{kn} x_{i}x_{i}^\top $
\State $q_{k+1} \leftarrow \widehat{A}_kq_{k}-\beta q_{k-1}$ \Comment{Momentum update}
\State $q_{k+1} \leftarrow q_{k+1}/\norm{q_{k+1}}$
\State $\nu_k \leftarrow q_{k+1}^{\top}\widehat{A}_kq_{k+1}$
\EndFor
\Return{$q_K,\nu_K$} 
\end{algorithmic}
\end{algorithm}

DMStream superficially resembles DMPower, instead using unbiased estimates for its updates. However, due to the noise introduced by estimation error of $A$ by $\widehat{A}_t$ in conjunction with the noise introduced by our imperfect estimations of $v_1$ by $q_t$, the inexact deflation step is more challenging to analyze and results in a distinct guarantee, which we provide in Theorem \ref{thm:main-DMSPM}. 

\paragraph{Complexity} 
 Each matrix-vector multiplication cost $\mathcal{O}\bigl({d^2}\big)$ with three such multiplications in every round of the pre-momentum phase (power iteration, Hotelling iteration, and an inexact Rayleigh quotient). Akin to DMPower, we justify these increased FLOPS by noting that the pre-momentum phase will terminate in a finite number of rounds, which is shown in Theorem \ref{thm:main-DMSPM}. We empirically observe that even a rough selection of $\beta$ provides us with noticeable acceleration, resulting in lower iteration complexity overall, and thus a decreased total runtime when compared to conventional streaming PCA as in Algorithm \ref{alg:streamPCA}.

\section{Convergence Analysis}\label{sec:analysis}

We now provide our two major convergence theorems. We adopt the same notations as in Algorithm \ref{alg:dmpm} and Algorithm \ref{alg:dmspm}. Both algorithms are divided into $J$ steps of a pre-momentum phase and $K$ steps of a momentum phase.
As a reminder, $\mu_j$ is the Rayleigh quotient approximation of $\lambda_2$ at step $j$ and $\beta := {\widehat{\lambda}}^2/4 = \mu_J^2/4$. For notational convenience, we let $\theta_0:=\arccos{|q_0^\top v_1|}$. 

\subsection{Delayed Momentum Power Method (DMPower)}\label{subsec:analysis-algo-anchor}
\begin{theorem}[Convergence of DMPower]\label{thm:main-DMPM}
Let $J$ represent the number of steps in the pre-momentum phase and $K$ the number of steps in the momentum phase as in Algorithm \ref{alg:dmpm}. Let $\epsilon<1$ represent the desired error threshold of our $v_1$ estimates, i.e., $\sin^2\theta(q_t,v_1)<\epsilon$ and $\rho<\min\{1/2,\sqrt{\frac{\lambda_1-\lambda_2}{\lambda_2-\lambda_d}}\}$ represent the desired error threshold of our $\lambda_2$ estimates, i.e., $|\mu_k-\lambda_2|<\rho$. Further fix $\tau>1$ and $\delta=\min\{\rho,\frac{1}{\tau \sqrt{d}}\}$. Then after
\begin{align}
J&=\mathcal{O}\bigl(\frac{1}{\lambda_1-\lambda_2}\log\frac{\tan^2\theta_0}{\delta(\lambda_2-\lambda_3)}+\frac{\lambda_2}{\lambda_2-\lambda_3}\log\frac{d\tau}{\rho}\bigr),\\ K&=\mathcal{O}\bigl(\frac{\beta}{\sqrt{\lambda_1^2-4\beta^2}}\log\frac{1}{\epsilon}\bigr) 
\end{align}
pre-momentum and momentum steps, respectively, where $\beta = {\widehat{\lambda}_2}^2/4 = \mu_J^2/4$, with all but $\tau^{-\Omega(1)}+e^{-\Omega(d)}$ probability, \emph{DMPower} outputs a vector $q_{K}$ with $\sin^2 \theta(q_K,v_1) <\epsilon$. 

\end{theorem}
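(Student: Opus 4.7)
The plan is to analyze the pre-momentum and momentum phases of Algorithm~\ref{alg:dmpm} separately and combine them via a union bound. For the momentum phase, once we have verified the preconditions of Theorem~\ref{PowerMThm}, Corollary~\ref{PowerMIterations} directly yields the $K$-step bound; the real work lies in showing that after $J$ pre-momentum steps the Rayleigh-quotient estimate $\mu_J$ of $\lambda_2$ is $\rho$-accurate, which is what licenses the momentum-phase analysis and justifies the definition $\beta = \mu_J^2/4$.

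For the pre-momentum phase, I would decompose the $J$ iterations into two conceptual sub-phases matching the two summands in the bound for $J$. In the first sub-phase of length $J_1 = \mathcal{O}\bigl(\frac{1}{\lambda_1-\lambda_2}\log\frac{\tan^2\theta_0}{\delta(\lambda_2-\lambda_3)}\bigr)$, the vanilla power iteration on $q_j$ sharpens $\tan\theta(q_j,v_1) \leq \tan\theta_0 \cdot (\lambda_2/\lambda_1)^j$ until the perturbation $E_j := \nu_j q_j q_j^\top - \lambda_1 v_1 v_1^\top$ entering the inexact deflation matrix has operator norm below the noise budget $\mathcal{O}(\delta(\lambda_2-\lambda_3))$ required by the deflation analysis (Lemma~\ref{power-perturbation}). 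The estimate $\|E_j\| = \mathcal{O}(\sin\theta(q_j,v_1))$ follows from standard outer-product perturbation bounds combined with the quadratic Rayleigh-quotient identity $|\nu_j - \lambda_1| = \mathcal{O}(\sin^2\theta(q_j,v_1))$. In the second sub-phase of length $J_2 = \mathcal{O}\bigl(\frac{\lambda_2}{\lambda_2-\lambda_3}\log\frac{d\tau}{\rho}\bigr)$, I would invoke Lemma~\ref{power-perturbation} to treat the Hotelling update on $w_j$ as a noisy power iteration driving $w_j \to v_2$ at the geometric rate $\lambda_3/\lambda_2$, relying on a Gaussian-style initialization of $w_0$ that enforces $|w_0^\top v_2| \geq 1/(\tau\sqrt{d})$ except with probability $\tau^{-\Omega(1)} + e^{-\Omega(d)}$. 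Once $\sin\theta(w_J,v_2) = \mathcal{O}(\rho)$, the quadratic accuracy $|\mu_J - \lambda_2| = \mathcal{O}(\sin^2\theta(w_J,v_2))$ of the Rayleigh quotient delivers $|\mu_J - \lambda_2| < \rho$.

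For the momentum phase, the hypothesis $\rho < \min\{1/2, \sqrt{(\lambda_1-\lambda_2)/(\lambda_2-\lambda_d)}\}$ is exactly what is needed to force $\beta = \mu_J^2/4$ into the interval $[\lambda_2^2/4, \lambda_1^2/4)$ mandated by Theorem~\ref{PowerMThm}. Plugging this $\beta$ into Corollary~\ref{PowerMIterations} gives the advertised $K$-bound, with the constant $4/|q_0^\top v_1|^2$ appearing in Theorem~\ref{PowerMThm} being $\mathcal{O}(1)$ since the warm-up has already produced a $q_J$ with $|q_J^\top v_1|$ close to one. Union-bounding the two high-probability events from the pre-momentum phase yields the overall failure probability $\tau^{-\Omega(1)} + e^{-\Omega(d)}$.

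The main obstacle is the noise analysis of the inexact deflation: the matrix $A - \nu_j q_j q_j^\top$ changes at every iteration as $q_j$ evolves, so no off-the-shelf fixed-noise power-iteration result directly applies. The argument must coordinate the contraction of $q_j$ toward $v_1$ (which shrinks the noise) with the expansion of $w_j$ toward $v_2$, ensuring that by the start of the Hotelling-convergence sub-phase the residual noise is already below the tolerance of Lemma~\ref{power-perturbation} and remains there throughout. A secondary subtlety is confirming that $\mu_J$ lands on the correct side of $\lambda_2$ so that $2\sqrt{\beta} > \lambda_2$ in Theorem~\ref{PowerMThm}; this should follow because the small residual component of $w_J$ along $v_1$ pushes the Rayleigh quotient $\mu_J = \sum_i \lambda_i (w_J^\top v_i)^2$ above $\lambda_2$.
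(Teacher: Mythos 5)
Your proposal follows essentially the same two-phase decomposition as the paper: split the pre-momentum phase into $J_1$ steps in which the vanilla power iteration drives down the inexact-deflation noise $G_j = \nu_j q_j q_j^\top - \lambda_1 v_1 v_1^\top$ (the paper's Lemma~\ref{power-perturbation} and Theorem~\ref{hardt-price-boundsthm}) until the Hardt--Price thresholds~\eqref{hardt-price-bounds} are met, then $J_2$ steps in which the noisy power method on the deflated matrix converges $w_j \to v_2$ with all but $\tau^{-\Omega(1)} + e^{-\Omega(d)}$ probability (Hardt--Price Corollary~1.1), followed by a quadratic Rayleigh-quotient bound (Lemma~\ref{general_rayleigh_bound}) to translate $\sin$-accuracy into $|\mu_J - \lambda_2| < \rho$, then Corollary~\ref{PowerMIterations} for the momentum phase.

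Two corrections are worth making. First, your closing heuristic --- that the residual $v_1$-component of $w_J$ pushes $\mu_J = \sum_i \lambda_i (w_J^\top v_i)^2$ above $\lambda_2$ --- does not hold up: the deflated operator $A - \nu_j q_j q_j^\top$ suppresses the $v_1$-component of $w_j$ essentially to zero, while the components along $v_3, \dots, v_d$ (all with eigenvalues strictly below $\lambda_2$) pull $\mu_J$ below $\lambda_2$; the sign of $\mu_J - \lambda_2$ is genuinely uncertain. The paper does not attempt to prove $\mu_J \geq \lambda_2$ either --- it simply assumes it (the $\dagger$ footnote) and points to the two-sided Theorem~\ref{PowerMThm-Restated} for the complementary case $2\sqrt{\beta} \leq \lambda_2$, which still yields acceleration. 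Your proof should do the same rather than offer an incorrect justification. Second, the difficulty you flag as the ``main obstacle'' --- that $A - \nu_j q_j q_j^\top$ changes each round so no fixed-noise result applies --- is not actually an obstacle: the Hardt--Price framework already admits a different perturbation $G_\ell$ at every step, requiring only that each one individually satisfy the bounds~\eqref{hardt-price-bounds}; once $J_1$ steps have driven $\|G_j\|$ below threshold, the geometric decay of Lemma~\ref{power-perturbation} keeps it there, and the standard noisy-power result applies directly. Relatedly, you cite Lemma~\ref{power-perturbation} as the source of the convergence in sub-phase $J_2$, but it is only the noise bound; the convergence itself comes from the Hardt--Price corollary.
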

\noindent \textit{Remark 1:} Our step count for Power+M convergence implicitly assumes $\lambda_2\leq 2\sqrt{\beta_J}$, i.e., our $\beta$ approximation lives on the right of $\lambda_2^2/4$. It is possible for $2\sqrt{\beta_J}\leq\lambda_2$, in which case we will still appreciate the effects of acceleration (see Theorem \ref{PowerMThm-Restated}).

\noindent \textit{Remark 2:} We require $J=\mathcal{O}\bigl(\frac{1}{\lambda_1-\lambda_2}\log\frac{\tan^2\theta_0}{\delta(\lambda_2-\lambda_3)}+\frac{\lambda_2}{\lambda_2-\lambda_3}\log\frac{d\tau}{\rho}\bigr)$ to achieve $\rho$-accuracy and a further $K=\mathcal{O}\bigl(\frac{\beta_J}{\sqrt{\lambda_1^2-4\nu_J^2}}\log\frac{1}{\epsilon}\bigr)$ steps to achieve $\epsilon$-accuracy. In the momentum phase, we absorb $q_J$ as our initial vector, which has already made convergent progress towards $v_1$. In practice, we will not need all $J+K$ iterations.

\paragraph{Proof Sketch.} We divide DMPower by its pre-momentum and momentum phases. The full proof is deferred to Appendix~\ref{app:proof-dmpm}.

\emph{Pre-momentum phase:} We regard the inexact deflation step as an exact deflation experiencing a perturbation every round. In Lemma \ref{power-perturbation}, we show that this noise decays at every step and after $J_1=\mathcal{O}\bigl(\frac{1}{\lambda_1-\lambda_2}\log\frac{\tan^2 \theta_0}{\delta(\lambda_2-\lambda_3)}\bigr)$ steps, we achieve the Hardt-Price bounds \citep{hardt2014noisy} necessary for convergence of a noisy power method. The convergence rate for noisy power methods \citep[\textrm{Corollary 1.1}]{hardt2014noisy} indicates that after an additional $J_2=\mathcal{O}\bigl(\frac{\lambda_2}{\lambda_2-\lambda_3}\log\frac{d\tau}{\rho} \bigr)$ steps, we will reach our desired $\rho$-accuracy, so in total, we need $J=J_1+J_2$ iterations to complete the pre-momentum phase. We set $\beta=\frac{\mu_J^2}{4}$ and proceed to the momentum phase.

\emph{Momentum phase:} Now that our momentum $\widehat{\lambda}_2=\mu_J^2/4$ coefficient is within the interval $[\lambda_2^2/4,\lambda_1^2/4)$, we may invoke Sa et al.'s Power+M convergence Theorem \ref{thm:main-DMPM}, which states that after $K=\mathcal{O}\Bigl( \frac{\beta_J}{\sqrt{\lambda_1^2-4\beta_J^2}}\log\bigl(\frac{1}{\epsilon}\bigr)\Bigr)$ steps of iteration on $q_J$ (which we take to be our initial vector for Power+M) steps, we will have that $\sin^2\theta(q_{J+K},v_1)<\epsilon$.

\subsection{Delayed Momentum Streaming Power Method (DMStream)}\label{subsec:DMStream-Convergence}
\begin{theorem} [Convergence of DMStream]
\label{thm:main-DMSPM}
Let $\Sigma=\mathbb{E}[(\widehat{A}_t-A)\otimes(\widehat{A}_t-A)]$, where $\widehat{A}_j=\frac{1}{n}\sum_{i=1}^n x_ix_i^\top$ represents any unbiased estimate of $A$ in DMStream with fixed batch size $n$ and $\otimes$ denotes the Kronecker product. Assume we initialize with unit $q_0\in\mathbb{R}^d$ where $d \gg 0$ and $|v_1^\top q_0| \geq 1/2$. Let $\theta_0=\arccos{|q_0^\top v_1|}$. For any $\delta <1$, $\epsilon<1$, suppose
\begin{equation}
\label{variance-cond}
     ||\Sigma||\leq \frac{(\lambda_1^2-4\beta)\delta\epsilon}{256\sqrt{d}J} =
    \frac{(\lambda_1^2-4\beta)^{3/2}\delta\epsilon}{256\sqrt{d}\sqrt{\beta}}\log^{-1}\Bigl( \frac{32}{\delta\epsilon} \Bigr),
\end{equation}
where $J$ is the total number of pre-momentum steps we have fixed at runtime. Furthermore, we let $\rho<\min\{1/2,\sqrt{\frac{\lambda_1-\lambda_2}{\lambda_2-\lambda_d}}\}$ represent the error threshold of our $\lambda_2$ estimates, i.e., $|\mu_k-\lambda_2|<\rho$. Lastly, fix $\tau>1$ and $\delta=\min\{\rho,\frac{1}{\tau \sqrt{d}}\}$.
If our batch size $n$ is chosen such that 
\begin{equation}
\label{batch-size}
\frac{n}{\log^4 n}=\mathcal{O}\Bigl(\frac{1/\gamma^2\log d}{(\lambda_2-\lambda_3)^2 d} \Bigr).    
\end{equation}
where $\gamma = \frac{\rho(\lambda_2-\lambda_3)}{10\tau\sqrt{d}}$, then after 
\begin{align}
 J &=\mathcal{O}\Biggl(\frac{1}{\lambda_1-\lambda_2}\log \biggl(\frac{\tan^2 \theta_0\tau\sqrt{d}}{\rho(\lambda_2-\lambda_3)}\biggr) + \frac{\lambda_2}{\lambda_2-\lambda_3}\log\frac{d\tau}{\rho}\Biggr),\\
 K &=\frac{\sqrt{\beta}}{\sqrt{\lambda_1-4\beta}}\log\Bigl(\frac{32}{\delta\epsilon}\Bigr)
\end{align}
 pre-momentum steps and momentum steps, respectively, with $(1-\frac{1}{n^2})(1-2\delta)(1-\tau^{-\Omega(1)}+e^{-\Omega(d)})$ probability DMStream outputs a vector $q_{K}$ such that $\sin^2\angle(q_{K},v_1)<\epsilon$.
\end{theorem}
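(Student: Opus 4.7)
The argument mirrors the pre-momentum/momentum decomposition used for Theorem~\ref{thm:main-DMPM}, but the stochastic estimates $\widehat{A}_j$ introduce an additional source of noise that must be absorbed into the noisy-power-method framework via the batch-size condition~(\ref{batch-size}). The full proof is deferred to the appendix; here I outline the key steps.

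\emph{Step 1: Control the sampling error uniformly over the pre-momentum phase.} Write $\widehat{A}_j = A + E_j$ where $E_j$ is the mini-batch sampling error. A matrix Bernstein/Hanson--Wright style bound on $E_j$, combined with $\|\Sigma\|$ bounded as in (\ref{variance-cond}) and the batch size (\ref{batch-size}), yields with probability at least $1-1/n^2$ the simultaneous bound $\|E_j\|\le \gamma = \rho(\lambda_2-\lambda_3)/(10\tau\sqrt{d})$ for every $j\le J$. The choice of $\gamma$ is calibrated so that the perturbation is admissible for the Hardt--Price noisy power method applied to both $A$ and the deflated matrix $B = A - \lambda_1 v_1 v_1^\top$.

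\emph{Step 2: Analyze the pre-momentum phase.} The iteration for $q_j$ is a noisy power method on $A$ (noise $E_j$). After $J_1 = \mathcal{O}\bigl(\tfrac{1}{\lambda_1-\lambda_2}\log\tfrac{\tan^2\theta_0 \tau\sqrt{d}}{\rho(\lambda_2-\lambda_3)}\bigr)$ steps it reaches the regime where $\|\nu_j q_j q_j^\top - \lambda_1 v_1 v_1^\top\| = \mathcal{O}(\gamma)$. Consequently the deflated operator satisfies
\begin{equation*}
\bigl\|(\widehat{A}_j - \nu_j q_j q_j^\top) - B\bigr\| \;\le\; \|E_j\| + \bigl\|\nu_j q_j q_j^\top - \lambda_1 v_1 v_1^\top\bigr\| \;=\; \mathcal{O}(\gamma).
\end{equation*}
This is exactly the setting of Proposition~\ref{gt-bound} (the streaming refinement of Lemma~\ref{power-perturbation}), and so the outer iteration for $w_j$ is a noisy power method on $B$ whose leading eigenpair is $(\lambda_2,v_2)$. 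Invoking \citep[Corollary~1.1]{hardt2014noisy} gives $\|w_J - v_2\|\le \rho$, hence $|\mu_J - \lambda_2| = \mathcal{O}(\rho^2)$ by quadratic convergence of the Rayleigh quotient, after an additional $J_2 = \mathcal{O}\bigl(\tfrac{\lambda_2}{\lambda_2-\lambda_3}\log\tfrac{d\tau}{\rho}\bigr)$ iterations. The total $J = J_1 + J_2$ matches the claim, with failure probability $\tau^{-\Omega(1)} + e^{-\Omega(d)}$.

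\emph{Step 3: Momentum phase via Mini-Batch Power+M.} Conditioning on Step~2, we have $\beta = \mu_J^2/4 \in [\lambda_2^2/4,\lambda_1^2/4)$ and the warm-start $q_J$ already satisfies $|q_J^\top v_1| \ge 1/2$ (inherited from the bound on $q_0$ and the contraction in Step~2). The variance hypothesis (\ref{variance-cond}) is written in exactly the form required by Theorem~\ref{momentum-stream-desa} (Mini-Batch Power+M of \citep{de2018accelerated}) once $\beta$ is substituted. Applying that theorem with starting vector $q_J$ and momentum $\beta$ yields $\sin^2\angle(q_K,v_1) < \epsilon$ after $K = \mathcal{O}\bigl(\tfrac{\sqrt{\beta}}{\sqrt{\lambda_1^2-4\beta}}\log\tfrac{32}{\delta\epsilon}\bigr)$ momentum iterations, with probability at least $1 - 2\delta$.

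\emph{Step 4: Union bound.} A union bound over the three independent failure events (mini-batch concentration, noisy power method in the pre-momentum phase, and stochastic momentum in the momentum phase) gives the advertised probability $(1-\tfrac{1}{n^2})(1-2\delta)(1 - \tau^{-\Omega(1)} + e^{-\Omega(d)})$. The main technical obstacle is Step~2: unlike the deterministic DMPower, the deflation matrix in the streaming setting simultaneously carries the $\mathcal{O}(\gamma)$ sampling noise \emph{and} the $\mathcal{O}(\gamma)$ approximation error from the inner iterate $q_j$; calibrating the batch size (\ref{batch-size}) so that the \emph{sum} still meets the Hardt--Price admissibility threshold for $B$ is precisely the purpose of Proposition~\ref{gt-bound}.
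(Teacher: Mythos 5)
Your proposal follows essentially the same pre-momentum/momentum decomposition as the paper's Appendix~D proof: controlling the streaming noise $H_t$ via the batch-size condition and the deflation noise $G_t$ via Proposition~\ref{gt-bound}, waiting $J_1$ rounds for both to fall below the Hardt--Price admissibility threshold on the deflated matrix $B$, then invoking Theorem~\ref{momentum-stream-desa} for the momentum phase and multiplying the failure probabilities. The only cosmetic differences are that the paper treats the two cases $\phi_t \gtrless \psi_t$ of Proposition~\ref{gt-bound} explicitly and does not cite Bernstein/Hanson--Wright by name (it relies on the pre-packaged Lemma~\ref{lemma-sizeG} from Hardt--Price), while you fold this into a single statement; neither changes the argument.
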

\textit{Remark 1:} Both phases have a probabilistic guarantee, whereas in DMPower, the momentum phase was deterministically guaranteed. The probabilistic parameter for the pre-momentum phase $\tau$ while it is $\delta$ for the momentum phase. 

\noindent \textit{Remark 2:} We assume a variance condition on our estimates in equation \ref{variance-cond}. While there are many sophisticated methods designed to reduce variance \citep{shamir2015stochastic,partridge1998fast,de2018accelerated} via introduction of step sizes and anchor iterates, we do not explore these options in this paper. However, a simple strategy for reducing variance is to increase batch size $n$, since we have the relation $||\Sigma||\leq\frac{\sigma^2}{s}$, where $\sigma^2$ is the variance of a single random sample.

\noindent \textit{Remark 3:} The total sample complexity is $n(J+K)$.

\paragraph{Proof Sketch.} We divide DMStream by its pre-momentum and momentum phases. The proof is deferred to Appendix~\ref{app:proof-dm-spm}.
\emph{Pre-momentum phase:} There are two sources of noise in every round of this phase: $H_t$ the estimation error associated to $(A-\widehat{A}_j)w_j$, and $G_t$ the estimation error associated to $(\lambda_1v_1v_1^\top-\nu_jq_jq_j^\top)w_j$. Proposition \ref{dmspm-bounds} shows us how to control $||H_t||$ through batch size and Proposition \ref{gt-bound} details $||G_t||$. We require 
$J_1:=\mathcal{O}\biggl(\frac{1}{\lambda_1-\lambda_2}\log \biggl(\frac{\tan^2 \theta_0\tau\sqrt{d}}{\rho(\lambda_2-\lambda_3)}\biggr)\biggr)$ to achieve the Hardt-Price bounds and a further $J_2 :=\mathcal{O}\bigl(\frac{\lambda_2}{\lambda_2-\lambda_3}\log\frac{d\tau}{\rho} \bigr)$ to acquire the appropriate $\beta$, for a total of $J=J_1+J_2$ pre-momentum rounds. 

\emph{Momentum phase:} Now that our momentum coefficient $\beta=\widehat{\lambda}_2=\mu_J^2/4$ coefficient is within the interval $[\lambda_2^2/4,\lambda_1^2/4)$, we may invoke Sa et al.'s streaming power convergence Theorem \ref{momentum-stream-desa}, to conclude that we need $K=\frac{\sqrt{\beta}}{\sqrt{\lambda_1-4\beta}}\log\bigl(\frac{32}{\delta\epsilon}\bigr)$ steps to complete the momentum phase. 

\subsection{Precision of Inexact Deflation}
\label{precision-section}
In this section we refer to variables and notations as listed in Algorithm \ref{alg:dmpm} and Algorithm \ref{alg:dmspm}. We let $\Delta_{1,2}:=|\lambda_1-\lambda_2|$. Although one sets the number of iterations $J$ for the pre-momentum phase at run-time, we are also interested in the accuracy of our $\lambda_2$ estimates, that is, $|\lambda_2-\mu_j|$, since this will determine how quickly our momentum phase converges. In Theorem $\ref{thm:main-DMPM}$ and Theorem $\ref{thm:main-DMSPM}$, we discuss how many pre-momentum iterations $J$ are needed to ensure $|\mu_j - \mu_{j-1}|\leq \rho$, where $\rho$ is an error threshold which controls the accuracy of our final estimate $\widehat{\lambda}_2$. As such, for the remainder of this section we will look at a modification of DMPower and DMStream where $\rho$ is provided as a hyperparameter and how it affects our overall convergence.

\paragraph{Effects of inaccurate approximation of $\lambda_2$}
In Theorem \ref{thm:main-DMPM} and Theorem \ref{thm:main-DMSPM}, the overall convergence rate is dependent on our pre-momentum phase error $|\lambda_2-\widehat{\lambda}_2|<\rho$, where $\widehat{\lambda}_2=\mu_J$. Ultimately, we need our approximated momentum coefficient $\beta=\widehat{\lambda}_2^2/4 \in [\lambda_2^2/4, \lambda_1^2/4)$. In fact, even if $\beta< \lambda_2^2/4$, the momentum phase will still converge, and will still experience similar momentum effects as long as $|\beta-\lambda_2^2/4|< |\lambda_1^2/4-\lambda_2^2/4|$, which is a generalization of Theorem \ref{PowerMThm}, provided in Theorem \ref{PowerMThm-Restated}. 
We establish a proposition suggesting how accurately $\widehat{\lambda}_2$ must approximate $\lambda_2$:
\begin{proposition}
\label{prop-rhoprecision}
The momentum phase of DMPower set with momentum coefficient $\beta=\mu_J^2/4$ converges if and only if $|\lambda_2-\widehat{\lambda}_2| \leq \Delta_{1,2}.$
\end{proposition}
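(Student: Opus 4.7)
The plan is to translate the Power+M convergence criterion directly into a condition on $\widehat{\lambda}_2$ via the substitution $\beta = \widehat{\lambda}_2^2/4$. The first observation is that $\widehat{\lambda}_2 = \mu_J = w_J^{\top} A w_J$ is the Rayleigh quotient of the PSD matrix $A$ on a unit vector, so $\widehat{\lambda}_2 \geq 0$. This gives the clean identity $2\sqrt{\beta} = \widehat{\lambda}_2$, which is what lets us linearize the squared-quantity conditions appearing in the momentum recurrence.

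Next, I would invoke the generalized convergence characterization (Theorem~\ref{PowerMThm-Restated}, the extension of Theorem~\ref{PowerMThm} alluded to earlier), which says that the momentum phase converges exactly when $|\beta - \lambda_2^2/4| \leq |\lambda_1^2/4 - \lambda_2^2/4|$, i.e., on a symmetric interval around $\lambda_2^2/4$ with upper endpoint $\lambda_1^2/4$. Substituting $\beta = \widehat{\lambda}_2^2/4$ gives $|\widehat{\lambda}_2^2 - \lambda_2^2| \leq \lambda_1^2 - \lambda_2^2$, and factoring the difference of squares on both sides yields
\begin{equation*}
|\widehat{\lambda}_2 - \lambda_2|\,(\widehat{\lambda}_2 + \lambda_2) \;\leq\; (\lambda_1 - \lambda_2)(\lambda_1 + \lambda_2).
\end{equation*}
Matching the factor $\widehat{\lambda}_2 + \lambda_2$ against $\lambda_1 + \lambda_2$ should then reduce the bound to the desired linear form $|\widehat{\lambda}_2 - \lambda_2| \leq \Delta_{1,2} = \lambda_1 - \lambda_2$.

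The sufficient direction is clean: $|\widehat{\lambda}_2 - \lambda_2| \leq \Delta_{1,2}$ forces $\widehat{\lambda}_2 \leq \lambda_1$, so $\widehat{\lambda}_2 + \lambda_2 \leq \lambda_1 + \lambda_2$, and term-wise multiplication immediately yields the factored convergence inequality above. The main obstacle is the necessary direction, where the quadratic-in-$\widehat{\lambda}_2$ convergence window must be shown to collapse \emph{exactly} onto the linear window. My plan here is to check the two boundary points directly: at $\widehat{\lambda}_2 = \lambda_1$ both sides of the factored inequality coincide, and at $\widehat{\lambda}_2 = 2\lambda_2 - \lambda_1$ (when this is nonnegative) a substitution places $\beta$ at the symmetric lower boundary of the admissible window in Theorem~\ref{PowerMThm-Restated}. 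Monotonicity of $x \mapsto x^2$ on $[0,\infty)$ then forces the two intervals to agree in the interior, completing the iff. A brief case split handles $2\lambda_2 - \lambda_1 < 0$, where the Rayleigh-quotient nonnegativity $\widehat{\lambda}_2 \geq 0$ supersedes the nominal lower bound and the lower half of the equivalence holds vacuously.
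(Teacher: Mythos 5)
Your sufficient direction is sound and runs on the same rails as the paper's proof: note $\widehat{\lambda}_2 \geq 0$, factor the difference of squares in the convergence criterion, and multiply the linear inequality $|\widehat{\lambda}_2-\lambda_2|\leq \lambda_1-\lambda_2$ against $\widehat{\lambda}_2+\lambda_2 \leq \lambda_1+\lambda_2$. (The paper obtains $\mu_J \leq \lambda_1$ directly from the Rayleigh-quotient bound $\mu_J \in [\lambda_d,\lambda_1]$ rather than deriving it from the hypothesis, but the multiplication step is identical.)

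The gap is in the necessary direction, and your own boundary check exposes it. You claim that substituting $\widehat{\lambda}_2 = 2\lambda_2-\lambda_1$ (when nonnegative) places $\beta$ at the symmetric lower boundary $\lambda_2^2/4 - (\lambda_1^2/4 - \lambda_2^2/4) = (2\lambda_2^2-\lambda_1^2)/4$. It does not: at that point $\beta = (2\lambda_2-\lambda_1)^2/4$, and
\begin{equation*}
\frac{(2\lambda_2-\lambda_1)^2}{4} - \frac{2\lambda_2^2-\lambda_1^2}{4} \;=\; \frac{2(\lambda_1-\lambda_2)^2}{4} \;=\; \frac{(\lambda_1-\lambda_2)^2}{2} \;>\; 0 ,
\end{equation*}
so the lower endpoint of the linear window in $\widehat{\lambda}_2$ maps to a $\beta$-value strictly \emph{inside} the quadratic convergence window, not onto its boundary. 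Consequently the two intervals do not ``collapse onto each other'': the quadratic window $|\beta-\lambda_2^2/4|\leq|\lambda_1^2/4-\lambda_2^2/4|$ is a strictly larger set than what $|\widehat{\lambda}_2-\lambda_2|\leq\Delta_{1,2}$ carves out, and monotonicity of $x\mapsto x^2$ cannot repair the mismatch because the endpoints themselves disagree. Any $\widehat{\lambda}_2$ strictly between $\sqrt{\max\{0,2\lambda_2^2-\lambda_1^2\}}$ and $2\lambda_2-\lambda_1$ satisfies the quadratic criterion but violates the linear one, so your argument cannot close the ``only if.'' For what it is worth, the paper's own necessity argument is also shaky --- it entertains $\mu_J>\lambda_1$, which a Rayleigh quotient of $A$ on a unit vector never produces --- so the ``only if'' really rests on reading ``converges'' as ``enjoys the accelerated guarantee at rate comparable to the optimal setting,'' not as literal convergence of the iterates; under the literal reading even $\beta=0$ (the vanilla update) converges. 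If you want to salvage the equivalence, you should state explicitly what notion of convergence you are inverting, because the two windows you are comparing are genuinely different sets.
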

The proof is deferred to Appendix \ref{app:precision_bounds}. In lieu of fixed number of iterations for the pre-momentum phase, one may instead choose to exit the first for-loop if $|\mu_j-\mu_{j-1}|< \rho$, where we have now adopted $\rho$ as a hyperparameter, which is a far less aggressive option than the guesswork required with randomly selecting a convergent $\beta$ as in Power+M. We assume this termination condition for the remainder of our discussion.

By Theorem $\ref{thm:main-DMPM}$, the $\mu_j \rightarrow \lambda_2$, so we argue it is fair to assume $|\mu_j-\mu_{j-1}|\approx |\mu_j-\lambda_2|$. In which case, by the triangle inequality and Proposition \ref{precision-section}, we have that $|\mu_j-\mu_{j-1}|< \rho$ if and only if $\rho \lesssim  \frac{1}{2}\Delta_{1,2}$. Loosening $\rho$ beyond this bound does not necessarily result in divergence, however, which we experimentally observe and discuss in the next section. Current state-of-the-art bounds do not say anything meaningful about how quickly we can expect to converge/diverge outside of this bound. According to Theorem \ref{thm:main-DMPM}, the looser we set $\rho$, the fewer iterations we can expect to run in the pre-momentum phase. However, the tighter we set $\rho$, the closer our $\beta$ approaches ${\lambda_2^2}/4$, which is the optimal assignment for the momentum phase. 

\paragraph{Practical selection of $\rho$}
Instead of setting $\rho$ to an exceedingly small value close to machine precision, we experimentally demonstrate in Figure \ref{fig:iteration} that DMPower is successful in the setting where $\Delta_{1,2}=\Delta_{2,3}$ for a variety of $\rho$ selections. In Figure \ref{fig:iteration} we have set $\rho=\sqrt[k]{\epsilon}$ for $k=1,2,3,4$ to demonstrate flexibility, but they are independent precision bounds; $\rho$ depends only on $\Delta_{1,2}$ by Proposition \ref{prop-rhoprecision}, not on $\epsilon$. We further observe that our DMPower outperforms the vanilla power method at nearly all error thresholds, and converges at a rate similar to optimal Power+M for tighter eigengaps according to Figure~\ref{fig:iteration}. In Figure \ref{fig:simultaneous} and Table \ref{table:two}, we demonstrate that setting $\rho$ looser causes us to non-negligibly lose precision in our approximation of $\lambda_2$. Setting $\rho=\epsilon$ tighter increases our accuracy, but our iteration complexity worsens by a noticeable amount, especially in the medium eigengap setting.

If one is certain about lower bounds $\alpha_1, \alpha_2$ of $\Delta_{1,2}$ and $\Delta_{2,3}$, respectively, then we have the following result for DMPower: 
\begin{proposition}
\label{practical-bound}
Assume $\alpha_1 \leq \Delta_{1,2}$ and $\alpha_2\leq \Delta_{2,3}$. Fix $\rho < \min\{1/2,\sqrt{\alpha_1}\}$, $\tau>1$ and let $\theta_0=\arccos{|q_0^\top v_1|}$, $\delta=\min\{\rho, \frac{1}{\tau\sqrt{d}}\}$. Then after
\begin{equation}
J=\mathcal{O}\bigl(\frac{1}{\alpha_1}\log\frac{\tan^2\theta_0}{\delta\alpha_2}+\frac{1}{\alpha_2}\log\frac{d\tau}{\rho}\bigr)
\end{equation}
pre-momentum phase steps, we output a vector $w_J$ such that if $\mu_J=w_JAw_J^\top$, then $|\lambda_2-\mu_J|<\rho^2$. 
Since $\rho^2 < \alpha_1$, our momentum phase will converge with all but $\tau^{-\Omega(1)}+e^{-\Omega(d)}$ probability. 
\end{proposition}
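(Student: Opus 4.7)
The plan is to derive Proposition~\ref{practical-bound} as a specialized corollary of Theorem~\ref{thm:main-DMPM}, recasting the iteration count in terms of the lower bounds $\alpha_1, \alpha_2$ and tightening the final precision claim from $\rho$ to $\rho^2$ by invoking the quadratic accuracy of the Rayleigh quotient.

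First, I would invoke the pre-momentum analysis of Theorem~\ref{thm:main-DMPM}, but applied to the eigenvector iterate rather than its Rayleigh quotient: the combination of Lemma~\ref{power-perturbation} (which provides the decaying inexact-deflation perturbation over the first $J_1$ rounds) and Corollary~1.1 of Hardt--Price (which supplies the additional $J_2$ noisy-power-method rounds) guarantees $\|v_2 - w_J\| < \rho$ with failure probability at most $\tau^{-\Omega(1)} + e^{-\Omega(d)}$. Replacing $\Delta_{1,2}$ and $\Delta_{2,3}$ by their lower bounds $\alpha_1$ and $\alpha_2$ (and absorbing the factor $\lambda_2 \le 1$) can only inflate the iteration count monotonically, yielding the stated expression $J = \mathcal{O}\bigl(\alpha_1^{-1}\log(\tan^2\theta_0/(\delta\alpha_2)) + \alpha_2^{-1}\log(d\tau/\rho)\bigr)$.

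Next, I would upgrade the eigenvector accuracy $\|v_2 - w_J\| < \rho$ to the Rayleigh quotient accuracy $|\mu_J - \lambda_2| < \rho^2$. Since $A$ is symmetric with $A v_2 = \lambda_2 v_2$, writing $w_J = v_2 + e$ with $\|e\| < \rho$ and expanding $\mu_J = w_J^\top A w_J / \|w_J\|^2$ gives $\mu_J - \lambda_2 = e^\top(A - \lambda_2 I)e / \|w_J\|^2$, bounded in magnitude by $\|A - \lambda_2 I\|\,\|e\|^2 = \mathcal{O}(\rho^2)$. This is the quadratic accuracy of the Rayleigh quotient that was paraphrased in the Complexity paragraph following Algorithm~\ref{alg:dmpm}; any residual constant may be absorbed into $\rho$ without affecting the $\mathcal{O}$-complexity, since the dependence of $J$ on $\rho$ is only logarithmic.

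Finally, I would verify applicability of Proposition~\ref{prop-rhoprecision}: the hypothesis $\rho < \sqrt{\alpha_1}$ gives $|\mu_J - \lambda_2| < \rho^2 < \alpha_1 \le \Delta_{1,2}$, which is exactly the necessary-and-sufficient condition for the momentum phase initialized with $\beta = \mu_J^2/4$ to converge. The probability guarantee inherited from the noisy-power-method step for the pre-momentum phase then propagates unchanged, since the momentum phase itself is deterministic once a valid $\beta$ is fixed.

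The main obstacle is conceptual rather than computational: one must apply the Hardt--Price bound to $w_J$ directly (not to its Rayleigh quotient) so that the quadratic Rayleigh accuracy can be leveraged to convert $\rho$-accuracy on the eigenvector into $\rho^2$-accuracy on the eigenvalue estimate for free, and one must recognize that the assumption $\rho < \min\{1/2, \sqrt{\alpha_1}\}$ is precisely calibrated so the resulting $\mu_J$ lands inside the convergence window dictated by Proposition~\ref{prop-rhoprecision}.
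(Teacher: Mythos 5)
Your proof is correct and follows essentially the same route as the paper's: invoke the pre-momentum analysis of Theorem~\ref{thm:main-DMPM} to obtain $\sin\theta(w_J,v_2)<\rho$ with the stated failure probability, monotonically replace the eigengaps by the lower bounds $\alpha_1,\alpha_2$ (and drop $\lambda_2\le 1$) in the iteration count, and then upgrade to $|\lambda_2-\mu_J|<\rho^2$ via the quadratic accuracy of the Rayleigh quotient (the paper cites its Rayleigh-quotient lemma where you re-derive the fact from the $e^\top(A-\lambda_2 I)e$ expansion, but these are the same estimate). Your closing observation that $\rho<\sqrt{\alpha_1}$ is calibrated exactly so that $\rho^2<\alpha_1\le\Delta_{1,2}$, triggering Proposition~\ref{prop-rhoprecision}, matches the paper's intent.
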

\begin{proof}
The size of $J$ along with the probabilistic guarantee is a simple corollary of Theorem \ref{thm:main-DMPM} in conjunction with Lemma \ref{vanillarayleighbound} to relate $\sin(w_J,v_2)$ to $|\lambda_2-\mu_J|$.
\end{proof}
It is important to note the usefulness and practicality of Proposition \ref{practical-bound}: in other momentum-based methods, having a lower bound on $\Delta_{1,2}$ was not sufficient to guarantee convergence -- one would still need the actual location of $[\lambda_2, \lambda_1)$ over [0,1]. 

\section{Experiments}\label{sec:exp}


In this section, we discuss the set of experiments we conducted to measure the performance of DMPower and DMStream against a variety of common baseline algorithms. All the experiments were run on an Intel(R) Xeon(R) E5-1650 v4 machine with 32GiB RAM and Linux OS. The algorithms were implemented in Python using the Numpy and SciPy libraries. We refer the reader Appendix E for further experimental design and analyses, including the construction of the synthetic matrices and runtime (wall-time) comparisons.
\subsection{DMPower Experiments}
\paragraph{Experimental Setup}
In these experiments, we compare DMPower against the vanilla power method, Power+M with optimal assignment of $\beta$, and the Lanczos algorithm. In each experiment, we generate a random symmetric PSD $A\in\mathbb{R}^{100 \times 100}$ with a fixed spectrum and conduct PCA using the methods listed above. In Figures \ref{fig:divergent} and \ref{fig:iteration}, we record the number of iterations that are required to achieve the desired error-tolerance $\epsilon$ between our dominant eigenvalue approximates and then take an average over 1000 runs, each time generating a new symmetric PSD $A$ with a specified spectrum. The initial vector $q_0$ is uniformly set across all methods for every run. Since we know the spectrum of these synthetic matrices, we can initialize Power+M to run with $\beta=\lambda_2^2/4$, the optimal momentum coefficient. In Figure \ref{fig:simultaneous}, we measure the accuracy of Rayleigh quotient estimates to the Simultaneous Power Iteration rather than convergence speeds.

\paragraph{Iteration Complexity Results} As shown in Figure \ref{fig:iteration}, DMPower outperforms the vanilla power method at nearly all $\epsilon$ thresholds, across a variety of $\rho$ settings. We set $\rho=\epsilon^{1/k}$ for $k=1,2,3,4$, but we must stress that $\rho$ and $\epsilon$ are independent precision bounds; $\rho$ depends on $\Delta_{1,2}$ by Proposition \ref{prop-rhoprecision}. Furthermore, at medium and tight eigengaps, observe DMPower performs nearly as well as optimal Power+M. When factoring in tridiagonalization iterations, DMPower outperforms the Lanczos algorithm, especially at tighter precisions (the Lanczos is well-known to suffer from numerical instability without corrective measures such as re-orthogonalization \citep{saad2011numerical}). See Table \ref{table:three} for a full set of raw data. We also measured the proportion of pre-momentum versus momentum phase iterations in Table \mbox{\ref{table:six}}, noting that for a variety of $\rho$ settings, most iterations of DMPower are spent in the momentum phase.
\captionsetup[figure]{labelfont=,}

\begin{figure}[!htbp]
\floatconts
{fig:iteration}
{\caption{
\textbf{Iteration Complexity Comparisons}. We compare DMPower (with different $\rho$ settings) against the vanilla power method, Power+M with optimal $\beta$, and the Lanczos algorithm. The X-axis corresponds to $\epsilon$, and the Y-axis measures performance by iteration count. Our algorithms demonstrate consistently favorable performance against the vanilla power method baseline and match optimal convergence speeds established by Power+M at medium and tight eigengap settings. \textit{Note 1}: For the Lanczos algorithm, we start by taking 100 tri-diagonalization iterations, which is the recommended number for numerical stability. \textit{Note 2:} We vary $\rho$ according to $\epsilon$ for convenience, but they are independent precision thresholds. As stated in Proposition \ref{prop-rhoprecision}, $\rho$ is dependent on $\Delta_{1,2}$. }}
{
\subfigure[$\Delta_{1,2}=\Delta_{2,3}=0.1$]{
\resizebox{0.3\textwidth}{!}{\includegraphics[]{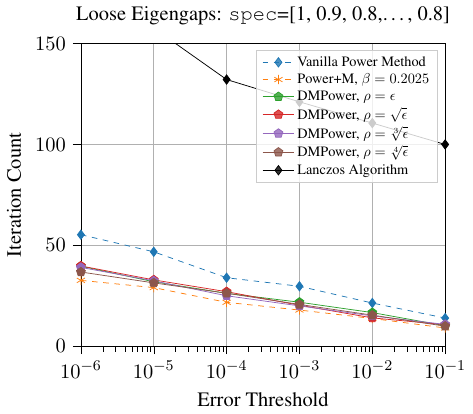}}
		\label{fig:iteration-a}
}
\hfill
\subfigure[$\Delta_{1,2}=\Delta_{2,3}=0.01$]{
\resizebox{0.3\textwidth}{!}{\includegraphics[]{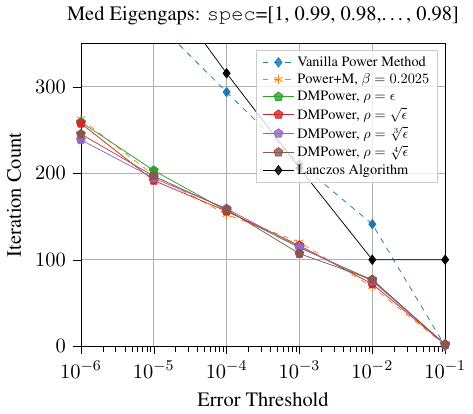}}
\label{fig:iteration-b}
}
\hfill
\subfigure[$\Delta_{1,2}=\Delta_{2,3}=0.001$]{
\resizebox{0.315\textwidth}{!}{\includegraphics[]{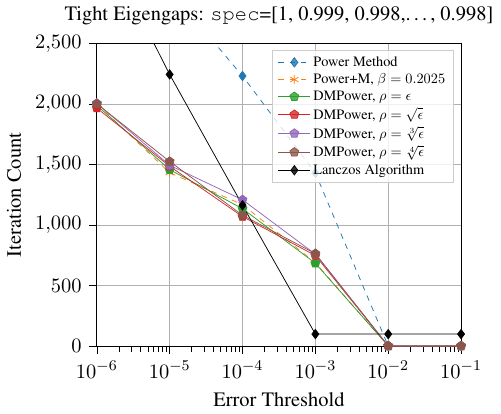}}
\label{fig:iteration-c}
}
}
\end{figure}

\paragraph{Wall-time Performance}
At loose and medium eigengaps, Power+M (optimal) registers the lowest wall-time, owing to fewer matrix-vector computations. We remind the reader that this setting is only a baseline and not practically achievable. Of particular note is that at nearly all settings, DMPower runs noticeably faster than the Lanczos algorithm. We refer the reader to Table \ref{table:seven} for full data.

\subsection{DMStream Experiments}
\paragraph{Experimental Setup}
We used a 50,000 sample subset of the MNIST dataset \citep{lecun1998gradient}, which is represented as a matrix of size $50000 \times 784$. The dataset was first pre-processed by centering and dividing the entire matrix by $\sigma\sqrt{784}$. We compared DMStream against Oja's algorithm \citep{oja1982simplified} with varying step sizes, stochastic power iteration (Algorithm \ref{alg:streamPCA}), and Mini-Batch Power+M set with optimal $\beta=\lambda_2^2/4$, where $\lambda_2$ is the second eigenvalue of the (processed) covariance matrix of MNIST. We measured performance by a commonly-used metric $\log_{10}\bigl(1 - \frac{||X^\top q_K||}{||X^\top v_1||} \bigr)$. We tested over a variety of batch sizes with $\rho=0.1$ for DMStream. For each batch size, we ran 50 iterations and averaged the results over 10 runs.

\captionsetup[figure]{labelfont=,}

\begin{figure}[!htbp]
\begin{minipage}[c]{0.50\textwidth}
\centering
	\resizebox{.8\textwidth}{!}{
	\includegraphics[]{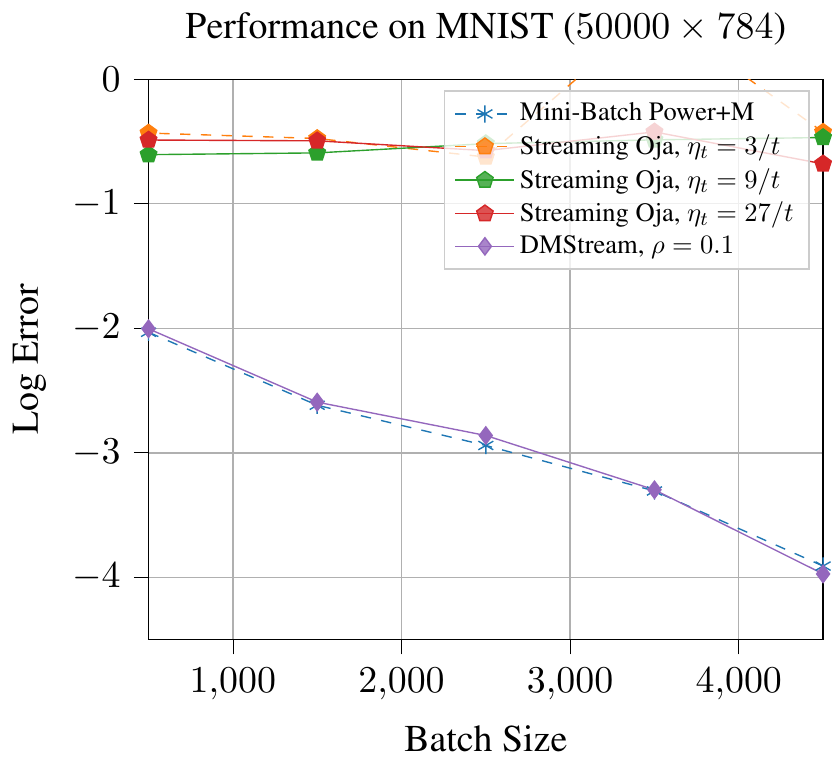}
	}
\end{minipage}\hfill
\begin{minipage}[c]{0.50\textwidth}
\caption{\textbf{Performance by batch size}. In this experiment we measure the performance of DMStream versus Oja's algorithm (with various step sizes) and Mini-Batch Power+M (optimal $\beta=\lambda_2^2/4$). We measure performance by the log error $\log_{10}(1 - \frac{||X^\top q_K||}{||X^\top v_1||})$. DMStream exhibits a consistent increase in accuracy as batch-size is increased and mimics the performance of optimal Mini-Batch Power+M.}  
\label{fig:stream}
\end{minipage}
\end{figure}
\paragraph{Results}
Our algorithm performs significantly better than Oja's algorithm for several step sizes and is as accurate as Mini-Batch Power+M initialized with optimal $\beta=\lambda_2^2/4$. As Table \ref{table:ten} demonstrates, our accuracy eventually ceases to improve for fixed batch size, but Figure \ref{fig:stream} indicates noticeable improvement with increasing batch size, owing to reduced variance of our unbiased estimates $\widehat{A}_t$.

\subsection{Application: Spectral Clustering} 
\paragraph{Overview}
Clustering is the unsupervised learning task of dividing a collection of data points into distinct groups or "clusters." The k-means algorithm \protect{\citep{lloyd1982least}} is a popular clustering method which has found ubiquitous use in machine learning, including social network analysis \protect{\citep{mishra2007clustering}}, image processing \protect{\citep{shi2000normalized}}, and other data mining tasks. 

However, k-means is limited in effectiveness when applied to nonlinear data. Spectral clustering is an extension of k-means used to properly separate nonlinear data. Whereas k-means is applied directly on the data points $\{x_i\}_{i=1}^n$, spectral clustering first begins with a symmetric affinity matrix $A_{ij}=s(x_i,x_j)$ where $s$ is a similarity function which could be Euclidean distance, for example. We define the diagonal matrix $D$ where $D_{ii}=\sum_{j=1}^n A_{ij}$ and then form the normalized affinity matrix $W=D^{-1}A$. Spectral clustering then computes the top $k$ components of $W$. These eigenvectors are supplied to the k-means algorithm to obtain the final separation result. 

The power iteration may be used to find the eigenvectors of $W$, which is referred to as power iteration clustering (PIC). The deflation-based PIC algorithm \protect{\citep{thang2013deflation}} we use for our experiments is provided in Appendix \mbox{\ref{app:spectralcluster}}. In this section, we compare variants of the power iteration in carrying out PIC and demonstrate that DMPower is capable of faster eigenvector computation and more accurate data separation when compared against the vanilla power iteration.

\captionsetup[figure]{labelfont=,}

 \begin{figure}[!htbp]
 \floatconts
 {fig:cluster}
 {\caption{\textbf{Performance of DMPM on Spectral Clustering.} We depict the performance of DMPM ($\rho=\sqrt[3]{\epsilon}$) on dividing a collection of data points into their natural, geometrically-partitioned clusters. The first series (a-e) is the concentric circles dataset while the second series (f-j) is the half-moons dataset. In both series, the first image depicts the unlabeled arrangement, the second depicts a na{\"i}ve application of k-means without spectral clustering, while the last three images depict the performance of k-means assisted by spectral clustering over progressively tighter $\epsilon$ error thresholds (thus, requiring more accurate affinity matrix eigenvector approximations). As $\epsilon$ grows smaller, the data separation improves, eventually achieving perfect classification by $\epsilon=10^{-8}$. }}
 {
 \subfigure[\small{Unlabeled}]{
 \resizebox{0.19\textwidth}{!}{\includegraphics[]{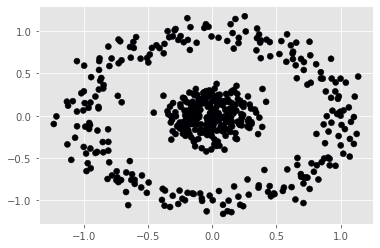}} 
 }
\hspace*{-3mm}
\subfigure[\small{Na{\"i}ve k-means}]{
\resizebox{0.19\textwidth}{!}{\includegraphics[]{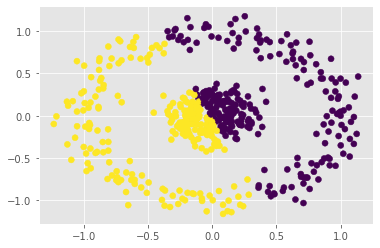}}
}
\hspace*{-3mm}
\subfigure[\small{$\epsilon=10^{-4}$}]{
\resizebox{0.19\textwidth}{!}{\includegraphics[]{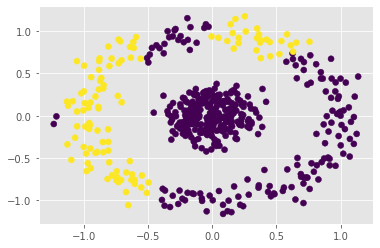}}
}
\hspace*{-3mm}
\subfigure[\small{$\epsilon=10^{-6}$}]{
\resizebox{0.19\textwidth}{!}{\includegraphics[]{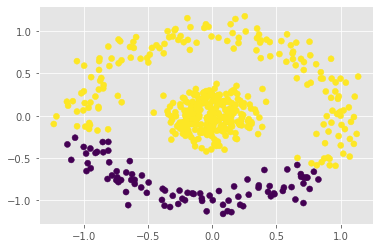}}
}
\hspace*{-3mm}
\subfigure[\small{$\epsilon=10^{-8}$}]{
\resizebox{0.19\textwidth}{!}{\includegraphics[]{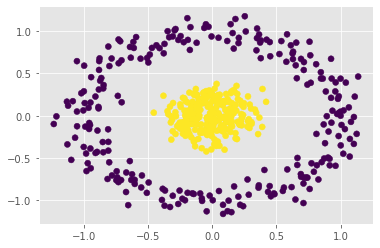}}
}
\subfigure[\small{Unlabeled}]{
 \resizebox{0.19\textwidth}{!}{\includegraphics[]{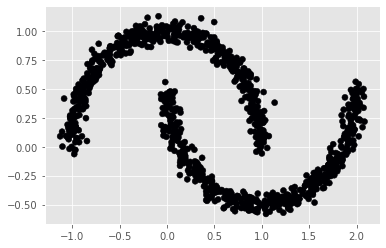}} 
 }
\hspace*{-3mm}
\subfigure[\small{Na{\"i}ve k-means}]{
\resizebox{0.19\textwidth}{!}{\includegraphics[]{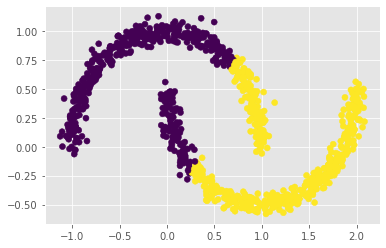}}
}
\hspace*{-3mm}
\subfigure[\small{$\epsilon=10^{-4}$}]{
\resizebox{0.19\textwidth}{!}{\includegraphics[]{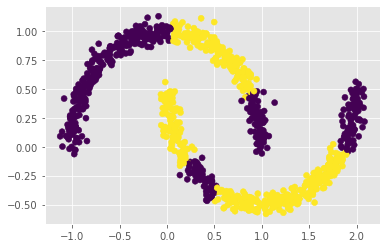}}
}
\hspace*{-3mm}
\subfigure[\small{$\epsilon=10^{-6}$}]{
\resizebox{0.19\textwidth}{!}{\includegraphics[]{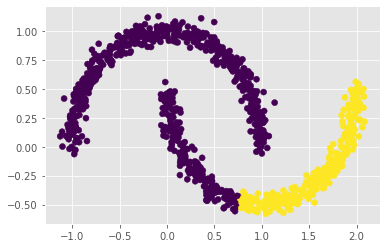}}
}
\hspace*{-3mm}
\subfigure[\small{$\epsilon=10^{-8}$}]{
\resizebox{0.19\textwidth}{!}{\includegraphics[]{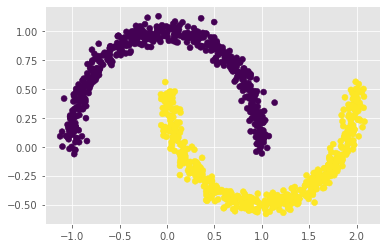}}
}
 }
 \end{figure}



\paragraph{Experimental Setup} 
We used two popular toy datasets for clustering: half-moons and concentric circles. The half-moons dataset was generated with 500 samples while the concentric circles set was generated using 1000 samples. We compared DMPower (with various $\rho$ settings) against the vanilla power method and Power+M with optimal assignment of $\beta$. We follow the deflation-based approach of spectral clustering, which is outlined in Algorithm \mbox{\ref{alg:piccluster}}. Each method was used to recover the top two eigenvectors and eigenvalues of the normalized affinity matrix $W$ of each dataset (our similarity function was pairwise $\ell_2$ distance). 

We use a practical implementation of the DMPower Algorithm \mbox{\ref{alg:dmpm}}: we exit the pre-momentum phase when $||w_{j+1}-w_j||\leq \rho$, where $\rho$ is a function of $\epsilon$, and exit the momentum phase when $||q_{k-1}-q_{j}||\leq \epsilon$. Similarly, for the vanilla power iteration Algorithm \mbox{\ref{vanpm}} and the alternative Power+M update in equation \mbox{\ref{powerm_update}}, we end the procedure once $||q_{k+1}-q_k||\leq \epsilon$. Therefore, termination of these algorithms is governed by the closeness of the approximates.

\paragraph{Results}
DMPower, under all $\rho$ settings, and for most error thresholds $\epsilon$ requires fewer iterations to recover the top two eigenvectors of the affinity matrices than the vanilla power method, see Table \mbox{\ref{table:ten}}. Furthermore, DMPower with $\rho=\sqrt{\epsilon},\sqrt[3]{\epsilon}$ performs similarly in both iteration complexity and accuracy when compared against Power+M with optimal $\beta=\lambda_2^2/4$ as reported in to Tables \mbox{\ref{table:eleven}} and \mbox{\ref{table:thirteen}}. Figure \mbox{\ref{fig:cluster}} depicts a progression of separation over the datasets, when DMPower ($\rho=\sqrt[3]{\epsilon}$) is used.
\section{Conclusion}\label{sec:conc}
In summary, this paper introduces a new scheme for accelerating the vanilla and streaming power methods. The realization of this scheme is the delayed momentum power method (DMPower) and its streaming companion the delayed streaming momentum method (DMStream). DMPower is experimentally shown to outperform the vanilla power iteration and achieves iteration complexity similar to an existing accelerated method Power+M initialized with optimal hyperparameters. Empirically, it also outperforms the state-of-the-art Lanczos algorithm in both wall-time and iteration complexity under several regimes. DMStream is shown to vastly outperform all variations of Oja's algorithm and register close-to-optimal error when compared against the Mini-Batch Power+M initialized with an optimal momentum coefficient. We provide convergence guarantees for DMPower and DMStream using a mixture of perturbation theory and classical power iteration bounds. While other momentum-based methods rely on unrealistic spectral knowledge for acceleration, DMPower and DMStream are both practical and fast. 

\section*{Acknowledgements}
Rabbani thanks John Mattox and Mark Davis for helpful discussions. Huang is supported by a startup fund from the Department of Computer Science of the University of Maryland, National Science Foundation IIS-1850220 CRII Award 030742-00001, DOD-DARPA-Defense Advanced Research Projects Agency Guaranteeing AI Robustness against Deception (GARD), Laboratory for Physical Sciences at University of Maryland, and Adobe, Capital One and JP Morgan faculty fellowships.
\newpage


\bibliography{supp_bib}

\begin{thebibliography}{35}
\providecommand{\natexlab}[1]{#1}
\providecommand{\url}[1]{\texttt{#1}}
\expandafter\ifx\csname urlstyle\endcsname\relax
  \providecommand{\doi}[1]{doi: #1}\else
  \providecommand{\doi}{doi: \begingroup \urlstyle{rm}\Url}\fi

\bibitem[Absil et~al.(2009)Absil, Mahony, and Sepulchre]{absil2009optimization}
P-A Absil, Robert Mahony, and Rodolphe Sepulchre.
\newblock \emph{Optimization algorithms on matrix manifolds}.
\newblock Princeton University Press, 2009.

\bibitem[B{\"o}rm and Mehl(2012)]{borm2012numerical}
Steffen B{\"o}rm and Christian Mehl.
\newblock \emph{Numerical methods for eigenvalue problems}.
\newblock Walter de Gruyter, 2012.

\bibitem[De~Sa et~al.(2018)De~Sa, He, Mitliagkas, R{\'e}, and
  Xu]{de2018accelerated}
C~De~Sa, Bryan He, Ioannis Mitliagkas, Christopher R{\'e}, and Peng Xu.
\newblock Accelerated stochastic power iteration.
\newblock \emph{Proceedings of machine learning research}, 84:\penalty0 58--67,
  2018.

\bibitem[Drikvandi and Lawal(2020)]{drikvandi2020sparse}
Reza Drikvandi and Olamide Lawal.
\newblock Sparse principal component analysis for natural language processing.
\newblock \emph{Annals of Data Science}, pages 1--17, 2020.

\bibitem[Gawalt et~al.(2010)Gawalt, Zhang, and El~Ghaoui]{gawalt2010sparse}
Brian Gawalt, Youwei Zhang, and Laurent El~Ghaoui.
\newblock Sparse pca for text corpus summarization and exploration.
\newblock In \emph{NIPS 2010 Workshop on Low-Rank Matrix Approximation}.
  Citeseer, 2010.

\bibitem[Golub and Van~Loan(2012)]{golub2012matrix}
Gene~H Golub and Charles~F Van~Loan.
\newblock \emph{Matrix computations}, volume~3.
\newblock JHU press, 2012.

\bibitem[Hardt and Price(2014)]{hardt2014noisy}
Moritz Hardt and Eric Price.
\newblock The noisy power method: A meta algorithm with applications.
\newblock In \emph{Advances in Neural Information Processing Systems}, pages
  2861--2869, 2014.

\bibitem[Hardt and Roth(2013)]{hardt2013beyond}
Moritz Hardt and Aaron Roth.
\newblock Beyond worst-case analysis in private singular vector computation.
\newblock In \emph{Proceedings of the forty-fifth annual ACM symposium on
  Theory of computing}, pages 331--340, 2013.

\bibitem[Jain et~al.(2016)Jain, Jin, Kakade, Netrapalli, and
  Sidford]{jain2016streaming}
Prateek Jain, Chi Jin, Sham~M Kakade, Praneeth Netrapalli, and Aaron Sidford.
\newblock Streaming pca: Matching matrix bernstein and near-optimal finite
  sample guarantees for oja’s algorithm.
\newblock In \emph{Conference on learning theory}, pages 1147--1164, 2016.

\bibitem[Journ{\'e}e et~al.(2010)Journ{\'e}e, Nesterov, Richt{\'a}rik, and
  Sepulchre]{journee2010generalized}
Michel Journ{\'e}e, Yurii Nesterov, Peter Richt{\'a}rik, and Rodolphe
  Sepulchre.
\newblock Generalized power method for sparse principal component analysis.
\newblock \emph{Journal of Machine Learning Research}, 11\penalty0 (2), 2010.

\bibitem[Kapralov and Talwar(2013)]{kapralov2013differentially}
Michael Kapralov and Kunal Talwar.
\newblock On differentially private low rank approximation.
\newblock In \emph{Proceedings of the twenty-fourth annual ACM-SIAM symposium
  on Discrete algorithms}, pages 1395--1414. SIAM, 2013.

\bibitem[Kim and Klabjan(2020)]{kimstochastic}
Cheolmin Kim and Diego Klabjan.
\newblock Stochastic variance-reduced algorithms for pca with arbitrary
  mini-batch sizes.
\newblock \emph{Proceedings of the 23rdInternational Conference on Artificial
  Intelligence and Statistics (AISTATS)}, 2020.

\bibitem[LeCun et~al.(1998)LeCun, Bottou, Bengio, and
  Haffner]{lecun1998gradient}
Yann LeCun, L{\'e}on Bottou, Yoshua Bengio, and Patrick Haffner.
\newblock Gradient-based learning applied to document recognition.
\newblock \emph{Proceedings of the IEEE}, 86\penalty0 (11):\penalty0
  2278--2324, 1998.

\bibitem[Lei et~al.(2016)Lei, Zhong, and Dhillon]{lei2016coordinate}
Qi~Lei, Kai Zhong, and Inderjit~S Dhillon.
\newblock Coordinate-wise power method.
\newblock In \emph{Advances in Neural Information Processing Systems}, pages
  2064--2072, 2016.

\bibitem[Lin and Cohen(2010{\natexlab{a}})]{lin2010power}
Frank Lin and William~W Cohen.
\newblock Power iteration clustering.
\newblock In \emph{ICML}, 2010{\natexlab{a}}.

\bibitem[Lin and Cohen(2010{\natexlab{b}})]{lin2010very}
Frank Lin and William~W Cohen.
\newblock A very fast method for clustering big text datasets.
\newblock In \emph{ECAI}, pages 303--308, 2010{\natexlab{b}}.

\bibitem[Lloyd(1982)]{lloyd1982least}
Stuart Lloyd.
\newblock Least squares quantization in pcm.
\newblock \emph{IEEE transactions on information theory}, 28\penalty0
  (2):\penalty0 129--137, 1982.

\bibitem[Mai and Johansson(2019)]{mai2019noisy}
Vien~V Mai and Mikael Johansson.
\newblock Noisy accelerated power method for eigenproblems with applications.
\newblock \emph{IEEE Transactions on Signal Processing}, 67\penalty0
  (12):\penalty0 3287--3299, 2019.

\bibitem[Mishra et~al.(2007)Mishra, Schreiber, Stanton, and
  Tarjan]{mishra2007clustering}
Nina Mishra, Robert Schreiber, Isabelle Stanton, and Robert~E Tarjan.
\newblock Clustering social networks.
\newblock In \emph{International Workshop on Algorithms and Models for the
  Web-Graph}, pages 56--67. Springer, 2007.

\bibitem[Mitliagkas et~al.(2013)Mitliagkas, Caramanis, and
  Jain]{mitliagkas2013memory}
Ioannis Mitliagkas, Constantine Caramanis, and Prateek Jain.
\newblock Memory limited, streaming pca.
\newblock In \emph{Advances in neural information processing systems}, pages
  2886--2894, 2013.

\bibitem[Oja(1982)]{oja1982simplified}
Erkki Oja.
\newblock Simplified neuron model as a principal component analyzer.
\newblock \emph{Journal of mathematical biology}, 15\penalty0 (3):\penalty0
  267--273, 1982.

\bibitem[Partridge and Calvo(1998)]{partridge1998fast}
Matthew Partridge and Rafael~A Calvo.
\newblock Fast dimensionality reduction and simple pca.
\newblock \emph{Intelligent data analysis}, 2\penalty0 (3):\penalty0 203--214,
  1998.

\bibitem[Pitaval et~al.(2015)Pitaval, Dai, and
  Tirkkonen]{pitaval2015convergence}
Renaud-Alexandre Pitaval, Wei Dai, and Olav Tirkkonen.
\newblock Convergence of gradient descent for low-rank matrix approximation.
\newblock \emph{IEEE Transactions on Information Theory}, 61\penalty0
  (8):\penalty0 4451--4457, 2015.

\bibitem[Polyak(1964)]{polyak1964some}
Boris~T Polyak.
\newblock Some methods of speeding up the convergence of iteration methods.
\newblock \emph{USSR Computational Mathematics and Mathematical Physics},
  4\penalty0 (5):\penalty0 1--17, 1964.

\bibitem[Quarteroni et~al.(2010)Quarteroni, Sacco, and
  Saleri]{quarteroni2010numerical}
Alfio Quarteroni, Riccardo Sacco, and Fausto Saleri.
\newblock \emph{Numerical mathematics}, volume~37.
\newblock Springer Science \& Business Media, 2010.

\bibitem[Saad(2011)]{saad2011numerical}
Yousef Saad.
\newblock \emph{Numerical methods for large eigenvalue problems: revised
  edition}, volume~66.
\newblock Siam, 2011.

\bibitem[Shamir(2015)]{shamir2015stochastic}
Ohad Shamir.
\newblock A stochastic pca and svd algorithm with an exponential convergence
  rate.
\newblock In \emph{International Conference on Machine Learning}, pages
  144--152, 2015.

\bibitem[Shi and Malik(2000)]{shi2000normalized}
Jianbo Shi and Jitendra Malik.
\newblock Normalized cuts and image segmentation.
\newblock \emph{IEEE Transactions on pattern analysis and machine
  intelligence}, 22\penalty0 (8):\penalty0 888--905, 2000.

\bibitem[Thang et~al.(2013)Thang, Lee, Lee, et~al.]{thang2013deflation}
Nguyen~Duc Thang, Young-Koo Lee, Sungyoung Lee, et~al.
\newblock Deflation-based power iteration clustering.
\newblock \emph{Applied intelligence}, 39\penalty0 (2):\penalty0 367--385,
  2013.

\bibitem[Trefethen and Bau~III(1997)]{trefethen1997numerical}
Lloyd~N Trefethen and David Bau~III.
\newblock \emph{Numerical linear algebra}, volume~50.
\newblock Siam, 1997.

\bibitem[Wang and Ye(2020)]{wang2020stochastic}
Bao Wang and Qiang Ye.
\newblock Stochastic gradient descent with nonlinear conjugate gradient-style
  adaptive momentum, 2020.

\bibitem[Wang and Lu(2017)]{wang2017tensor}
Po-An Wang and Chi-Jen Lu.
\newblock Tensor decomposition via simultaneous power iteration.
\newblock In \emph{Proceedings of the 34th International Conference on Machine
  Learning-Volume 70}, pages 3665--3673. JMLR. org, 2017.

\bibitem[Wen and Yin(2013)]{wen2013feasible}
Zaiwen Wen and Wotao Yin.
\newblock A feasible method for optimization with orthogonality constraints.
\newblock \emph{Mathematical Programming}, 142\penalty0 (1):\penalty0 397--434,
  2013.

\bibitem[Xu et~al.(2018)Xu, Cao, and Gao]{xu2018convergence}
Zhiqiang Xu, Xin Cao, and Xin Gao.
\newblock Convergence analysis of gradient descent for eigenvector computation.
\newblock International Joint Conferences on Artificial Intelligence, 2018.

\bibitem[Yuan and Zhang(2013)]{yuan2013truncated}
Xiao-Tong Yuan and Tong Zhang.
\newblock Truncated power method for sparse eigenvalue problems.
\newblock \emph{Journal of Machine Learning Research}, 14\penalty0 (4), 2013.

\end{thebibliography}

\appendix

\newpage
\section{Preliminaries and Facts}
\label{app-prelim}
\subsection{Vanilla Power Method}
Unless noted otherwise, $\norm{\cdot}$ refers to the 2-norm for vectors and the induced 2-norm for matrices. We recall the vanilla power method algorithm. 
\begin{algorithm}
\caption{Vanilla Power Method}
\label{vanpm}
\begin{algorithmic}[1]
\Require $A\in\mathbb{R}^{d\times d}$ diagonalizable, initial vector $q_0\in\mathbb{R}^d$, error threshold $\epsilon$
\Ensure $\epsilon$-accurate approximation of $v_1,\lambda_1$
\For{$k=1,2,\dots$} \Comment{While $||q_k-v_1||>\epsilon$}
\State $q_k \leftarrow \frac{Aq_{k-1}}{\left\lVert Aq_{k-1}\right\rVert}$
\State $\nu_k \leftarrow q_k^{\top}Aq_k$ \Comment{Rayleigh Quotient}
\EndFor
\Return{$q_k,\nu_k$} 
\end{algorithmic}
\end{algorithm}

Although the termination condition relies on an $\ell_2$ distance from $v_1$, this is often replaced with a sine squared error condition, that is, we exit the loop once $\sin^2(\theta_k)\triangleq 1-(q_k^{\top}v_1)^2>\epsilon$. We make use of both variations throughout this paper.
 \subsection{Power Iteration Bounds}
 The setting for the next two lemmas is the following: we let  $A\in\mathbb{R}^{d\times d}$ be symmetric with spectrum $|\lambda_1|>|\lambda_2|\geq |\lambda_3| \geq \cdots \geq |\lambda_n|$ and associated unit eigenvectors $v_1,v_2,\dots,v_n$. Note that we do not need the presence of the second eigengap $|\lambda_2|>|\lambda_3|$ for these classical results. We will now establish several well-known inequalities regarding the accuracy of the vanilla power iteration.
\begin{lemma}[\cite{quarteroni2010numerical}, p. 194]
\label{vanillal2lemma}
Let $q_0\in\mathbb{R}^d$ such that $|{q_0}^{\top}v_1|\neq 0$. We may write $q_0=\sum_{i=1}^{d}c_iv_i$ since $A$ is diagonalizable (being symmetric). Let $C=\bigl(\sum_{i=1}^{d}\bigl(\frac{c_i}{c_1}\bigr)^2\bigr)^{1/2}$. We have that
\begin{equation}
    \norm{\Tilde{q_k}-v_1}\leq C{\biggl\lvert\frac{\lambda_2}{\lambda_1}\biggr\rvert}^k\hspace{1cm} k\geq 1
\end{equation}
where
\begin{equation}
    \Tilde{q_k}=\frac{q_k}{\norm{A^kq_0}}{\alpha_1{\lambda_1}^k}=v_1+\sum_{i=2}^d\frac{c_i}{c_1}\biggl(\frac{\lambda_i}{\lambda_1}\biggr)^k v_i,\hspace{1cm} k=1,2,\dots
\end{equation}
\end{lemma}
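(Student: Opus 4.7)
The plan is to exploit the eigendecomposition of $A$ directly, since $A$ is symmetric and hence the $\{v_i\}$ form an orthonormal basis of $\mathbb{R}^d$. First I would expand the initial iterate as $q_0=\sum_{i=1}^d c_i v_i$ (guaranteed by symmetry) and note that the non-degeneracy assumption gives $c_1\neq 0$, so dividing by $c_1$ is legal. Applying $A$ repeatedly and using $Av_i=\lambda_i v_i$ yields $A^k q_0 = \sum_{i=1}^d c_i \lambda_i^k v_i$, which after dividing by the scalar $c_1\lambda_1^k$ gives exactly the claimed closed form $\tilde q_k = v_1+\sum_{i=2}^d (c_i/c_1)(\lambda_i/\lambda_1)^k v_i$. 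This pins down what $\tilde q_k$ is and isolates the "error" as $\tilde q_k - v_1 = \sum_{i=2}^d (c_i/c_1)(\lambda_i/\lambda_1)^k v_i$.

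Next I would compute $\|\tilde q_k - v_1\|_2^2$ using Parseval/orthonormality of the eigenbasis, so that the cross terms vanish and
\[
\|\tilde q_k - v_1\|_2^2 \;=\; \sum_{i=2}^d \Bigl(\frac{c_i}{c_1}\Bigr)^{2}\Bigl(\frac{\lambda_i}{\lambda_1}\Bigr)^{2k}.
\]
Then I would factor out $|\lambda_2/\lambda_1|^{2k}$ using the spectral ordering $|\lambda_i|\leq|\lambda_2|$ for all $i\geq 2$, which bounds every $(\lambda_i/\lambda_1)^{2k}$ by $|\lambda_2/\lambda_1|^{2k}$, giving
\[
\|\tilde q_k - v_1\|_2^2 \;\leq\; \Bigl|\tfrac{\lambda_2}{\lambda_1}\Bigr|^{2k}\sum_{i=2}^d \Bigl(\tfrac{c_i}{c_1}\Bigr)^{2}.
\]
Finally, to match the stated constant $C$, I would harmlessly include the $i=1$ term (which contributes $(c_1/c_1)^2=1\geq 0$) so the tail sum is bounded by $C^2 = \sum_{i=1}^d (c_i/c_1)^2$. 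Taking square roots yields the advertised $\|\tilde q_k - v_1\|\leq C|\lambda_2/\lambda_1|^k$.

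There is essentially no serious obstacle here: the whole argument is an orthogonality-plus-geometric-decay computation and the result is standard. The only thing that deserves care is making sure to invoke orthonormality of the eigenbasis (which requires the symmetry of $A$ — a hypothesis inherited from the problem setup) before collapsing the squared norm into a sum of squared coefficients; without symmetry, one would need a coherence-type correction and the clean bound would fail. A minor bookkeeping point, worth flagging but not difficult, is that the definition of $C$ sums from $i=1$ while the error expansion only runs from $i=2$, so the final inequality uses a mild looseness that absorbs the trivial $i=1$ contribution into the constant.
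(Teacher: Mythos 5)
Your argument is correct, and it is the standard textbook derivation for this bound. Note that the paper itself does not prove Lemma~\ref{vanillal2lemma} --- it is cited directly from Quarteroni et al.\ and used as a black box --- so there is no in-paper proof to compare against; your eigendecomposition-plus-orthonormality computation, with the geometric-decay bound $|\lambda_i/\lambda_1|^{2k}\le|\lambda_2/\lambda_1|^{2k}$ for $i\ge 2$ and the harmless absorption of the $i=1$ term into $C^2$, is exactly the reference's argument. One small editorial observation worth making (though it does not affect your proof): the displayed expression $\tilde q_k = \frac{q_k}{\|A^k q_0\|}\alpha_1\lambda_1^k$ in the lemma statement is a typographical artifact inherited from the source --- with $q_k$ already normalized it reads awkwardly, and $\alpha_1$ should be $c_1$ to match the paper's notation --- but your interpretation of $\tilde q_k$ as $A^k q_0/(c_1\lambda_1^k)$ is the intended one and yields the stated closed form.
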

\textit{Remark.} Since $\Tilde{q_k}$ is nothing more than a scaled version of $q_k$ convenient for analysis, as an abuse of notation, where 2-norm inequalites are invoked involving $q_k$ and $v_1$, we assume we are working with $\Tilde{q_k}$. 
\begin{lemma}[\cite{golub2012matrix}, p. 451]
\label{vanillarayleighbound}
 Let $q_0\in\mathbb{R}^n$ such that $|{q_0}^\top v_1|\neq 0$. Let $q^k=\frac{A^kq_0}{\norm{A^kq_0}}$ and $\nu_k={q_k}^\top A{q_k},$ i.e., the basic power iteration approximation of $v_1$ and $\lambda_1$ after $k$ steps. Define $\theta_k \in [0,\pi/2]$ by $\cos(\theta_k)=|{q_k}^\top v_1|$. For $k=0,1,2,\dots$, we have
\begin{align}
|\sin(\theta_k)|&\leq \tan(\theta_0)\left|\frac{\lambda_2}{\lambda_1}\right|^k\\
|\lambda_1-\nu_k|&\leq \underset{2\leq i \leq d}{\max} |\lambda_1-\lambda_i|\tan(\theta_0)^2 \left|\frac{\lambda_2}{\lambda_1}\right|^{2k}
\end{align}
\end{lemma}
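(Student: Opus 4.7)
The plan is to prove both inequalities by expanding $q_0$ in the orthonormal eigenbasis $\{v_i\}$ and tracking how the power iteration damps each non-dominant component geometrically at rate $|\lambda_i/\lambda_1|$. Write $q_0 = \sum_{i=1}^d c_i v_i$ with $c_1 = q_0^\top v_1 \neq 0$, so that $A^k q_0 = \lambda_1^k\bigl(c_1 v_1 + \sum_{i\geq 2} c_i (\lambda_i/\lambda_1)^k v_i\bigr)$. Since $q_k$ is a positive scalar multiple of $A^k q_0$, the angle $\theta_k$ between $q_k$ and $v_1$ is the same as that between $A^k q_0$ and $v_1$, so I can work with the unnormalized vector throughout.

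For the first inequality, I would compute $\sin^2(\theta_k)$ directly from the definition $\sin^2(\theta_k) = 1 - \cos^2(\theta_k)$, which by Parseval gives
\begin{equation}
\sin^2(\theta_k) \;=\; \frac{\sum_{i\geq 2} c_i^2 \lambda_i^{2k}}{\sum_{i\geq 1} c_i^2 \lambda_i^{2k}}.
\end{equation}
Upper-bounding the numerator by pulling out $\lambda_2^{2k}$ and lower-bounding the denominator by the $i=1$ term $c_1^2\lambda_1^{2k}$ yields $\sin^2(\theta_k) \leq (\lambda_2/\lambda_1)^{2k} \cdot \bigl(\sum_{i\geq 2} c_i^2/c_1^2\bigr)$. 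The second factor is exactly $\tan^2(\theta_0)$, since assuming $\|q_0\|=1$ we have $\cos^2(\theta_0)=c_1^2$ and $\sin^2(\theta_0)=\sum_{i\geq 2} c_i^2$. Taking square roots delivers the claim.

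For the Rayleigh-quotient inequality, I would expand $q_k = \sum_i \alpha_i v_i$ (unit vector, so $\sum_i \alpha_i^2 = 1$) and use $\nu_k = \sum_i \alpha_i^2 \lambda_i$. Then
\begin{equation}
\lambda_1 - \nu_k \;=\; \sum_{i\geq 2} \alpha_i^2 (\lambda_1 - \lambda_i),
\end{equation}
so $|\lambda_1 - \nu_k| \leq \max_{2\leq i\leq d}|\lambda_1-\lambda_i| \cdot \sum_{i\geq 2}\alpha_i^2$. The tail sum equals $1 - \alpha_1^2 = \sin^2(\theta_k)$, and plugging in the first inequality finishes the proof. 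The fact that the Rayleigh quotient contracts at twice the rate of the eigenvector error — giving the $2k$ exponent — is an automatic consequence of this quadratic-in-$\sin$ identity, which is really the one nontrivial observation in the argument.

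I do not expect any serious obstacle: both bounds are tight algebraic consequences of orthonormality plus the geometric-series separation between $\lambda_1$ and the remaining $\lambda_i$. The only points requiring minor care are (i) the invariance of the angle under positive rescaling, which allows working with $A^k q_0$ in place of the normalized $q_k$, and (ii) verifying the identification $\sum_{i\geq 2}c_i^2/c_1^2 = \tan^2(\theta_0)$ under the (implicit) unit-norm convention on $q_0$; if $q_0$ is not unit, the same identity holds after dividing through by $\|q_0\|^2$.
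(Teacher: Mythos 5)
Your proof is correct. The paper does not itself prove this lemma --- it cites it directly from Golub and Van Loan (p.~451) without reproducing the argument --- so there is no in-paper proof to compare against. Your argument is the standard textbook one: expand $q_0$ in the eigenbasis, bound $\sin^2(\theta_k)$ by pulling out $\lambda_2^{2k}$ from the numerator and keeping only the $i=1$ term in the denominator, and then pass the resulting $\sin^2(\theta_k)$ bound through the decomposition $\lambda_1-\nu_k=\sum_{i\geq 2}\alpha_i^2(\lambda_1-\lambda_i)$. That second step is exactly the same mechanism the paper uses in its own proof of the more general Lemma~\ref{general_rayleigh_bound}, so your argument is consistent with the paper's style. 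The caveats you flagged --- invariance of the angle under positive rescaling, and that $\tan^2(\theta_0)=\bigl(\sum_{i\geq 2}c_i^2\bigr)/c_1^2$ regardless of whether $q_0$ is normalized (since the $\|q_0\|^2$ factors cancel in the ratio) --- are both handled correctly. No gaps.
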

We now prove a more general, well-known fact: the Rayleigh quotient is a quadratically-accurate estimate when compared to the sine squared error. 
\begin{lemma}
\label{general_rayleigh_bound}
Let $w_k\rightarrow v_j$ for $\{w_k\}_{i=1}^\infty$ a sequence of unit vectors and $v_j$ a unit eigenvector of $A$ associated to eigenvalue $\lambda_j$. Let $\eta_k$ be the Rayleigh quotient of $w_k$ and $\sin^2(\gamma_k)=1-(w_k^\top v_j)^2$. Then
\begin{equation}
|\eta_k-\lambda_i|=\mathcal{O}\bigl(\sin^2(\gamma_k) \bigr)
\end{equation}
\end{lemma}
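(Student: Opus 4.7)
The plan is to decompose $w_k$ along $v_j$ and its orthogonal complement, then expand the Rayleigh quotient and use symmetry of $A$ to kill the cross term. Write $w_k = \cos(\gamma_k)\,v_j + \sin(\gamma_k)\,u_k$, where $u_k$ is a unit vector orthogonal to $v_j$ (this is well-defined up to sign from $|w_k^\top v_j| = \cos(\gamma_k)$). This decomposition is the natural one since $\sin^2(\gamma_k) = 1 - (w_k^\top v_j)^2$ already measures the component of $w_k$ outside the span of $v_j$.

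Next I would expand $\eta_k = w_k^\top A w_k$ using this decomposition:
\begin{equation}
\eta_k = \cos^2(\gamma_k)\, v_j^\top A v_j + 2\cos(\gamma_k)\sin(\gamma_k)\, v_j^\top A u_k + \sin^2(\gamma_k)\, u_k^\top A u_k.
\end{equation}
The first term is $\cos^2(\gamma_k)\lambda_j$. For the cross term, since $A$ is symmetric and $Av_j = \lambda_j v_j$, we get $v_j^\top A u_k = \lambda_j\, v_j^\top u_k = 0$ by orthogonality of $u_k$ and $v_j$. This is the key cancellation that eliminates any $O(\sin(\gamma_k))$ contribution and leaves only quadratic error.

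Using $\cos^2(\gamma_k) = 1 - \sin^2(\gamma_k)$, the expansion simplifies to
\begin{equation}
\eta_k - \lambda_j = \sin^2(\gamma_k)\bigl(u_k^\top A u_k - \lambda_j\bigr),
\end{equation}
so $|\eta_k - \lambda_j| \leq \sin^2(\gamma_k)\,(|u_k^\top A u_k| + |\lambda_j|) \leq 2\|A\|\sin^2(\gamma_k) = \mathcal{O}(\sin^2(\gamma_k))$, where the constant absorbed into the $\mathcal{O}$ is spectral (bounded by $2\|A\|$, or more tightly by $\max_i|\lambda_i - \lambda_j|$ if one tracks the best bound on $u_k^\top A u_k - \lambda_j$).

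There is no real obstacle here; the only subtle point is ensuring $u_k$ is well-defined when $\sin(\gamma_k) = 0$, but in that case $w_k = \pm v_j$ and both sides of the inequality are zero, so the bound holds trivially. The proof is essentially a two-line computation once the orthogonal decomposition is in place, and it mirrors the specialized bound already stated in Lemma \ref{vanillarayleighbound} for the dominant eigenvector, generalized here to any eigenpair and any convergent sequence (not just iterates of the power method).
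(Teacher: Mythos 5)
Your proof is correct and takes essentially the same approach as the paper's: the paper expands $w_k = \sum_i c_i v_i$ in the full eigenbasis and notes that the $i=j$ term drops out of $\eta_k - \lambda_j$, while you collapse the orthogonal complement into a single unit vector $u_k$, but the key mechanism (vanishing cross term by orthogonality and symmetry, leaving a weight of $1 - (w_k^\top v_j)^2 = \sin^2(\gamma_k)$) and the resulting constant $\max_{i\neq j}|\lambda_i - \lambda_j|$ are identical.
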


\begin{proof}
Express $w_k=\sum_{i=1}^d c_iv_i$, i.e., as a linear combination in the eigenbasis of $A$. 
We have then that 
\begin{equation}
    \eta_k-\lambda_j=w_k^\top Aw_k-\lambda_j=\frac{\sum_{i=1}^d c_i^2\lambda_i}{\sum_i^d c_i^2}-\lambda_j=\frac{\sum_{i=1}^d(\lambda_i-\lambda_j)c_i^2}{\sum_{i=1}^d c_i^2}
    \leq\underset{i\neq j}{\max}|\lambda_j-\lambda_i|\sum_{i\neq j}^d c_i^2
\end{equation}
Since $\sum_{i=1}^d c_i^2=1$ ($w_k$ is unit), we have that $\sum\limits_{\substack{i=1 \\ i\neq j}}^d c_i^2=1-c_j^2=1-(w_k^{\top}v_j)^2=\sin^2(\gamma_j)$. Therefore,
\begin{equation}
    |\eta_k-\lambda_j|\leq \underset{i\neq j}{\max}|\lambda_j-\lambda_i|\sin^2(\gamma_k)
\end{equation}
proving our claim.
\end{proof}

\section{Noisy Deflation}\label{app:noise-bounds}
The following results will help us determine how many steps of inexact deflation we need to complete before we achieve small enough noise at each iteration to permit convergence. As usual, we let $\nu_k$ and $q_k$ reflect our eigenvalue and eigenvector approximation at step $k$ of a vanilla power iteration with matrix $A$. We first recall the formulation of a \textit{noisy power method} (NPM). We consider an alternative update step of Algorithm \ref{vanpm}:
\begin{equation}
    q_{k}=\frac{Aq_{k-1}+G_{k-1}}{\norm{Aq_{k-1}+G_{k-1}}}
\end{equation}
Here, $G_{k-1}$ is noise or a perturbation added at each round. The following is a powerful result on the conditions under which NPM can successfully converge:

\begin{corollary}[Noisy Power Method Convergence \cite{hardt2014noisy}]
Let $k\leq p$. Let $U\in\mathbb{R}^{d\times k}$ represent the top $k$ singular vectors of $B\in\mathbb{R}^{d\times d}$ and let $\sigma_1\geq \sigma_2\geq \cdots \geq \sigma_d$ denote its singular values. Suppose $X_0$ is an orthonormal basis of a random $p$-dimensional subspace. Further, suppose at every step of NPM we have
\begin{equation}
\label{hardt-price-bounds}
5||G_{\ell}||\leq \epsilon(\lambda_k-\lambda_{k-1})\textrm{\hspace{0.25cm}and\hspace{0.25cm}} 5||U^{\top}G_{\ell}||\leq (\lambda_k-\lambda_{k-1})\frac{\sqrt{p}-\sqrt{k-1}}{\tau\sqrt{d}}
\end{equation}
for some fixed parameter $\tau$ and $\epsilon<1/2$. Then with all but $\tau^{\Omega(p+1-k)}+e^{-\Omega(d)}$ probability, there exists an $L=\mathcal{O}\bigl(\frac{\lambda_k}{\lambda_k-\lambda_{k-1}}\log\frac{d\tau}{\epsilon}\bigr)$ so that after $L$ steps we have $\norm{(I-X_LX_L^{\top})U}\leq \epsilon$.
\end{corollary}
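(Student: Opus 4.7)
The plan is to track the principal angle between the iterate subspace $X_k$ and the target top-$k$ subspace $U$, specifically through the tangent-like quantity $\tan\theta_k := \|U_\perp^{\top} X_k \,(U^{\top} X_k)^{-1}\|$, where $U_\perp$ spans the orthogonal complement of $U$. The goal is to derive a one-step recursion of the form $\tan\theta_{k+1} \leq r\,\tan\theta_k + \text{(noise)}$ with contraction factor $r \approx \sigma_{k+1}/\sigma_k$, and then to telescope over $L$ iterations.

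First I would analyze the unperturbed step: multiplying $X_k$ by $B$ and then re-orthogonalizing scales $U$-components by factors at least $\sigma_k$ and $U_\perp$-components by factors at most $\sigma_{k+1}$, so without noise the tangent contracts by $\sigma_{k+1}/\sigma_k$. Then I would introduce the additive perturbation $G_\ell$ and apply standard matrix perturbation bounds for the pseudoinverse $(U^{\top}(BX_\ell+G_\ell))^{-1}$. This yields a per-step inequality of the shape $\tan\theta_{\ell+1} \leq (\sigma_{k+1}/\sigma_k)\tan\theta_\ell + c\,\|G_\ell\|/(\sigma_k-\sigma_{k+1})$, in which both the numerator perturbation and the inverse perturbation in the denominator contribute to the noise term.

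Next I would handle the random initialization. For a Haar-distributed orthonormal $p$-frame $X_0$, standard Gaussian-matrix concentration estimates give $\sigma_{\min}(U^{\top} X_0) \gtrsim (\sqrt{p} - \sqrt{k-1})/(\tau \sqrt{d})$ with probability at least $1 - \tau^{-\Omega(p+1-k)} - e^{-\Omega(d)}$, so that $\tan\theta_0 \lesssim \tau\sqrt{d}/(\sqrt{p}-\sqrt{k-1})$. The second hypothesized noise bound, $5\|U^{\top}G_\ell\| \leq (\lambda_k-\lambda_{k-1})(\sqrt{p}-\sqrt{k-1})/(\tau\sqrt{d})$, is calibrated exactly against this scale, guaranteeing that the component of the noise lying in $U$ cannot destroy the initial alignment before contraction takes over.

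Finally I would iterate and telescope. After $L$ steps one obtains $\tan\theta_L \leq (\sigma_{k+1}/\sigma_k)^L \tan\theta_0 + \sum_{\ell<L}(\sigma_{k+1}/\sigma_k)^{L-1-\ell} \cdot c\|G_\ell\|/(\sigma_k-\sigma_{k+1})$. Choosing $L = \mathcal{O}\bigl(\frac{\lambda_k}{\lambda_k-\lambda_{k-1}}\log(d\tau/\epsilon)\bigr)$ drives the geometric term below $\epsilon$; the first hypothesized noise bound $5\|G_\ell\| \leq \epsilon(\lambda_k-\lambda_{k-1})$ then makes the summed noise contribution also $\mathcal{O}(\epsilon)$. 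The main obstacle, and the reason both noise conditions must appear together, is ensuring that $\sigma_{\min}(U^{\top}X_\ell)$ stays bounded away from zero throughout the run: the crude bound controls it only in terms of $\|G_\ell\|$, which is not sharp enough in the early iterations before geometric contraction has had time to assert itself, so one has to maintain an inductive invariant combining the refined $\|U^{\top}G_\ell\|$ bound with the initialization estimate to show the denominator persists at the order set at step zero. Converting the final tangent bound into $\|(I-X_LX_L^{\top})U\|\leq\epsilon$ is then a routine application of the basic relation between sines of principal angles and projector distances.
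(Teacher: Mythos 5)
This corollary is cited directly from Hardt and Price (2014, Corollary 1.1) and is not proved anywhere in the present paper, so there is no in-paper argument to compare against. Your sketch correctly reproduces the standard Hardt--Price argument in outline: tracking $\tan\theta_\ell = \|U_\perp^\top X_\ell (U^\top X_\ell)^{-1}\|$, extracting a contraction ratio of order $\sigma_{k+1}/\sigma_k$, controlling the initial alignment via $\sigma_{\min}(U^\top X_0) \gtrsim (\sqrt{p}-\sqrt{k-1})/(\tau\sqrt{d})$ with the stated failure probability, and maintaining a lower bound on $\sigma_{\min}(U^\top X_\ell)$ inductively through the separate $\|U^\top G_\ell\|$ hypothesis.

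Two caveats. First, the per-step recursion as you wrote it, $\tan\theta_{\ell+1} \le (\sigma_{k+1}/\sigma_k)\tan\theta_\ell + c\|G_\ell\|/(\sigma_k - \sigma_{k+1})$, is mis-scaled: a direct computation using $U_\perp^\top X_{\ell+1}(U^\top X_{\ell+1})^{-1} = U_\perp^\top(BX_\ell+G_\ell)\bigl(U^\top(BX_\ell+G_\ell)\bigr)^{-1}$ gives a noise increment of roughly $\|G_\ell\|/(\sigma_k\cos\theta_\ell)$, and the spectral gap $\sigma_k - \sigma_{k+1}$ only enters later, after summing the geometric series via $(1-\sigma_{k+1}/\sigma_k)^{-1}$. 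You already flag that the invariant on $\cos\theta_\ell$ is the key obstruction, but that invariant has to be in place before you can even write the one-step inequality, not merely when telescoping. Second, Hardt and Price sidestep the additive bookkeeping entirely: their per-step lemma yields a multiplicative-with-floor bound of the form $\tan\theta_{\ell+1} \le \max\bigl\{\epsilon,\ \max\{\epsilon,(\sigma_{k+1}/\sigma_k)^{1/4}\}\tan\theta_\ell\bigr\}$, which trades a slightly worse contraction base (the $1/4$ power) for the guarantee that the angle can never re-inflate once it drops below $\epsilon$. Both routes reach the same $L = \mathcal{O}\bigl(\tfrac{\sigma_k}{\sigma_k - \sigma_{k+1}}\log\tfrac{d\tau}{\epsilon}\bigr)$, but the max-floor form is cleaner to iterate. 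Finally, the corollary as reproduced in this paper writes the gap as $\lambda_k - \lambda_{k-1}$, which is nonpositive for $k\ge 2$; this is a transcription error for $\sigma_k - \sigma_{k+1}$, which your sketch implicitly and correctly uses.
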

In the context of Theorem \ref{thm:main-DMPM}, we have that $B=A-\lambda_1v_1v_1^{\top}$, $X_0=w_0$, $U=v_2$, and $G_{\ell}$ is the error $||\lambda_1v_1v_1^{\top}-\nu_{\ell}q_{\ell}q_{\ell}^{\top}||$, that is, the "inexactness" of our deflation, and $d$ is the dimension. We also note that in relation to $A$, the deflated matrix $B$ has spectrum $\lambda_2 > \lambda_3 \geq \cdots \lambda_{n-1} \geq \lambda_n=0$. In this setting, we have that $\norm{(I-X_LX_L^{\top})U}=\norm{(I-w_Lw_L^{\top})v_2)}=\sqrt{1-(w_Lv_2^{\top})^2}$, i.e., the sine error between $v_2$ and $w_L$. We will now provide an upper bound on $||G_{\ell}||$ and show it decays with every round. For notational convenience, we denote $\theta_0:= \arccos{|q_0^\top v_1|}$. 

\begin{lemma}
\label{power-perturbation}
 If we express $A-{\nu_k}q_kq_k^{\top}$ as $A-\lambda_1v_1v_1^{\top}+G_k$, where $G_k=\lambda_1v_1v_1^{\top}-\nu_{\ell}q_{\ell}q_{\ell}^{\top}$ is a perturbation reflecting the error of our eigenpair approximation, then $\norm{G_k}=\mathcal{O}(\tan^2 \theta_0{\lvert\frac{\lambda_2}{\lambda_1}\rvert}^k)$.
\end{lemma}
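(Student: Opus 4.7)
The plan is to reduce $\|G_k\|$ to two scalar quantities that are already controlled by the classical power method bounds in Lemma~\ref{vanillarayleighbound}. First I would add and subtract $\lambda_1 q_k q_k^\top$ inside $G_k$ to split it as
\[
G_k \;=\; \lambda_1 \bigl(v_1 v_1^\top - q_k q_k^\top\bigr) + (\lambda_1 - \nu_k)\, q_k q_k^\top,
\]
and then apply the triangle inequality together with $\|q_k q_k^\top\|=1$ to get $\|G_k\| \leq \lambda_1\, \|v_1 v_1^\top - q_k q_k^\top\| + |\lambda_1 - \nu_k|$. This cleanly separates the angular error from the Rayleigh quotient error.

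The next step would be to handle each piece. For the projector difference I would invoke the standard identity $\|vv^\top - qq^\top\|_{\mathrm{op}} = \sin\theta(v,q)$ for unit vectors $v,q$. A short justification suffices: writing $q_k = \cos\theta_k\, v_1 + \sin\theta_k\, u$ with unit $u\perp v_1$, the matrix $v_1 v_1^\top - q_k q_k^\top$ is supported on the $2$-dimensional span of $\{v_1,u\}$ and, in that basis, is the $2\times 2$ symmetric traceless matrix $\bigl(\begin{smallmatrix} \sin^2\theta_k & -\sin\theta_k\cos\theta_k \\ -\sin\theta_k\cos\theta_k & -\sin^2\theta_k \end{smallmatrix}\bigr)$, whose determinant is $-\sin^2\theta_k$ and whose eigenvalues are therefore $\pm\sin\theta_k$. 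The scalar $|\lambda_1-\nu_k|$ is controlled directly by Lemma~\ref{vanillarayleighbound}.

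Substituting the two inequalities from Lemma~\ref{vanillarayleighbound},
\[
\sin\theta_k \;\leq\; \tan\theta_0\,\Bigl|\tfrac{\lambda_2}{\lambda_1}\Bigr|^k, \qquad |\lambda_1-\nu_k| \;\leq\; \max_{2\leq i\leq d}|\lambda_1-\lambda_i|\cdot \tan^2\theta_0\,\Bigl|\tfrac{\lambda_2}{\lambda_1}\Bigr|^{2k},
\]
yields
\[
\|G_k\| \;\leq\; \lambda_1\tan\theta_0\,\Bigl|\tfrac{\lambda_2}{\lambda_1}\Bigr|^k + \max_{i\geq 2}|\lambda_1-\lambda_i|\cdot\tan^2\theta_0\,\Bigl|\tfrac{\lambda_2}{\lambda_1}\Bigr|^{2k} \;=\; \mathcal{O}\!\left(\tan^2\theta_0\,\Bigl|\tfrac{\lambda_2}{\lambda_1}\Bigr|^k\right),
\]
where the last equality absorbs the spectral constants (bounded by $\lambda_1\leq 1$ and $\max_i|\lambda_1-\lambda_i|\leq 1$) and the difference between $\tan\theta_0$ and $\tan^2\theta_0$ into the $\mathcal{O}$-constant, which is legitimate for any fixed nontrivial initialization $q_0$.

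I do not anticipate a serious obstacle. The only nontrivial observation is the rank-one projector norm identity above; once that is in hand, the bound follows mechanically from Lemma~\ref{vanillarayleighbound}. A small subtlety worth flagging is that the angular error decays at rate $|\lambda_2/\lambda_1|^k$ while the Rayleigh quotient error decays twice as fast at $|\lambda_2/\lambda_1|^{2k}$, so the projector term dominates asymptotically and sets the geometric rate reported in the lemma; this is exactly what downstream arguments need in order to verify the Hardt--Price noise condition \eqref{hardt-price-bounds} after $J_1 = \mathcal{O}\!\left(\tfrac{1}{\lambda_1-\lambda_2}\log\tfrac{\tan^2\theta_0}{\delta(\lambda_2-\lambda_3)}\right)$ pre-momentum rounds.
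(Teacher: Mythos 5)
Your proof is correct, and it takes a genuinely different and arguably cleaner route than the paper's. The paper writes $q_k = v_1 + \xi_k$ and $\nu_k = \lambda_1 + \gamma_k$, expands the rank-one product $(\lambda_1+\gamma_k)(v_1+\xi_k)(v_1+\xi_k)^\top$ into eight cross terms, bounds $\|\xi_k\|$ via the $2$-norm estimate of Lemma~\ref{vanillal2lemma} (which uses the scaled iterate $\tilde q_k$ and carries the constant $C = \bigl(\sum_i (c_i/c_1)^2\bigr)^{1/2}$), and combines with the Rayleigh-quotient bound of Lemma~\ref{vanillarayleighbound}. You instead split $G_k = \lambda_1\bigl(v_1v_1^\top - q_kq_k^\top\bigr) + (\lambda_1-\nu_k)q_kq_k^\top$, invoke the exact identity $\|v_1v_1^\top - q_kq_k^\top\|_{\mathrm{op}} = \sin\theta_k$ (correctly justified by your $2\times 2$ block computation), and read off both bounds from Lemma~\ref{vanillarayleighbound} alone. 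What your version buys: it works directly with the unit-normalized iterate (sidestepping the paper's ``abuse of notation'' remark about $\tilde q_k$), it avoids the proliferation of cross terms, and it keeps only two clean pieces, making the dominant $(\lambda_2/\lambda_1)^k$ rate transparent. The one wrinkle you flag --- the leading term carries $\tan\theta_0$ rather than $\tan^2\theta_0$ --- is in fact also latent in the paper's proof, whose dominant term carries $C = \sec\theta_0$ (not $\tan^2\theta_0$ either); in both cases the discrepancy is benign because $\tan\theta_0$ and $\sec\theta_0$ are each $\mathcal{O}(1+\tan^2\theta_0)$, so the stated $\mathcal{O}(\tan^2\theta_0\,|\lambda_2/\lambda_1|^k)$ is a valid (if not the tightest possible) upper bound, and it is the $|\lambda_2/\lambda_1|^k$ geometric rate that the downstream Hardt--Price verification actually consumes.
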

\begin{proof}
Let $q_k=v_1+\xi_k$ and $\nu_k=\lambda_1+\gamma_k$, that is, $\xi_k$ and $\gamma_k$ reflect the perturbations (error) associated with our power iterates. We may express our inexact deflation matrix as follows: 
\begin{equation}
    A-\nu_k{q_k}{q_k}^\top=A-(\lambda_1+\gamma_k)(v_1+\xi_k)(v_1+\xi_k)^\top.
\end{equation}
Expanding, we have that our deviation from $A-\lambda_1v_1v_1^{\top}$ is expressible as
\begin{equation}
    G_k = \lambda_1(v_1\xi_k^\top+\xi_kv_1^{\top}+\xi_k\xi_k^{\top})+\gamma_k(v_1v_1^{\top}+v_1\xi_k^\top+\xi_kv_1^{\top}+\xi_k\xi_k^{\top}).
\end{equation}
\begin{align}
    ||G_k||\leq |\lambda_1|(||v_1\xi_k^\top||+||\xi_kv_1^{\top}||+||\xi_k\xi_k^{\top}||)+|\nu_k|(||v_1v_1^{\top}||+||v_1\xi_k^\top||+||\xi_kv_1^{\top}||+||\xi_k\xi_k^{\top}||)
\end{align}
Express the initial vector $q_0$ of the power iteration in the eigenbasis of $A$: $v_0=\sum_{i=1}^n c_i v_i$. For the next step, we recall by the previous two lemmas that
\begin{equation*}
|\lambda_1-\nu_k|\leq \underset{2\leq i \leq d}{\max}|\lambda_1-\lambda_i|\tan^2(\theta_0)\left|\frac{\lambda_2}{\lambda_1}\right|^{2k}
\end{equation*}
and 
\begin{equation*}
||q_k-v_1||\leq \left|\frac{\lambda_2}{\lambda_1}\right|^k \bigl(\sum_{i=1}^{d}(\frac{c_i}{c_1})^2\bigr)^{1/2}.
\end{equation*}
For brevity, we denote $C=\bigl(\sum_{i=1}^{d}[\frac{c_i}{c_1}]^2\bigr)^{1/2}$ and $D=\underset{2\leq i \leq d}{\max}|\lambda_1-\lambda_i|\tan^2(\theta_0)$. Now, it also well known that for $u, v \in \mathbb{R}^l$ that $||uv^t||=|u^{\top}v|$. Therefore, in conjunction with application of the Cauchy-Schwarz inequality, we have that
\begin{align}
||v_1v_1^{\top}||&=1\\
||v_1\xi_k^{\top}||=||\xi_kv_1^{\top}||=|v_1^\top \xi_k|\leq |\xi_k|&\leq C \left|\frac{\lambda_2}{\lambda_1}\right|^k\\
||\xi_k\xi_k^{\top}||=|\xi_k^{\top}\xi_k|=|\xi_k|^2 &\leq C^2\left|\frac{\lambda_2}{\lambda_1}\right|^{2k}
\end{align}
Noting that $\lambda_1\leq 1$, we have then that
\begin{align}
    ||G_k||\leq 2C\left|\frac{\lambda_2}{\lambda_1}\right|^k+C^2\left|\frac{\lambda_2}{\lambda_1}\right|^{2k}+D\left|\frac{\lambda_2}{\lambda_1}\right|^{2k}(1+2C\left|\frac{\lambda_2}{\lambda_1}\right|^k+C^2\left|\frac{\lambda_2}{\lambda_1}\right|^{2k}).
\end{align}
Thus, $||G_k||=\mathcal{O}(\tan^2(\theta_0){\lvert\frac{\lambda_2}{\lambda_1}\rvert}^k)$.
\end{proof}

\begin{lemma}
\label{rayleighdeflate}
Let $v,w\in\mathbb{R}^d$ be of unit length, $A\in\mathbb{R}^{d\times d}$. We have then that
\begin{equation}
    |v^{\top}Aw|\leq ||A||_F
\end{equation}
\end{lemma}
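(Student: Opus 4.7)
The plan is to reduce the bilinear form $v^\top A w$ to a bound in terms of the operator norm of $A$, and then invoke the classical inequality $\|A\|_{\mathrm{op}} \le \|A\|_F$. Specifically, the first step is to apply the Cauchy--Schwarz inequality to the inner product $\langle v, Aw\rangle$: since $v$ is a unit vector, we have
\begin{equation}
|v^\top A w| \;=\; |\langle v, Aw\rangle| \;\le\; \|v\|\cdot\|Aw\| \;=\; \|Aw\|.
\end{equation}

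Next, I would bound $\|Aw\|$ using the definition of the operator (induced $2$-)norm. Since $w$ is unit, $\|Aw\| \le \|A\|_{\mathrm{op}}\|w\| = \|A\|_{\mathrm{op}}$. At this point it remains only to relate $\|A\|_{\mathrm{op}}$ to $\|A\|_F$, and here I would appeal to the standard fact that the largest singular value of $A$ is bounded by the square root of the sum of squared singular values, i.e., $\|A\|_{\mathrm{op}} = \sigma_1(A) \le \sqrt{\sum_i \sigma_i(A)^2} = \|A\|_F$. Chaining these three inequalities gives the claim.

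As an alternative (and completely elementary) route that avoids citing the $\|\cdot\|_{\mathrm{op}}\le\|\cdot\|_F$ inequality, one can expand $v^\top A w = \sum_{i,j} v_i A_{ij} w_j$ and apply Cauchy--Schwarz to the double sum viewed as the inner product of $(A_{ij})$ with $(v_i w_j)$ in $\mathbb{R}^{d^2}$, obtaining
\begin{equation}
|v^\top A w| \;\le\; \Bigl(\sum_{i,j} A_{ij}^2\Bigr)^{1/2}\Bigl(\sum_{i,j} v_i^2 w_j^2\Bigr)^{1/2} \;=\; \|A\|_F \cdot \|v\|\,\|w\| \;=\; \|A\|_F.
\end{equation}

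There is no real obstacle here --- the statement is a textbook consequence of Cauchy--Schwarz --- and the only thing to be careful about is making explicit which norm is which (the 2-norm on vectors versus the Frobenius norm on the matrix) so that the chain of inequalities is unambiguous. I would present the Cauchy--Schwarz-in-$\mathbb{R}^{d^2}$ route as the main proof since it is self-contained and does not require first establishing the operator-norm inequality as a separate lemma.
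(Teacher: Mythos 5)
Your first chain of inequalities---Cauchy--Schwarz on $\langle v, Aw\rangle$, then $\|Aw\|\le\|A\|_{\mathrm{op}}$, then $\|A\|_{\mathrm{op}}\le\|A\|_F$---is essentially verbatim the paper's proof, which applies Cauchy--Schwarz and sub-multiplicativity of the induced $2$-norm to get $|v^\top A w|\le\|v\|\,\|A\|_2\,\|w\|=\|A\|_2\le\|A\|_F$. Your preferred alternative, applying Cauchy--Schwarz directly in $\mathbb{R}^{d^2}$ to $\sum_{i,j}A_{ij}v_iw_j$, is a genuinely different and more self-contained route: it sidesteps both the operator-norm definition and the singular-value inequality $\|A\|_{\mathrm{op}}\le\|A\|_F$, delivering the Frobenius bound in a single step. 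The trade-off is that the paper's route makes the intermediate (and in fact tighter) bound $|v^\top A w|\le\|A\|_2$ visible, which is the bound one would typically want to quote; the $\mathbb{R}^{d^2}$ route goes straight to the weaker Frobenius estimate. Both are correct, and either would serve the paper's purposes since the lemma is only ever invoked in its Frobenius form.
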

\begin{proof}
Applying the Cauchy-Schwarz inequality and sub-multiplicativity of the induced 2-norm for matrices,
\begin{equation*}
    |v^{\top}Aw|\leq ||v||\cdot||A||\cdot||w||_2=||A||_2\leq ||A||_F.
\end{equation*}
\end{proof}
We may establish how many initial steps of inexact deflation we need to run before it is set on a path towards convergence. Our setting is the same as in the previous lemmas. 
\begin{theorem}
\label{hardt-price-boundsthm}
Let $A\in\mathbb{R}^{d\times d}$ be symmetric PSD and $w_0$ be the initial unit vector for inexact deflation such that ${w_0}^{\top}x_2\neq 0$. Fix $\tau>1$, $\rho<1/2$ and $\delta=\min\{\rho,\frac{1}{\tau\sqrt{d}}\}$. Then after $T=\mathcal{O}\bigl(\frac{1}{\lambda_1-\lambda_2}\log(\frac{\tan^2 \theta_0}{\delta(\lambda_2-\lambda_3)})\bigr)$ steps, the perturbations $G_k$ achieve the Hardt-Price bounds in Equation \ref{hardt-price-bounds}. 
\end{theorem}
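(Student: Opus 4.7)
The strategy is to instantiate the Hardt--Price convergence condition in our particular setting, then apply the perturbation bound of Lemma \ref{power-perturbation} and solve for $T$. In our context we deflate $A$ with $\nu_k q_k q_k^\top$, so the effective noisy-power-method matrix is $B = A - \lambda_1 v_1 v_1^\top$, whose top eigenvector is $v_2$ with eigengap $\lambda_2 - \lambda_3$. We are extracting a single vector ($k=1$, $p=1$), so $\sqrt{p} - \sqrt{k-1} = 1$, and $U = v_2$. Thus the Hardt--Price conditions (Equation \ref{hardt-price-bounds}) reduce to
\begin{equation*}
5\|G_\ell\| \leq \rho(\lambda_2-\lambda_3) \quad \text{and} \quad 5\|v_2^\top G_\ell\| \leq \frac{\lambda_2-\lambda_3}{\tau\sqrt{d}}.
\end{equation*}

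The plan is to observe that since $\delta = \min\{\rho, 1/(\tau\sqrt{d})\}$ satisfies both $\delta \le \rho$ and $\delta \le 1/(\tau\sqrt{d})$, it suffices to enforce the single stronger condition
\begin{equation*}
5\|G_\ell\| \leq \delta(\lambda_2-\lambda_3),
\end{equation*}
because the spectral-norm bound $\|v_2^\top G_\ell\| \leq \|G_\ell\|$ then makes the second inequality automatic. This collapses the proof to a single scalar inequality on $\|G_\ell\|$.

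Next I would invoke Lemma \ref{power-perturbation}, which gives $\|G_k\| = \mathcal{O}\bigl(\tan^2\theta_0 \cdot |\lambda_2/\lambda_1|^k\bigr)$, so there is a universal constant $c > 0$ with $\|G_k\| \leq c \tan^2\theta_0 \cdot |\lambda_2/\lambda_1|^k$ for all sufficiently large $k$. Setting this upper bound less than $\delta(\lambda_2-\lambda_3)/5$ and solving:
\begin{equation*}
k \;\geq\; \frac{1}{\log(\lambda_1/\lambda_2)} \cdot \log\!\left(\frac{5c\,\tan^2\theta_0}{\delta(\lambda_2-\lambda_3)}\right).
\end{equation*}

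Finally, I would use the standard estimate $\log(\lambda_1/\lambda_2) = -\log\bigl(1 - (\lambda_1-\lambda_2)/\lambda_1\bigr) \geq (\lambda_1-\lambda_2)/\lambda_1 \geq \lambda_1 - \lambda_2$ (since $\lambda_1 \leq 1$), which absorbs the prefactor into the asymptotic notation and yields
\begin{equation*}
T = \mathcal{O}\!\left(\frac{1}{\lambda_1-\lambda_2}\log\frac{\tan^2\theta_0}{\delta(\lambda_2-\lambda_3)}\right),
\end{equation*}
as claimed. The only mildly delicate part is being careful that the $\mathcal{O}(\cdot)$ hidden in Lemma \ref{power-perturbation} and the constants $5$, $c$ can all be swept into the asymptotic notation without any dependence on the spectrum beyond what appears in the final bound; this is routine but worth checking, since $C = \sec\theta_0$ and $D \leq \tan^2\theta_0$ in that lemma's proof should be tracked to confirm the $\tan^2\theta_0$ factor in the numerator is the tightest one available.
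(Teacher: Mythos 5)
Your proposal is correct and takes essentially the same route as the paper: cast the inexact deflation as a noisy power iteration on $B = A - \lambda_1 v_1 v_1^\top$, bound $\|G_k\|$ via Lemma \ref{power-perturbation}, collapse the two Hardt--Price conditions using $\|v_2^\top G_k\| \leq \|G_k\|$ together with the definition of $\delta$, and solve the geometric-decay inequality for $k$. You are slightly more explicit than the paper in writing out the estimate $\log(\lambda_1/\lambda_2) \geq \lambda_1 - \lambda_2$ (valid since $\lambda_1 \leq 1$), but the substance is identical.
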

\begin{proof}
Let $B=A-\lambda_1v_1{v_1}^{\top}$. For $k=1,2,\dots$ we have that step $k$ in the pre-momentum phase involves computing $q_k=\frac{A^kq_{k-1}}{\norm{A^kq_{k-1}}},\nu_k=q_k^{\top}Aq_k$ and then an inexact deflation $(A-\nu_kq_k^{\top}q_k)w_{k-1}$. We have then that inexact deflation at step $k$ is representable as
\begin{equation}
(B+G_k)w_{k-1}=Bw_{k-1}+G_kw_{k-1}
\end{equation}
where $G_k$ is associated with the error $\norm{\lambda_1v_1{v_1}^{\top}-\nu_kq_k{q_k}^{\top}}$. By Lemma \ref{power-perturbation}, the fact the $w_k$ are unit, and sub-multiplicativity, we have that
\begin{equation}
||G_kw_{k-1}||\leq ||G_k||\cdot||w_{k-1}||=||G_k||= \mathcal{O}\Bigl(\tan^2\theta_0\left|\frac{\lambda_2}{\lambda_1}\right|^k\Bigr).
\end{equation}
Reminding ourselves of the Hardt-Price bounds, we need any given perturbation to satisfy
\begin{equation}
    5||G_k||\leq \rho(\lambda_2-\lambda_3) \hspace{0.5cm} and \hspace{0.5cm} 5||G_k||\leq (\lambda_2-\lambda_3)\frac{1}{\tau\sqrt{d}}
\end{equation}
Solving for $k$ such that $\tan^2\theta_0\left|\frac{\lambda_2}{\lambda_1}\right|^k$ satisfies these bounds, we arrive at our claim.

\end{proof}
\section{Proof of Theorem~\ref{thm:main-DMPM} }\label{app:proof-dmpm}
We first outline the setting before proving our main result. We let $A\in\mathbb{R}^{d\times d}$ be symmetric PSD with spectrum
\begin{equation}
1\geq\lambda_1>\lambda_2 >\lambda_3 \geq \cdots \geq \lambda_d \geq 0. 
\end{equation}
  Distinct to our setting is the presence of a positive eigengap between $\lambda_2$ and $\lambda_3$. We select $q_0=\sum_{i=1}^n c_iv_i$ such that $|v_1^{\top}q_0|\neq 0$, which will be used for the vanilla and momentum power iterations, and $w_0=\sum_{i=1}^{d}b_iv_i$ such that $|{v_2}^{\top}w_0|\neq 0$, which will be used for inexact deflation. Let $\theta_0=\arccos{|q_0^\top v_1|}$. Lastly, in the first for-loop we add the theoretical termination condition $\texttt{while } |\lambda_2-\mu_k|>\rho$.
  
\begin{theorem}[Restating of Theorem~\ref{thm:main-DMPM}]\label{thm:main-DMPM-again}
Let $J$ represent the number of steps in the pre-momentum phase and $K$ the number of steps in the momentum phase as in Algorithm \ref{alg:dmpm}. Let $\epsilon<1$ represent the desired error threshold of our $v_1$ estimates, i.e., $\sin^2\theta(q_t,v_1)<\epsilon$ and $\rho<\min\{1/2,\sqrt{\frac{\lambda_1-\lambda_2}{\lambda_2-\lambda_d}}\}$ represent the desired error threshold of our $\lambda_2$ estimates, i.e., $|\mu_k-\lambda_2|<\rho$. Select unit $q_0\in\mathbb{R}^d$ and $w_0\in\mathbb{R}^d$. Further fix $\tau>1$ and $\delta=\min\{\rho,\frac{1}{\tau \sqrt{d}}\}$. Then after
\begin{align}
J&=\mathcal{O}\bigl(\frac{1}{\lambda_1-\lambda_2}\log\frac{\tan^2\theta_0}{\delta(\lambda_2-\lambda_3)}+\frac{\lambda_2}{\lambda_2-\lambda_3}\log\frac{d\tau}{\rho}\bigr),\\ K&=\mathcal{O}\bigl(\frac{\beta}{\sqrt{\lambda_1^2-4\beta^2}}\log\frac{1}{\epsilon}\bigr) 
\end{align}
pre-momentum and momentum steps, respectively, where $\beta = {\widehat{\lambda}_2}^2/4 = \mu_J^2/4$, with all but $\tau^{-\Omega(1)}+e^{-\Omega(d)}$ probability, \emph{DMPower} outputs a vector $q_{K}$ with 
\begin{equation}
 \sin^2 \theta(q_{K},v_1) <\epsilon. 
\end{equation}
\end{theorem}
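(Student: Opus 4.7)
The plan is to split the analysis exactly along the algorithmic boundary: first bound the number of iterations needed in the pre-momentum phase so that $|\mu_J - \lambda_2| < \rho$ (which in turn places $\beta = \mu_J^2/4$ in the convergence window $[\lambda_2^2/4, \lambda_1^2/4)$ of Theorem \ref{PowerMThm}), then invoke the Power+M guarantee on the momentum phase with $q_J$ serving as a warm-started initial vector. Since $\rho < \sqrt{(\lambda_1-\lambda_2)/(\lambda_2-\lambda_d)}$ implies $\rho^2(\lambda_2-\lambda_d) < \lambda_1-\lambda_2$, the Rayleigh-quotient-to-eigenvalue conversion below will guarantee that $\beta = \mu_J^2/4$ lies safely in the convergence zone.

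\textbf{Pre-momentum phase.} The core idea is to view the inexact deflation update $w_j \leftarrow (A - \nu_j q_j q_j^\top) w_{j-1}$ as the noisy power method applied to the exactly-deflated matrix $B = A - \lambda_1 v_1 v_1^\top$, with per-round noise $G_j = \lambda_1 v_1 v_1^\top - \nu_j q_j q_j^\top$. I would first invoke Lemma \ref{power-perturbation}, which already gives the decay $\|G_j\| = \mathcal{O}(\tan^2\theta_0 \, |\lambda_2/\lambda_1|^j)$, driven by the vanilla-power convergence of $q_j \to v_1$ and $\nu_j \to \lambda_1$. I would then solve for the smallest $J_1$ such that this bound meets both Hardt--Price conditions $5\|G_j\| \le \delta(\lambda_2-\lambda_3)$ and $5\|G_j\| \le (\lambda_2-\lambda_3)/(\tau\sqrt{d})$; taking logarithms and using $\log(\lambda_1/\lambda_2) \asymp \lambda_1 - \lambda_2$ (for $\lambda_i$ bounded away from zero) yields the first summand $J_1 = \mathcal{O}\bigl(\tfrac{1}{\lambda_1-\lambda_2}\log\tfrac{\tan^2\theta_0}{\delta(\lambda_2-\lambda_3)}\bigr)$. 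This is essentially Theorem \ref{hardt-price-boundsthm}. Once the Hardt--Price bounds hold for every subsequent $j \ge J_1$, I apply Hardt--Price's noisy-power corollary with $k=1$ (recovering only the top eigenvector of $B$, which is $v_2$): after an additional $J_2 = \mathcal{O}\bigl(\tfrac{\lambda_2}{\lambda_2-\lambda_3}\log\tfrac{d\tau}{\rho}\bigr)$ rounds we obtain $\sin\theta(w_{J_1+J_2}, v_2) \le \delta$ with probability at least $1 - \tau^{-\Omega(1)} - e^{-\Omega(d)}$. Setting $J = J_1 + J_2$ and invoking Lemma \ref{general_rayleigh_bound} then converts the angle error to $|\mu_J - \lambda_2| = \mathcal{O}(\sin^2\theta(w_J, v_2)) = \mathcal{O}(\delta^2) \le \rho$, which is the target accuracy.

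\textbf{Momentum phase.} The accuracy $|\mu_J - \lambda_2| < \rho$ together with the constraint $\rho < \sqrt{(\lambda_1-\lambda_2)/(\lambda_2-\lambda_d)}$ suffices to place $\beta = \mu_J^2/4$ within $[\lambda_2^2/4, \lambda_1^2/4)$ (or, if slightly below, still inside the extended convergence zone of the generalized Theorem \ref{PowerMThm-Restated}, per Remark 1). I would then apply Theorem \ref{PowerMThm} and Corollary \ref{PowerMIterations} directly: with the absorbed initial vector $q_J$ replacing the generic $q_0$, the prefactor $4/|q_J^\top v_1|^2$ in Theorem \ref{PowerMThm} is bounded by a universal constant (in fact $q_J$ is already close to $v_1$ from the pre-momentum power iteration), so $K = \mathcal{O}\bigl(\tfrac{\sqrt{\beta}}{\sqrt{\lambda_1^2-4\beta}}\log\tfrac{1}{\epsilon}\bigr)$ momentum updates suffice to achieve $\sin^2\theta(q_K, v_1) < \epsilon$. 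The momentum-phase guarantee is deterministic, so the total failure probability is inherited entirely from the Hardt--Price step.

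\textbf{Main obstacle.} The routine part is the Power+M invocation and the summation of the two phase counts. The delicate step is the pre-momentum analysis, specifically making sure that (i) the perturbation bound of Lemma \ref{power-perturbation} is tight enough to give the right logarithmic dependence on $\tan^2\theta_0/[\delta(\lambda_2-\lambda_3)]$, and (ii) the Hardt--Price corollary is applicable to our time-varying, data-dependent noise $G_j$ (whose size depends on the progress of the \emph{other} power iteration running concurrently). To justify (ii) one should observe that the Hardt--Price framework only requires that the noise bounds hold pointwise at every round after $J_1$; since $\|G_j\|$ is monotonically (geometrically) decaying and deterministic given the vanilla power trajectory, this holds unconditionally once $j \ge J_1$. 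The cleanest presentation is therefore to state Theorem \ref{hardt-price-boundsthm} as a standalone lemma that feeds directly into Hardt--Price, and then chain the two estimates with a final triangle-inequality / Rayleigh-quotient quadratic-accuracy step. This decomposition makes the $\tau^{-\Omega(1)} + e^{-\Omega(d)}$ probability bound transparent, since it comes solely from Hardt--Price's random subspace initialization of the deflated iterate.
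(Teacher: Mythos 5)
Your proposal takes essentially the same approach as the paper: view inexact deflation as a noisy power method on $B = A - \lambda_1 v_1 v_1^\top$ with noise $G_j = \lambda_1 v_1 v_1^\top - \nu_j q_j q_j^\top$, bound $\|G_j\|$ via Lemma~\ref{power-perturbation}, chain Theorem~\ref{hardt-price-boundsthm} into the Hardt--Price noisy-power corollary with $k=1$ to get $J_1 + J_2$ pre-momentum steps, convert the resulting sine error on $w_J$ to $|\mu_J - \lambda_2|$ via the Rayleigh-quotient quadratic-accuracy lemma, then hand $\beta = \mu_J^2/4$ and the warm-started $q_J$ to Theorem~\ref{PowerMThm} for the deterministic momentum-phase count. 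One small bookkeeping slip: with $J_2 = \mathcal{O}\bigl(\tfrac{\lambda_2}{\lambda_2-\lambda_3}\log\tfrac{d\tau}{\rho}\bigr)$, the Hardt--Price corollary (applied with accuracy parameter $\rho$) yields $\sin\theta(w_J, v_2) < \rho$, not $\le \delta$; what you actually need, and what the paper records, is the sharper chain $|\mu_J - \lambda_2| \le (\lambda_2 - \lambda_d)\sin^2\theta(w_J, v_2) < (\lambda_2-\lambda_d)\rho^2 < \lambda_1 - \lambda_2$, where the last inequality uses the hypothesis $\rho < \sqrt{(\lambda_1-\lambda_2)/(\lambda_2-\lambda_d)}$. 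Collapsing this to ``$|\mu_J-\lambda_2| = \mathcal{O}(\delta^2) \le \rho$'' and then asserting $\rho$-accuracy suffices is a little loose, since $\rho$ itself need not be below $\lambda_1-\lambda_2$; it is the $(\lambda_2-\lambda_d)\rho^2$ bound, not $\rho$, that lands inside the Proposition~\ref{prop-rhoprecision} window.
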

\begin{proof}
We divide our proof into an analysis of the pre-momentum stage and then the momentum stage. Specifically, we will first establish how many iterations are needed to acquire a $\beta\in[\lambda_2^2/4,\lambda_1^2/4)$. Then, we will use this $\beta$ to conduct Power+M updates and derive how many additional steps we will need to obtain a $q_k$ with $1-(q_k^{\top}v_1)^2<\epsilon$. 

\emph{Pre-momentum phase:} By Theorem \ref{hardt-price-boundsthm}, after $J_1=\mathcal{O}\bigl(\frac{1}{\lambda_1-\lambda_2}\log\frac{\tan^2 \theta_0}{\delta(\lambda_2-\lambda_3)}\bigr)$ steps, we achieve the Hardt-Price bounds. Therefore, by Corollary 1.1 in \citep{hardt2014noisy}, after a further $J_2=\mathcal{O}\bigl(\frac{\lambda_2}{\lambda_2-\lambda_3}\log\frac{d\tau}{\rho} \bigr)$ iterations the $w_t$ will converge to $\rho$-accuracy. That is, if we let $J=J_1+J_2$ we have
\begin{equation}
\norm{(I-w_Jw_J^{\top})v_2}=1-(w_J^{\top}v_2)=\sin(\gamma_J)<\rho   
\end{equation}
 with all but $\tau^{-\Omega(1)}+e^{-\Omega(n)}$ probability. Therefore in conjunction with Lemma \ref{general_rayleigh_bound} we have that
 \begin{equation}
    |\lambda_2-\mu_J|\leq (\lambda_2-\lambda_d)\sin^2(\gamma_J)<(\lambda_2-\lambda_d)\rho^2<\lambda_1-\lambda_2
 \end{equation}
so by Proposition \ref{prop-rhoprecision} we obtain
\begin{equation}
\frac{1}{4}|\lambda_2^2-\mu_J^2|<\frac{1}{4}|\lambda_1^2-\lambda_2^2|.    
\end{equation} We set $\beta=\frac{\mu_J^2}{4}$ as our momentum coefficient and proceed to the momentum phase.

\emph{Momentum phase:} Our momentum coefficient is now within the interval of acceleration$^\dagger$, i.e., $\beta\in[\lambda_2^2/4,\lambda_1^2/4)$, so we may now invoke Theorem \ref{PowerMThm}, which tell us that after
\begin{equation}
   K=\mathcal{O}\biggl( \frac{\beta}{\sqrt{\lambda_1^2-4\beta^2}}\log\frac{1}{\epsilon}\biggr) 
\end{equation}
steps of Power+M iteration on $q_J$ (which we now take to be our initial vector for Power+M), we will have that $1-(q_{J+K}^{\top}v_1)^2<\epsilon$, completing our proof.

$\dagger$ Subtly, we assume that $\mu_J\geq\lambda_2$. We discuss the case $\mu_J < \lambda_2$ in Theorem \ref{PowerMThm-Restated}. 
\end{proof}

\section{Proof of Theorem \ref{thm:main-DMSPM}}
\label{app:proof-dm-spm}

\begin{algorithm}[!ht]
   \caption{Stochastic Power Method/Streaming PCA
   }
\label{alg:streamPCA}
\begin{algorithmic}[1]
\Require Streaming inputs $x_1, x_2, \dots \in\mathbb{R}^{d}$, batch size $n$, unit $q_0\in\mathbb{R}^d$, iterations $T$, unit $q_0\in\mathbb{R}^d$
\For{$t=1,2,\dots, T$}
\State Generate unbiased estimate $\widehat{A}_t=\frac{1}{n}\sum_{i=(t-1)n+1}^{tn} x_{i}x_{i}^\top $
\State $q_t \leftarrow \widehat{A}_tq_{t-1}$
\State $q_t \leftarrow q_t/\norm{q_t}$
\State $\nu_t \leftarrow q_t^{\top}\widehat{A}_tq_t$ \Comment{Inexact Rayleigh quotient estimate of $\lambda_1$}
\EndFor
\Return{$q_T,\nu_T$} 
\end{algorithmic}
\end{algorithm}

We re-outline the typical streaming setting: we have $d$-dimensional data points $x_1, x_2, \dots \sim \mathcal{D}$, with underlying covariance matrix $A\in\mathbb{R}^{d \times d}$ with eigenvalues $1\geq \lambda_1 > \lambda_2 > \lambda_3 \geq \lambda_4 \geq \dots \geq \lambda_n \geq 0$.   
Presumably, it is too costly to access and/or store $A$, but we have access to a stream of inputs $x_1, x_2, \dots$. Algorithm \ref{alg:streamPCA} is a conventional streaming PCA method designed to recover the principal components of $A$ using only a sample of inputs. As an abuse of notation, they are not indexed in any particular order -- we receive the data points in a uniformly random manner. At each round, we form an unbiased estimate $\widehat{A}=\frac{1}{n}\sum_{i=1}^n x_ix_i^\top$, where $n$ is a fixed batch size. We are interested in determining how many total samples are needed to output a vector $q_t$ such that $\sin^2(q_t,v_1)<\epsilon$ for a fixed precision $\epsilon<1$, which is also referred to as the sample complexity. 

Similar to the vanilla power method, one may accelerate a conventional streaming PCA by attaching a momentum term. That is, our update step in Algorithm $\ref{alg:streamPCA}$ would instead be 
\begin{equation}
\label{momentum-stream}
    q_{j+1} \leftarrow \widehat{A}_jq_{j} - \beta q_{j-1}
\end{equation}
where $\beta$ is a momentum coefficient. De Sa et. al provide a guarantee for updates of this kind:
\begin{theorem}[\cite{de2018accelerated}, Theorem 3]
\label{momentum-stream-desa}
Suppose we run Algorithm \ref{alg:streamPCA} with momentum updates as in equation \ref{momentum-stream}. Let $\Sigma=\mathbb{E}[(\widehat{A}_j-A)\otimes(\widehat{A}_j-A)]$. Assume we initialize with unit $q_0\in\mathbb{R}^d$ and with $|v_1^\top q_0| \geq 1/2$. For any $\delta <1$ and $\epsilon<1$, if $2\sqrt{\beta}\in [\lambda_2, \lambda_1)$ and
\begin{equation}
    J=\frac{\sqrt{\beta}}{\sqrt{\lambda_1-4\beta}}\log\Bigl(\frac{32}{\delta\epsilon}\Bigr) \hspace{0.2cm} and \hspace{0.2cm} ||\Sigma||\leq \frac{(\lambda_1^2-4\beta)\delta\epsilon}{256\sqrt{d}J} =
    \frac{(\lambda_1^2-4\beta)^{3/2}\delta\epsilon}{256\sqrt{d}\sqrt{\beta}}\log^{-1}\Bigl( \frac{32}{\delta\epsilon} \Bigr),
\end{equation}
then after $J$ updates with probability at least $1-2\delta$, we have that $\sin^2(q_J,v_1)\leq \epsilon$.
\end{theorem}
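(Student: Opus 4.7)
The plan is to reduce analysis of the stochastic momentum recursion $q_{j+1} = \widehat{A}_j q_j - \beta q_{j-1}$ to a two-dimensional companion-form iteration that block-diagonalizes in the eigenbasis of $A$, and then separately account for the noise $N_j := \widehat{A}_j - A$ via a martingale-concentration step. Since normalization preserves angles, I work with the unnormalized iterates and bound $\tan^2 \theta(q_J, v_1)$, which dominates $\sin^2 \theta(q_J, v_1)$.

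First I would analyze the noise-free dynamics $\tilde q_{j+1} = A \tilde q_j - \beta \tilde q_{j-1}$. Stacking $\tilde Q_j$ to contain $\tilde q_{j+1}$ and $\tilde q_j$ gives a linear recursion $\tilde Q_j = M \tilde Q_{j-1}$ in $\mathbb{R}^{2d}$, where $M$ is the $2\times 2$ block companion matrix with blocks $A$, $-\beta I$, $I$, $0$. In the eigenbasis of $A$, $M$ block-diagonalizes into $2\times 2$ scalar companion blocks with characteristic roots $(\lambda_i \pm \sqrt{\lambda_i^2 - 4\beta})/2$. Under the hypothesis $2\sqrt{\beta}\in[\lambda_2, \lambda_1)$, only the $i=1$ block has a real root $\alpha_1 = (\lambda_1 + \sqrt{\lambda_1^2 - 4\beta})/2 > \sqrt{\beta}$, while every other block has both roots of modulus exactly $\sqrt{\beta}$. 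Standard Chebyshev-polynomial bookkeeping (the same argument underlying Theorem \ref{PowerMThm}) then yields $\tan^2 \theta(\tilde q_J, v_1) \leq (4/|q_0^\top v_1|^2)\cdot(2\sqrt{\beta}/(\lambda_1 + \sqrt{\lambda_1^2-4\beta}))^{2J}$, which with the prescribed $J$ and $|v_1^\top q_0| \geq 1/2$ is at most $\epsilon/8$.

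Next I would decompose the actual iterate as $q_J = \tilde q_J + r_J$, where $r_J$ collects every contribution obtained by expanding the products and retaining at least one factor of some $N_i$. Conditional on the past, $\mathbb{E}[N_i \mid \mathcal F_{i-1}] = 0$ and the Kronecker variance is controlled by $\|\Sigma\|$; thus the partial sums that build up $r_J$ form a matrix-valued martingale. A Freedman or matrix-Bernstein inequality, combined with the observation that along any path the product of $M$-blocks acting after the last inserted $N_i$ is bounded by $\alpha_1^k$ while the non-signal modes contract at rate $\sqrt{\beta}$, gives with probability at least $1-\delta$ a bound of the form $\|(I - v_1 v_1^\top)\, r_J\| \leq c \sqrt{dJ\|\Sigma\|}\cdot \alpha_1^J / \sqrt{\lambda_1^2 - 4\beta}$ for a universal $c$.

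Finally, combining with the signal lower bound $|v_1^\top q_J| \geq |v_1^\top \tilde q_J| - |v_1^\top r_J| \gtrsim \alpha_1^J\cdot|v_1^\top q_0|$ (where $|v_1^\top q_0| \geq 1/2$ enters) and the perturbation inequality $\sin^2\theta(q_J, v_1) \leq 2\sin^2\theta(\tilde q_J, v_1) + 2\|(I - v_1 v_1^\top) r_J\|^2/|v_1^\top q_J|^2$, the overall error is at most $\epsilon$ provided $\|\Sigma\| \leq (\lambda_1^2 - 4\beta)\delta\epsilon/(256\sqrt{d}\,J)$, matching the stated hypothesis (after a union bound that produces the total $2\delta$ failure probability). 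The main obstacle is the martingale step: because $\|M^k\| \approx \alpha_1^k$ grows at the signal rate, a naive variance sum over the $J$ noise insertions would exactly cancel the signal's head start and destroy the acceleration; avoiding this requires a path-by-path accounting that exploits the $\sqrt{\beta}$-contraction of non-signal modes after each noise insertion, and it is precisely this careful decomposition that produces the $\sqrt{\lambda_1^2 - 4\beta}$ factor in the variance threshold.
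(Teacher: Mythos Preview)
The paper does not contain a proof of this statement. Theorem~\ref{momentum-stream-desa} is quoted verbatim from \cite{de2018accelerated} (their Theorem~3) and is used as a black box in the analysis of the momentum phase of DMStream; no argument for it appears anywhere in the present manuscript. Consequently there is nothing here to compare your proposal against.

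That said, your outline is a reasonable reconstruction of how such a result is typically established: the companion-matrix / Chebyshev analysis of the deterministic recursion is exactly what underlies Theorem~\ref{PowerMThm}, and a signal-plus-noise decomposition with a martingale argument for the accumulated $N_j = \widehat{A}_j - A$ terms is the natural way to handle the stochastic part. One point worth flagging: the variance threshold in the statement scales \emph{linearly} in $\delta\epsilon$, which is the signature of a second-moment (Chebyshev/Markov) argument rather than an exponential concentration inequality. A Freedman or matrix-Bernstein bound would normally place $\delta$ inside a logarithm, yielding a threshold of the form $\|\Sigma\| \lesssim (\cdots)\epsilon/\log(1/\delta)$ rather than $(\cdots)\delta\epsilon$. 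So if your goal is to recover the exact dependence stated here, you would want to swap the concentration step for a direct variance computation followed by Chebyshev; the rest of your plan (in particular the mode-by-mode accounting that keeps the $\sqrt{\lambda_1^2 - 4\beta}$ factor) carries over unchanged.
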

Similar to Power+M, streaming PCA with momentum updates experiences noticeable acceleration. However, of particular note is the bounded variance condition and the initialization of $\beta$. The former has been addressed through further variance reduction techniques outlined in \citep{de2018accelerated}, but current literature does not suggest how to intelligently set $\beta$ so that it lies in the convergence interval $[\lambda_2^2/4, \lambda_1^2/4)$. Similar to our design of DMPower, our Algorithm \ref{alg:dmspm} successively approximates a convergent $\beta$ in a pre-momentum phase before dropping into a momentum phase. The convergence analysis of DMStream is more challenging, however, since we have two sources of noise: the estimation error $||A-\widehat{A}||$ and the estimation error of $||v_1-q_t||$. The remainder of this section is devoted to providing upper bounds on both sources of noise. 

We have the following noisy representation of $\widehat{A}x$ for $x\in\mathbb{R}^d$:
\begin{equation}
\label{noisy-streaming-rep}
    \widehat{A}x=Ax + H
\end{equation}
where $H=(A-\widehat{A})x$. In light of Theorem \ref{hardt-price-boundsthm}, if we can control $||H||$, then we can produce a convergent noisy power method. To this end, we provide several results to aid our analysis, some of which are not proven here. The first result bounds the error in a single step of a streaming PCA:
\begin{lemma}[\cite{hardt2014noisy}, Lemma 3.5]
\label{lemma-sizeG}
Let $A$ be a covariance matrix as in the setting described above and $\widehat{A}=\frac{1}{n}\sum_{i=1}^{n}x_ix_i^\top$ an empirical estimate based off a streaming batch $x_1, x_2, \dots, x_n$. Consider the noisy representation as in equation \ref{noisy-streaming-rep}. Then with all but $\mathcal{O}(1/n^2)$ probability
\begin{equation}
    ||H||\leq\sqrt{\frac{\log^4 n\log d}{n}}+\frac{1}{n^2}\hspace{0.3cm} and \hspace{0.3cm} ||v_1^\top H||\leq \sqrt{\frac{\log^4n\log d}{n}+\frac{1}{n^2}}
\end{equation}
\end{lemma}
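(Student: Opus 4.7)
The plan is to establish both inequalities as consequences of truncated matrix/vector Bernstein concentration applied to the decomposition
\[
A - \widehat{A} \;=\; \frac{1}{n}\sum_{i=1}^{n}\bigl(A - x_i x_i^{\top}\bigr),
\]
a sum of independent, mean-zero random matrices. First I would observe that $H = (A - \widehat{A})x$, so for any unit $x$ we have $\|H\| \leq \|\widehat{A}-A\|$ and $\|v_1^{\top}H\|$ is a sum of scalar mean-zero terms; this reduces the lemma to an operator-norm bound and a scalar concentration bound on $v_1^{\top}(\widehat{A}-A)v_1$ applied to $x=v_1$.

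The chief technical hurdle is that under the usual sub-Gaussian assumption on $\mathcal{D}$ the summands $x_i x_i^{\top}$ are \emph{unbounded} in operator norm, so a raw application of matrix Bernstein fails. To handle this, I would introduce a truncation radius $R \asymp \sqrt{\log n \cdot \log d}$ and the good event
\[
\mathcal{E} \;=\; \bigl\{\|x_i\|_2 \leq R \text{ for all } i=1,\dots,n\bigr\},
\]
chosen so that sub-Gaussian tail bounds give $\Pr(\mathcal{E}^c) \leq 1/n^2$. Conditional on $\mathcal{E}$, each centered summand satisfies $\|A - x_i x_i^{\top}\| = O(R^2)$, and the matrix variance parameter $\sigma^2 = n\,\|\mathbb{E}[(x_i x_i^{\top} - A)^2]\|$ can be bounded using $\|A\| \leq 1$ together with controlled fourth moments of the truncated vectors. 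Applying matrix Bernstein conditionally then yields, with probability $1 - 1/n^{\Omega(1)}$,
\[
\|\widehat{A}-A\| \;\lesssim\; R^2 \sqrt{\tfrac{\log d}{n}} + R^2 \tfrac{\log d}{n}.
\]
Substituting $R^2 \asymp \log n \cdot \log d$ and coarsening into a single leading term gives the claimed $\sqrt{\log^4 n \,\log d / n}$ form, with the additive $1/n^2$ absorbing the probability of $\mathcal{E}^c$ and the lower-order Bernstein tail.

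For the second bound on $\|v_1^{\top}H\|$, I would apply a scalar Bernstein (or vector Bernstein, since $v_1^{\top}(A - x_i x_i^{\top})$ is an $\mathbb{R}^d$-valued random vector with bounded norm under $\mathcal{E}$) to the sum $v_1^{\top}H = \tfrac{1}{n}\sum_i v_1^{\top}(A - x_i x_i^{\top})$. The per-term variance is $O(R^2 \|A v_1\|^2) = O(R^2)$, which produces a bound proportional to $\sqrt{\log^4 n \log d / n}$ under the square root, matching the lemma.

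The main obstacle is book-keeping the precise polynomial of $\log n$. The exponent $4$ is not what one gets from a single naive truncation plus matrix Bernstein --- it arises from simultaneously controlling (i) the tail of $\|x_i\|$, (ii) the matrix variance involving $\mathbb{E}[\|x_i\|^2 x_i x_i^{\top}]$, and (iii) the union bound over $n$ samples, each contributing an independent $\log$ factor. I would expect to refine the truncation argument iteratively, first proving a coarse operator-norm bound, then using it to sharpen the variance estimate --- essentially the bootstrapping used in the Hardt--Price proof --- rather than attempting to extract the exponent in one pass. Because the lemma is cited verbatim from \cite{hardt2014noisy}, my final step would be to verify that my constants and log exponents line up with theirs.
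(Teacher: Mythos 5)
The paper does not prove this lemma; it is imported verbatim from Hardt and Price (2014), Lemma 3.5, and used as a black box, so there is no paper-internal proof to compare your attempt against.

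On the merits of your sketch: decomposing $\widehat{A}-A$ as a mean-zero matrix sum, truncating, applying matrix Bernstein on the good event, and union-bounding is a reasonable general-purpose strategy, and the reduction of $\|H\|$ to $\|\widehat{A}-A\|$ for a unit iterate is sound. But the exponents do not close under your truncation choice, and this is more than a constant-factor issue. With $R^2 \asymp \log n \cdot \log d$, the Bernstein leading term $R^2\sqrt{\log d/n}$ is $\asymp \log n \cdot (\log d)^{3/2}/\sqrt{n}$, whereas the target $\sqrt{\log^4 n \cdot \log d / n}$ is $\asymp (\log n)^2 (\log d)^{1/2}/\sqrt{n}$; these differ whenever $\log d \neq \log n$, so ``coarsening into a single leading term'' does not reconcile them. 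To match the statement you would need $R^2 \asymp \log^2 n$ with no $\log d$ dependence, but for a sub-Gaussian $x_i\in\mathbb{R}^d$ the typical scale of $\|x_i\|$ carries a $\sqrt{\mathrm{tr}\,A}$ factor, so such a truncation radius is not available without extra structure. The Hardt--Price lemma is stated and proved for the Gaussian spiked covariance model specifically, and the $\log^4 n$ there comes from their Gaussian/Wishart concentration estimates combined with union bounds across the algorithm's rounds rather than a one-shot truncated Bernstein bound. Your route is genuinely different (and more general in spirit), but as presented the log bookkeeping has a real gap, and the appeal to iterative bootstrapping to close it is asserted without a concrete mechanism.
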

We now consider $H_t=(A-\widehat{A}_t)q_{t-1}$, where $\widehat{A}_t$ and $q_t$ are the unbiased estimate of $A$ and $v_1$ respectively in round $j$ as in Algorithm \ref{alg:streamPCA}.

\begin{proposition}[\cite{hardt2014noisy}, Theorem 3.2]
\label{dmspm-bounds}
Choose batch size $n$ such that
\begin{equation}
\frac{n}{\log^4 n}=\mathcal{O}\Bigl(\frac{1/\epsilon^2\log d}{(\lambda_1-\lambda_2)^2 d} \Bigr)
\end{equation}
for $\epsilon<1/2$ and we will have by Lemma \ref{lemma-sizeG} that
\begin{equation}
    ||H_t||\leq \frac{\epsilon(\lambda_1-\lambda_2)}{5}\hspace{.3cm} and \hspace{.3cm} ||v_1^\top H_t||\leq \frac{\lambda_1-\lambda_2}{5\sqrt{d}},
\end{equation} thereby satisfying the Hardt-Price bounds as in equation \ref{hardt-price-bounds}. Thus, by Theorem $\ref{hardt-price-boundsthm}$, after $T=\mathcal{O}(\log(d/\epsilon)/(1-\lambda_2/\lambda_1))$ iterations, we have with all but $1-\max\{1,T/n^2\}$ probability that Algorithm \ref{alg:dmspm} outputs $q_T$ such that $1-(v_1^\top q_T)^2<\epsilon$. 
\end{proposition}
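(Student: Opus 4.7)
The plan is to establish the proposition in three stages: (i) translate the single-step concentration bound of Lemma \ref{lemma-sizeG} into per-iteration bounds on $H_t = (A - \widehat{A}_t) q_{t-1}$ that exactly meet the Hardt-Price hypothesis in Equation \ref{hardt-price-bounds}; (ii) invoke the noisy power method convergence corollary (stated earlier in Appendix \ref{app:noise-bounds}) to obtain the stated iteration complexity; and (iii) union-bound the single-round failure probability across the $T$ rounds.

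For (i), I would begin from Lemma \ref{lemma-sizeG}, which controls $\|H\|$ and $\|v_1^\top H\|$ for any fixed unit $x$ against a single empirical estimate $\widehat{A}$ drawn from $n$ samples. Since Algorithm \ref{alg:dmspm} consumes a fresh independent batch at every round, $\widehat{A}_t$ is independent of the previously-produced iterate $q_{t-1}$, so one can condition on $q_{t-1}$ and apply the lemma directly to $H_t$. It then suffices to solve for $n$ so that
\begin{equation*}
\sqrt{\tfrac{\log^4 n\,\log d}{n}} + \tfrac{1}{n^2} \;\leq\; \tfrac{1}{5}\epsilon(\lambda_1-\lambda_2), \qquad \sqrt{\tfrac{\log^4 n\,\log d}{n} + \tfrac{1}{n^2}} \;\leq\; \tfrac{1}{5\sqrt{d}}(\lambda_1-\lambda_2).
\end{equation*}
Dropping the lower-order $1/n^2$ terms and squaring, the first condition gives $n/\log^4 n \gtrsim \log d/(\epsilon^2(\lambda_1-\lambda_2)^2)$ while the second gives $n/\log^4 n \gtrsim d\log d/(\lambda_1-\lambda_2)^2$; the hypothesis $n/\log^4 n = \mathcal{O}(\log d/(\epsilon^2(\lambda_1-\lambda_2)^2 d))$ in the statement (interpreted as a lower bound of this order) dominates both.

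For (ii), once the two Hardt-Price hypotheses hold, I would invoke the noisy power method convergence corollary with $k = p = 1$ (so that $\sqrt{p}-\sqrt{k-1} = 1$) and $\tau = 1$ absorbed into constants. That corollary then delivers an iterate $q_T$ with $\|(I - q_T q_T^\top)v_1\| < \epsilon$, equivalently $1 - (v_1^\top q_T)^2 < \epsilon$, after $T = \mathcal{O}(\lambda_1/(\lambda_1-\lambda_2)\cdot\log(d/\epsilon)) = \mathcal{O}(\log(d/\epsilon)/(1 - \lambda_2/\lambda_1))$ iterations.

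For (iii), since Lemma \ref{lemma-sizeG} fails in any single round with probability $\mathcal{O}(1/n^2)$, a union bound across the $T$ rounds caps the overall failure probability at $\mathcal{O}(T/n^2)$; as the noisy power method corollary is deterministic once its input bounds hold, this recovers the stated success probability (modulo the apparent $\max$-versus-$\min$ typo in the statement). The main obstacle is really in step (i): one has to verify that $q_{t-1}$ and $\widehat{A}_t$ are genuinely independent so that Lemma \ref{lemma-sizeG} applies conditionally with the same bound uniformly in $q_{t-1}$, and that the variance condition on $\|\Sigma\|$ required by the later momentum phase (Theorem \ref{momentum-stream-desa}) is not accidentally violated by the same batch-size prescription. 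Everything else amounts to algebraic reconciliation of constants and a direct plug-in to earlier theorems.
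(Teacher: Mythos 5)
The paper does not actually provide a proof of Proposition \ref{dmspm-bounds}; it is cited directly from Hardt--Price (\citep{hardt2014noisy}, Theorem 3.2), with the reasoning compressed into the statement itself. Your high-level plan (apply Lemma \ref{lemma-sizeG} conditionally on $q_{t-1}$ using freshness of the batch, verify the Hardt--Price bounds, invoke the noisy power method corollary, and union-bound over $T$ rounds) matches what that compressed reasoning intends, and you are right that the invocation in the statement should be of the Noisy Power Method Convergence corollary rather than Theorem \ref{hardt-price-boundsthm}, and that the $\max$ in the failure probability should be $\min$.

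However, your step (i) contains a genuine error. You correctly derive that the two Hardt--Price constraints require
\begin{equation*}
\frac{n}{\log^4 n}\gtrsim \frac{\log d}{\epsilon^2(\lambda_1-\lambda_2)^2}
\qquad\text{and}\qquad
\frac{n}{\log^4 n}\gtrsim \frac{d\log d}{(\lambda_1-\lambda_2)^2},
\end{equation*}
but you then assert that the paper's prescription
$\frac{n}{\log^4 n}=\mathcal{O}\bigl(\frac{\log d}{\epsilon^2(\lambda_1-\lambda_2)^2 d}\bigr)$,
read as a lower bound, ``dominates both.'' It does not. That quantity carries an extra $d$ in the denominator, so it is smaller than your first required quantity by exactly a factor of $d$, for every $d\ge 1$ and every $\epsilon$; it is also smaller than your second required quantity by a factor of $\epsilon^2 d^2$, which again is $\le 1$ whenever $\epsilon\ge 1/d$. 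The correct requirement is $\frac{n}{\log^4 n}\gtrsim \max\bigl\{\frac{\log d}{\epsilon^2(\lambda_1-\lambda_2)^2},\ \frac{d\log d}{(\lambda_1-\lambda_2)^2}\bigr\}$; the $d$ belongs in the numerator. This is almost certainly a transcription error in the paper's statement, but you should have flagged the mismatch rather than asserting dominance, since as written your chain of implications does not close: the prescribed batch size does not yield the $H_t$ bounds you need before the noisy power method corollary can be applied.
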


Now, consider the inexact update in Algorithm \ref{alg:dmspm}, $(\widehat{A}_t-\nu_tq_tq_t^\top)w_t$. For convenience, we let $B:=A-\lambda_1v_1v_1^{\top}$, the exact deflation matrix. We may express this as 
\begin{equation}
\label{fullnoise-streaming}
    (\widehat{A}_t-\nu_tq_tq_t^\top)w_{t-1}=Bw_{t-1}+H_t+G_t
\end{equation}
where $H_t=(A-\widehat{A}_t)w_{t-1}$ and $G_t=(\lambda_1v_1v_1^{\top}-\nu_t q_t q_t^{\top})w_{t-1}$. By Proposition \ref{dmspm-bounds} we know how to control $||H_t||$, so we will now focus our attention on $||G_t||$. We will conduct analysis in the same spirit as Lemma $\ref{power-perturbation}$, but first we will require a another pair of results. The first lemma demonstrates that if we satisfy the Hardt-Price bounds of equation \ref{hardt-price-bounds}, then $\tan \theta(v_1,q_t)$ decreases multiplicatively with each step of a noisy power method.
\begin{lemma}[\cite{hardt2014noisy}, Lemma 2.3 (modified)]
\label{noisy-rounderror}
Let $v_1 \in \mathbb{R}^d$ be the dominant eigenvector with eigenvalue $\lambda_1$ of $A$, $\lambda_2$ the second dominant eigenvalue, $x\in \mathbb{R}^d$ a unit vector. Let $G \in \mathbb{R}^d$ and $\theta_0 = \arccos{|v_1^\top x|}$ satisfy
\begin{align}
    4||v_1^\top G|| &\leq (\lambda_1-\lambda_2)\cos(\theta_0)\\
    4||G|| &\leq (\lambda_1-\lambda_2)\epsilon
\end{align}
for some $\epsilon<1$. Then
\begin{equation}
    \tan \theta(v_1,Ax+G) \leq \max\{\epsilon, \max\{\epsilon,\Bigl(\frac{\lambda_2}{\lambda_1} \Bigr)^{1/4}\}\tan\theta_0\}.
\end{equation}
\end{lemma}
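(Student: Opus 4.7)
The plan is to analyze $y := Ax + G$ by projecting onto $v_1$ and its orthogonal complement, convert the two hypotheses into numerator and denominator bounds on $\tan \theta(v_1, y)$, and then collapse the resulting additive expression into the claimed max via a short case split. First I would write $x = \cos \theta_0 \, v_1 + \sin \theta_0 \, u$ with $u \in \{v_1\}^{\perp}$ a unit vector. Since $A$ is symmetric with $A v_1 = \lambda_1 v_1$ and preserves $\{v_1\}^{\perp}$ with operator norm at most $\lambda_2$ on that subspace, this immediately gives $v_1^{\top} y = \lambda_1 \cos \theta_0 + v_1^{\top} G$ and $\|(I - v_1 v_1^{\top}) y\| \leq \lambda_2 \sin \theta_0 + \|G\|$.

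Next, insert the hypotheses on $G$. The first, $4|v_1^{\top} G| \leq (\lambda_1 - \lambda_2) \cos \theta_0$, combined with the reverse triangle inequality yields $|v_1^{\top} y| \geq \frac{3 \lambda_1 + \lambda_2}{4} \cos \theta_0$; the second, $4 \|G\| \leq (\lambda_1 - \lambda_2) \epsilon$, gives $\|(I - v_1 v_1^{\top}) y\| \leq \lambda_2 \sin \theta_0 + \frac{\lambda_1 - \lambda_2}{4} \epsilon$. Taking the ratio produces the intermediate additive bound
\[
\tan \theta(v_1, y) \;\leq\; \frac{4 \lambda_2}{3 \lambda_1 + \lambda_2} \tan \theta_0 \;+\; \frac{\lambda_1 - \lambda_2}{3 \lambda_1 + \lambda_2} \cdot \frac{\epsilon}{\cos \theta_0}.
\]
A short algebraic check, setting $r := \lambda_2 / \lambda_1 \in (0, 1]$, reduces $\frac{4 \lambda_2}{3 \lambda_1 + \lambda_2} \leq (\lambda_2 / \lambda_1)^{1/4}$ to $4 r^{3/4} \leq 3 + r$, which follows because $f(r) := 3 + r - 4 r^{3/4}$ is monotonically decreasing on $(0, 1]$ and satisfies $f(1) = 0$. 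The second coefficient $\frac{\lambda_1 - \lambda_2}{3 \lambda_1 + \lambda_2}$ is at most $1/3$.

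To finish, I would split on the size of $\tan \theta_0$ to absorb the $\sec \theta_0$ factor. If $\tan \theta_0 \geq 1$, then $\sec \theta_0 \leq \sqrt{2} \tan \theta_0$, so the additive $\epsilon$-term becomes a multiple of $\epsilon \tan \theta_0$ and, using $\epsilon < 1$, is dominated by $\max\{\epsilon, (\lambda_2 / \lambda_1)^{1/4}\} \tan \theta_0$. If $\tan \theta_0 \leq 1$, then $\sec \theta_0 \leq \sqrt{2}$, and the $\epsilon$-term is a fixed multiple of $\epsilon$, so it folds into the outer $\epsilon$ in the max. A final sub-split on whether $\epsilon$ exceeds $(\lambda_2 / \lambda_1)^{1/4}$ then delivers exactly the stated bound $\max\{\epsilon, \, \max\{\epsilon, (\lambda_2 / \lambda_1)^{1/4}\} \tan \theta_0\}$.

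The main obstacle is this final case analysis: the bound produced by the projection argument is naturally additive with an inconvenient $\sec \theta_0$ blow-up on the $\epsilon$-term, and collapsing it into a clean maximum requires using opposite regimes of $\tan \theta_0$ to tame $\sec \theta_0$ --- largeness of $\tan \theta_0$ when $\tan \theta_0 \geq 1$, and smallness when $\tan \theta_0 \leq 1$. The inequality $4 r^{3/4} \leq 3 + r$ identified above is what buys the $1/4$-exponent in the contraction factor and leaves just enough slack for the $\epsilon$-term to be absorbed; a tighter exponent would make the absorption fail.
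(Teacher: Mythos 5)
The paper does not prove this lemma; it is stated and used as a black-box citation to Hardt and Price (2014), Lemma 2.3, so there is no in-paper proof to compare against. Your proof attempt must therefore be assessed on its own merits, and while the opening moves are sound, the final absorption step has a genuine gap.

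Your decomposition $x = \cos\theta_0\, v_1 + \sin\theta_0\, u$, the numerator and denominator bounds, and the intermediate estimate
\[
\tan\theta(v_1, Ax+G) \;\leq\; \frac{4\lambda_2}{3\lambda_1+\lambda_2}\tan\theta_0 \;+\; \frac{\lambda_1-\lambda_2}{3\lambda_1+\lambda_2}\cdot\frac{\epsilon}{\cos\theta_0}
\]
are all correct, and the observation that $\frac{4\lambda_2}{3\lambda_1+\lambda_2} \leq (\lambda_2/\lambda_1)^{1/4}$, equivalent to $4r^{3/4}\le 3+r$, is right as well. The problem appears when you simultaneously replace $\frac{4\lambda_2}{3\lambda_1+\lambda_2}$ by $(\lambda_2/\lambda_1)^{1/4}$ \emph{and} $\frac{\lambda_1-\lambda_2}{3\lambda_1+\lambda_2}$ by $1/3$. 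These two substitutions throw away exactly the slack you later claim to exploit: after them, in the regime $\tan\theta_0 \ge 1$ you have $\tan\theta' \le \bigl(r^{1/4}+\tfrac{\sqrt 2}{3}\epsilon\bigr)\tan\theta_0$, and this is \emph{not} bounded by $\max\{\epsilon, r^{1/4}\}\tan\theta_0$ --- each summand individually is dominated by that max, but their sum is strictly larger. (Concretely, take $\lambda_1=1$, $\lambda_2=0.5$, $\epsilon=0.5$, $\tan\theta_0=10$: your relaxed bound gives $\approx 10.8$ while the target max is $\approx 8.4$.) Bounding each term by $M$ and concluding the sum is at most $M$ is the invalid step.

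The way to repair this is to keep the two exact coefficients, because they trade off against each other: as $\lambda_2/\lambda_1 \to 1$ the contraction factor $\frac{4\lambda_2}{3\lambda_1+\lambda_2}$ loses slack against $(\lambda_2/\lambda_1)^{1/4}$, but the noise coefficient $\frac{\lambda_1-\lambda_2}{3\lambda_1+\lambda_2}$ shrinks to zero at the same rate. A case split that works is: (i) $\sin\theta_0 \ge 1/\sqrt 2$ or $\cos\theta_0 \ge 1/\sqrt 2$, together with a sub-split on whether $\max\{\epsilon, r^{1/4}\}\tan\theta_0$ or $\epsilon$ dominates; in every branch the required inequality reduces to $1 - r \le \tfrac{1}{\sqrt 2}\bigl(3 + r - 4r^{3/4}\bigr)$, which holds on $(0,1]$ with equality only at $r=1$ (at which point the hypotheses force $G=0$ anyway). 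Your write-up correctly identifies this absorption as ``the main obstacle'' but does not actually clear it; the proposal as stated would not close.
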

We now provide a bound on the $\sin \theta(v_1,q_t)$ using Lemma \ref{noisy-rounderror}. 
\begin{proposition}[\cite{hardt2014noisy}, Theorem 2.4 (rephrased)]
\label{noisy-sinerror}
Suppose $q_0$ is the initial vector supplied to a noisy power method, $\theta_0=|v_1^\top q_0|$, and $G_t$ is the noise experienced at round $t$. Further suppose that
\begin{align}
    5||v_1^\top G_t|| &\leq (\lambda_1-\lambda_2)\cos \theta_0\\
    5||G_t|| &\leq \epsilon(\lambda_1-\lambda_2)
\end{align}
holds at every stage including and after round $t$ for some $\epsilon< 1/2$. Then
\begin{equation}
  \sin \theta(q_{t+k},v_1)\leq \max\{\epsilon,(\lambda_2/\lambda_1)^{k/4}\tan\theta_0 \}  
\end{equation}
\end{proposition}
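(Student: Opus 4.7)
The plan is to prove the bound by iterating Lemma \ref{noisy-rounderror} a total of $k$ times, once per update from round $t$ onward. Writing $\theta_s := \theta(q_{t+s}, v_1)$ for $s = 0, 1, \dots, k$, I would apply Lemma \ref{noisy-rounderror} with the choice $x = q_{t+s}$ and $G = G_{t+s+1}$ to obtain the one-step recursion
\begin{equation*}
\tan \theta_{s+1} \;\leq\; \max\!\Bigl\{\epsilon',\ \bigl(\lambda_2/\lambda_1\bigr)^{1/4}\tan\theta_s\Bigr\},
\end{equation*}
for some $\epsilon' \leq \epsilon$. Unrolling this $k$ times collapses the recursion to
\begin{equation*}
\tan\theta_k \;\leq\; \max\!\Bigl\{\epsilon,\ (\lambda_2/\lambda_1)^{k/4}\tan\theta_0\Bigr\},
\end{equation*}
and then $\sin\theta \leq \tan\theta$ on $[0,\pi/2)$ delivers the claim.

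The main obstacle is verifying the hypotheses of Lemma \ref{noisy-rounderror} at \emph{every} round, not just round $t$. The lemma requires $4\|v_1^{\top} G_{t+s+1}\| \leq (\lambda_1-\lambda_2)\cos\theta_s$, where $\theta_s$ is the \emph{current} angle, while the proposition only guarantees $5\|v_1^{\top} G_{t+s+1}\| \leq (\lambda_1-\lambda_2)\cos\theta_0$. The slack between the constants $5$ and $4$ is precisely what lets me absorb the discrepancy: as soon as one shows $\cos\theta_s \geq (4/5)\cos\theta_0$, the lemma hypothesis is satisfied. I would therefore carry out a joint induction on $s$ that simultaneously establishes (i) the tangent recursion and (ii) the monotonicity $\theta_s \leq \theta_0$. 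Property (ii) follows from (i) because $\epsilon < 1/2$ and the recursion can only shrink $\tan\theta_s$ below $\tan\theta_0$ (unless $\tan\theta_0 < \epsilon$ already, in which case the conclusion is immediate). The second Hardt--Price bound $5\|G_{t+s+1}\| \leq \epsilon(\lambda_1-\lambda_2)$ rescales to $4\|G_{t+s+1}\| \leq (4\epsilon/5)(\lambda_1-\lambda_2)$, giving the Lemma \ref{noisy-rounderror} parameter $\epsilon' = 4\epsilon/5 < 1$, so $\epsilon' \leq \epsilon$ and the stated bound is preserved.

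The rest is bookkeeping. The one delicate spot is the case analysis for the $\max$ in the recursion: if at some step the $\epsilon$ branch wins, then $\tan\theta_{s+1} \leq \epsilon$ and a trivial induction shows $\tan\theta_{s'} \leq \epsilon$ for all $s' > s+1$, so the stated $\max\{\epsilon, (\lambda_2/\lambda_1)^{k/4}\tan\theta_0\}$ bound still dominates. Otherwise the geometric branch wins at every step and the exponent accumulates cleanly to $k/4$. I would not expect to need any additional machinery beyond Lemma \ref{noisy-rounderror}; the proof is essentially a careful induction leveraging the factor-$5/4$ margin baked into the Hardt--Price bounds.
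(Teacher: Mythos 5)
Your proposal is correct and takes essentially the same route as the paper's own proof: both iterate Lemma \ref{noisy-rounderror}, both recognize that the factor-$5/4$ slack between the Hardt--Price hypotheses and the Lemma \ref{noisy-rounderror} hypotheses is what allows the lemma to be re-invoked at every subsequent round, and both close the argument with $\sin \leq \tan$. The only cosmetic difference is the constant used to witness that the cosine does not degrade too much: you identify $\cos\theta_s \geq (4/5)\cos\theta_0$ as the exact threshold needed, while the paper establishes the slightly stronger $\cos\theta_s \geq (7/8)\cos\theta_0$ via $\min\{1-\epsilon^2/2,\cos\theta_0\}$ and then tacitly uses $7/8 > 4/5$; either suffices. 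Your write-up actually makes the underlying simultaneous induction (bounding $\tan\theta_s$ while maintaining the lemma hypothesis) more explicit than the paper does, which is a modest improvement in rigor rather than a different proof.
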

\begin{proof}
Let $l \geq t$. Since $|G_l||$ satisfies the modified Hardt-Price bounds of Lemma \ref{noisy-rounderror}, we have that
\begin{equation}
    \tan\theta(v_1,q_l) \leq \max\{\epsilon, \max\{\epsilon,\tan\theta_0\}\}.
\end{equation}
For $\epsilon <1/2$ we have that,
\begin{equation}
\cos \theta(v_1,q_l)\geq \min\{1-\epsilon^2/2,\cos\theta_0 \}\geq \frac{7}{8}\cos\theta_0,    
\end{equation}
which means that we may invoke Lemma $\ref{noisy-rounderror}$ at every step $l\geq t$. This gives us
\begin{equation}
    \tan(v_1, q_{l+1}) = \tan \theta (v_1,Aq_l+G) \leq \max \{\epsilon, \max\{\epsilon,\Bigl(\frac{\lambda_2}{\lambda_1} \Bigr)^{1/4}\}\tan\theta(v_1,q_l)\}.
\end{equation}
Extending this inequality recursively for $l+k$ and noting that $\sin \theta(q_l,v_1) \leq \tan \theta(q_l, v_1)$ gives us our claim. 
\end{proof}
We are now prepared to conduct analysis on $||G_t||$ in equation \ref{fullnoise-streaming}. 
\begin{proposition}
\label{gt-bound}
Choose $\epsilon<1/2$ and $n$ such that
\begin{equation}
\frac{n}{\log^4 n}=\mathcal{O}\Bigl(\frac{1/\epsilon^2\log d}{(\lambda_1-\lambda_2)^2 d} \Bigr).    
\end{equation}.
Let 
\begin{align}
    \phi_t &= |\lambda_1-\lambda_d|\tan^2 \theta(q_0,v_1)\left|\frac{\lambda_2}{\lambda_1}\right|^{2t} + \min\{\frac{\epsilon(\lambda_1-\lambda_2)}{5},\frac{\lambda_1-\lambda_2}{5\sqrt{d}}\}\\ 
    \psi_t &=\sqrt{2-2\sqrt{(1-\min\{1,\max\{\epsilon^2,(\lambda_2/\lambda_1)^{t/2}\tan^2\theta_0 \})\}}}.
\end{align}
Then with all but $\mathcal{O}({1/n^2})$ probability, we have that $\norm{G_t}=\mathcal{O}(\max\{\phi_t, \psi_t \})$.
\end{proposition}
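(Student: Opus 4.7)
} The plan is to mimic the algebraic decomposition used in the proof of Lemma \ref{power-perturbation}, but to track carefully the additional noise introduced by replacing $A$ with its empirical estimate $\widehat{A}_t$. Write $q_t = v_1 + \xi_t$ and $\nu_t = \lambda_1 + \gamma_t$, so that
\begin{equation}
G_t = (\lambda_1 v_1 v_1^\top - \nu_t q_t q_t^\top)w_{t-1}
= -\lambda_1\bigl(v_1\xi_t^\top + \xi_t v_1^\top + \xi_t\xi_t^\top\bigr)w_{t-1} - \gamma_t\,q_t q_t^\top w_{t-1}.
\end{equation}
Since $\|w_{t-1}\| = 1$ and $\lambda_1 \leq 1$, sub-multiplicativity and the Cauchy--Schwarz inequality reduce the task to controlling $\|\xi_t\| = \|q_t - v_1\|$ and $|\gamma_t| = |\nu_t - \lambda_1|$; the two quantities will respectively produce the $\psi_t$ and $\phi_t$ terms in the claimed bound.

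First I would bound $\|q_t - v_1\|$. Because $n$ is chosen to satisfy the hypothesis of Proposition \ref{dmspm-bounds}, Lemma \ref{lemma-sizeG} guarantees with probability at least $1 - \mathcal{O}(1/n^2)$ that the Hardt--Price conditions $5\|H_t\| \leq \epsilon(\lambda_1-\lambda_2)$ and $5\|v_1^\top H_t\| \leq (\lambda_1-\lambda_2)/\sqrt{d}$ hold at every round of the pre-momentum phase. Proposition \ref{noisy-sinerror} then yields $\sin\theta(q_t, v_1) \leq \max\{\epsilon, (\lambda_2/\lambda_1)^{t/4}\tan\theta_0\}$, and the unit-vector identity $\|q_t - v_1\|^2 = 2 - 2\sqrt{1-\sin^2\theta(q_t, v_1)}$ — truncated at $1$ to account for the case when the bound saturates — produces exactly $\|q_t - v_1\| \leq \psi_t$.

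Next I would bound $|\gamma_t| = |\nu_t - \lambda_1|$ by splitting
\begin{equation}
\nu_t - \lambda_1 = \bigl(q_t^\top A q_t - \lambda_1\bigr) + q_t^\top(\widehat{A}_t - A)q_t.
\end{equation}
The first summand is a Rayleigh quotient of the true covariance $A$ evaluated at $q_t$, so Lemma \ref{vanillarayleighbound} (and its generalization Lemma \ref{general_rayleigh_bound}) contributes the deterministic decay term $|\lambda_1-\lambda_d|\tan^2\theta_0\,|\lambda_2/\lambda_1|^{2t}$. The second summand is bounded by $\|(\widehat{A}_t-A)q_t\|$ which, by Lemma \ref{lemma-sizeG} applied with $x = q_t$ together with the batch-size choice, does not exceed $\min\{\epsilon(\lambda_1-\lambda_2)/5,\,(\lambda_1-\lambda_2)/(5\sqrt{d})\}$. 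Summing these gives $|\gamma_t| \leq \phi_t$. Substituting both estimates back into the expansion of $G_t$ and absorbing the lower-order $\xi_t\xi_t^\top$ and $\gamma_t\xi_t$ cross-terms into the $\mathcal{O}(\cdot)$ concludes $\|G_t\| = \mathcal{O}(\max\{\phi_t,\psi_t\})$.

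The main obstacle is juggling the two independent perturbation sources — the streaming mean error $H_t$ and the inexact-deflation error $G_t$ itself, which also pollutes the iterates $q_t$ driving the Rayleigh quotient — and ensuring the Proposition \ref{noisy-sinerror} preconditions persist uniformly in $t$. In particular, the $\mathcal{O}(1/n^2)$ probability budget must cover a single application of Lemma \ref{lemma-sizeG} valid across every pre-momentum round, which relies on the sub-Gaussian concentration already embedded in that lemma rather than a union bound that would degrade to $T/n^2$.
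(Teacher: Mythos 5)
Your argument is correct and follows essentially the same route as the paper: it splits $\nu_t - \lambda_1$ into a true Rayleigh-quotient error plus a streaming estimation term $q_t^\top(\widehat A_t - A)q_t$ (handled via Lemma~\ref{general_rayleigh_bound} and Lemma~\ref{lemma-sizeG}/Proposition~\ref{dmspm-bounds}, respectively), bounds $\|q_t - v_1\|$ via Proposition~\ref{noisy-sinerror} together with the unit-vector identity, and then feeds both estimates into the same algebraic expansion used in Lemma~\ref{power-perturbation}. The only differences are cosmetic — you make the expansion of $G_t$ explicit where the paper refers back to Lemma~\ref{power-perturbation}, and you use the constant $1/5$ consistently, which actually fixes a small $1/5$ versus $1/10$ mismatch between the paper's statement and proof.
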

\begin{proof}
We first consider $|\lambda_1-\nu_t|=|\lambda_1-q_{t}^\top\widehat{A}_tq_{t}|$. Let $\theta_0=q_0^\top v_1$. First observe that since $q_t \rightarrow v_1$, we have by Lemma \ref{general_rayleigh_bound} that $|\lambda_1-q_t^\top A q_t|=\mathcal{O}(\sin^2(q_t,v_1))$. We have then that
\begin{align}
    |\lambda_1-\nu_t|=|\lambda_1-q_{t}^\top\widehat{A}_tq_{1}|&=|\lambda_1-q_{t}^\top Aq_{t} + q_{t}^\top Aq_{t} - q_{t}^\top\widehat{A}_tq_{t}|\\
    &\leq |\lambda_1-q_{t}^\top Aq_{t}| + |q_{t}^\top(A -\widehat{A}_t)q_{t}|\\
    &\leq |\lambda_1-\lambda_d|\tan^2 \theta(q_0,v_1)\left|\frac{\lambda_2}{\lambda_1}\right|^{2t} + \min\{\frac{\epsilon(\lambda_1-\lambda_2)}{10},\frac{\lambda_1-\lambda_2}{10\sqrt{d}}\}
\end{align}
where the last inequality follows from applying the result on Rayleigh quotient approximation in Lemma \ref{general_rayleigh_bound} to the left term and Proposition \ref{dmspm-bounds} along with the Cauchy-Schwarz inequality to the right term. 

We now examine $||v_1-q_t||$. We have chosen $n$ in such a way that we satisfy the Hardt-Price bounds of equation \ref{hardt-price-bounds}, therefore, we may invoke Proposition \ref{noisy-sinerror} and conclude that
\begin{equation}
||v_1 - q_t || \leq \sqrt{2-2\sqrt{(1-\min\{1,\max\{\epsilon^2,(\lambda_2/\lambda_1)^{t/2}\tan^2\theta_0 \})\}}},
\end{equation}
where we have used the identity $||v_1-q_t||^2 = 2 - 2\cos\theta(v_1,q_t)$ since unit $v_1$ and $q_t$ are unit vectors. Following the exact same analysis as in the proof of Lemma $\ref{power-perturbation}$, we will arrive at
\begin{equation}
    ||G_t||=\mathcal{O}(\max\{|\lambda_1-\nu_t|,||v_1-q_t||\}).
\end{equation}
Taking the upper bounds on $||\lambda_1-\nu_t||$ and $||v_1-q_t||$ which we derived above gives us our result. 
\end{proof}
We are now prepared to prove our main convergence theorem for DMStream.

\begin{theorem}[Restating of Theorem~\ref{thm:main-DMSPM}]\label{thm:main-DMSPM-again}
Let $\Sigma=\mathbb{E}[(\widehat{A}_t-A)\otimes(\widehat{A}_t-A)]$, where $\widehat{A}_j=\frac{1}{n}\sum_{i=1}^n x_ix_i^\top$ represents any unbiased estimate of $A$ in DMStream with fixed batch size $n$. Assume we initialize with a unit $q_0\in\mathbb{R}^d$ where $d \gg 0$ and $|v_1^\top q_0| \geq 1/2$. Let $\theta_0=\arccos{|q_0^\top v_1|}$. For any $\delta <1$, $\epsilon<1$, suppose
\begin{equation}
     ||\Sigma||\leq \frac{(\lambda_1^2-4\beta)\delta\epsilon}{256\sqrt{d}J} =
    \frac{(\lambda_1^2-4\beta)^{3/2}\delta\epsilon}{256\sqrt{d}\sqrt{\beta}}\log^{-1}\Bigl( \frac{32}{\delta\epsilon} \Bigr),
\end{equation}
where $J$ is the total number of pre-momentum steps we have fixed at runtime. Furthermore, we let $\rho<\min\{1/2,\sqrt{\frac{\lambda_1-\lambda_2}{\lambda_2-\lambda_d}}\}$ represent the error threshold of our $\lambda_2$ estimates, i.e., $|\mu_k-\lambda_2|<\rho$. Lastly, fix $\tau>1$ and $\delta=\min\{\rho,\frac{1}{\tau \sqrt{d}}\}$.
If the batch size $n$ is chosen such that 
\begin{equation}
\frac{n}{\log^4 n}=\mathcal{O}\Bigl(\frac{1/\gamma^2\log d}{(\lambda_2-\lambda_3)^2 d} \Bigr).    
\end{equation}
where $\gamma = \frac{\rho(\lambda_2-\lambda_3)}{10\tau\sqrt{d}}$, then after 

\begin{align}
 J &=\mathcal{O}\Biggl(\frac{1}{\lambda_1-\lambda_2}\log \biggl(\frac{\tan^2 \theta_0\tau\sqrt{d}}{\rho(\lambda_2-\lambda_3)}\biggr) + \frac{\lambda_2}{\lambda_2-\lambda_3}\log\frac{d\tau}{\rho}\Biggr),\\
 K &=\frac{\sqrt{\beta}}{\sqrt{\lambda_1-4\beta}}\log\Bigl(\frac{32}{\delta\epsilon}\Bigr) \hspace{0.2cm}
\end{align}
 pre-momentum steps and momentum steps respectively, with $(1-\frac{1}{n^2})(1-2\delta)(1-\tau^{-\Omega(1)}+e^{-\Omega(d)})$ probability DMStream outputs a vector $q_{K}$ such that
 \begin{equation}
 \sin^2\angle(q_{K},v_1)<\epsilon.
 \end{equation}
\end{theorem}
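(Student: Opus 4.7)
The plan is to mirror the two-phase structure of the DMPower proof, but the pre-momentum phase requires more care because each inexact deflation step now carries two independent sources of noise. Writing $B := A - \lambda_1 v_1 v_1^\top$, I would first show that one iteration of the streaming deflation satisfies
\[
(\widehat{A}_t - \nu_t q_t q_t^\top) w_{t-1} = B w_{t-1} + H_t + G_t,
\]
where $H_t = (A - \widehat{A}_t) w_{t-1}$ is the sampling noise and $G_t = (\lambda_1 v_1 v_1^\top - \nu_t q_t q_t^\top) w_{t-1}$ is the inexact-deflation noise. So the pre-momentum phase is precisely a noisy power method on $B$ aimed at recovering $v_2$, with aggregate noise $H_t + G_t$ and relevant eigengap $\lambda_2 - \lambda_3$.

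For the first phase I would then verify the Hardt-Price bounds (Equation \ref{hardt-price-bounds}) for $\|H_t + G_t\|$ and $\|v_2^\top(H_t + G_t)\|$. The stated batch-size condition on $n$, together with Proposition \ref{dmspm-bounds} applied with gap $\lambda_2 - \lambda_3$ and tolerance $\gamma = \rho(\lambda_2-\lambda_3)/(10\tau\sqrt{d})$, controls $\|H_t\|$ with probability at least $1 - \mathcal{O}(1/n^2)$. For $\|G_t\|$ I would invoke Proposition \ref{gt-bound}: once the concurrent streaming power iteration on $q_t$ has been running long enough to drive both $\phi_t$ and $\psi_t$ below $\rho(\lambda_2 - \lambda_3)/(10\tau\sqrt{d})$, the aggregate noise satisfies the Hardt-Price bounds. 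Counting iterations, the outer iterate needs $\mathcal{O}\bigl(\frac{1}{\lambda_1-\lambda_2}\log(\frac{\tan^2\theta_0\,\tau\sqrt{d}}{\rho(\lambda_2-\lambda_3)})\bigr)$ rounds to suppress the initial-angle terms in $\phi_t,\psi_t$; after that, Corollary 1.1 of Hardt-Price contributes the second summand $\mathcal{O}\bigl(\frac{\lambda_2}{\lambda_2-\lambda_3}\log\frac{d\tau}{\rho}\bigr)$ needed to drive $\sin\theta(w_J,v_2) < \rho$. Then Lemma \ref{general_rayleigh_bound} and the choice $\rho < \sqrt{(\lambda_1-\lambda_2)/(\lambda_2-\lambda_d)}$ yield $|\lambda_2 - \mu_J| < (\lambda_2-\lambda_d)\rho^2 < \lambda_1-\lambda_2$, so by Proposition \ref{prop-rhoprecision} the choice $\beta = \mu_J^2/4$ lands inside the convergence interval $[\lambda_2^2/4, \lambda_1^2/4)$.

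For the momentum phase, having obtained a convergent $\beta$, I would directly invoke Theorem \ref{momentum-stream-desa} of De Sa et al., whose hypotheses on $\|\Sigma\|$ are assumed in the statement; it delivers $\sin^2\theta(q_K,v_1) < \epsilon$ after $K = \frac{\sqrt{\beta}}{\sqrt{\lambda_1^2 - 4\beta}}\log(\frac{32}{\delta\epsilon})$ momentum updates with probability at least $1 - 2\delta$. The final success probability $(1 - 1/n^2)(1 - 2\delta)(1 - \tau^{-\Omega(1)} + e^{-\Omega(d)})$ follows from a union bound over the sampling-error event of Lemma \ref{lemma-sizeG}, the noisy-power convergence on $B$, and the streaming momentum guarantee.

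The main obstacle is the coupled analysis of $H_t$ and $G_t$ in the pre-momentum phase: $G_t$ is neither mean-zero nor independent of past randomness because it depends on the concurrent streaming iterate $q_t$, which is itself evolving under the same noisy $\widehat{A}_t$. To invoke Proposition \ref{gt-bound}, one must first certify that the streaming power iteration producing $q_t$ converges under the chosen batch size $n$, and then intersect this convergence event with the deflation-side event. Once this bookkeeping is handled, the rest reduces to plugging tail bounds into the Hardt-Price framework and chaining with the already-established streaming-momentum result.
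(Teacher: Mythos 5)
Your proposal follows essentially the same route as the paper's own proof: the same decomposition $(\widehat{A}_t-\nu_tq_tq_t^\top)w_{t-1}=Bw_{t-1}+H_t+G_t$, the same invocation of Proposition \ref{dmspm-bounds} to control $\|H_t\|$ and Proposition \ref{gt-bound} (via the $\phi_t$/$\psi_t$ split) to control $\|G_t\|$ until the Hardt--Price bounds kick in, the same appeal to Lemma \ref{general_rayleigh_bound} and Proposition \ref{prop-rhoprecision} to certify $\beta\in[\lambda_2^2/4,\lambda_1^2/4)$, and the same hand-off to Theorem \ref{momentum-stream-desa} for the momentum phase with the probabilities multiplied at the end. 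Your closing remark about the statistical coupling between $q_t$ and $G_t$ is a fair observation, but the paper handles it in the same informal way you describe rather than giving a more careful conditioning argument, so this does not represent a divergence in approach.
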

\begin{proof}
As a reminder, our exact deflation matrix $B$ has spectrum $\lambda_2 > \lambda_3 \geq \lambda_4 \geq \cdots \lambda_n$ with respective eigenvectors $v_2, v_3, \dots, v_n$. We will show that for our choice of $n$ and $J$, that $||H_t||+||G_t||$ for all $t>J$ satisfy the Hardt-Price bounds, therefore allowing our inexact deflation to probabilistically succeed.

First we examine $||H_t||$. By our choice of $n$ and since $d \gg 0$, we have by Proposition \ref{dmspm-bounds} that 
\begin{align}
 ||H_t|| &\leq \frac{\gamma(\lambda_2-\lambda_3)}{5}= \frac{\rho(\lambda_2-\lambda_3)^2}{50\tau\sqrt{d}} < \frac{\rho(\lambda_2-\lambda_3)}{10\tau\sqrt{d}},\\
 ||v_2^\top H_t||&\leq ||H_t|| < \frac{\rho(\lambda_2-\lambda_3)}{10\tau\sqrt{d}}.
\end{align}
Now, we will analyze $||G_t||$. By Proposition $\ref{gt-bound}$, we must consider both $\phi_t$ and $\psi_t$. 

\noindent \textit{Case 1:} $\phi_t > \psi_t$. For our choice of $n$ we have that 
\begin{align}
\phi_t &= |\lambda_1-\lambda_d|\tan^2 \theta(q_0,v_1)\left|\frac{\lambda_2}{\lambda_1}\right|^{2t} + \min\{\frac{\gamma(\lambda_1-\lambda_2)}{5},\frac{\lambda_1-\lambda_2}{5\sqrt{d}}\}\\
&= |\lambda_1-\lambda_d|\tan^2 \theta(q_0,v_1)\left|\frac{\lambda_2}{\lambda_1}\right|^{2t} + 
\frac{\rho(\lambda_2-\lambda_3)}{50\tau\sqrt{d}}\\
&<|\lambda_1-\lambda_d|\tan^2 \theta(q_0,v_1)\left|\frac{\lambda_2}{\lambda_1}\right|^{2t} + 
\frac{\rho(\lambda_2-\lambda_3)}{20\tau\sqrt{d}}.
\end{align}
Solving for $|\lambda_1-\lambda_d|\tan^2 \theta_0\left|\frac{\lambda_2}{\lambda_1}\right|^{2t}< \frac{\rho(\lambda_2-\lambda_3)}{20\tau\sqrt{d}}$ we get
\begin{equation}
 t=\mathcal{O}\Biggl(\frac{1}{\lambda_1-\lambda_2}\log \biggl(\frac{\tan^2 \theta_0\tau\sqrt{d}}{\rho(\lambda_2-\lambda_3)}\biggr)\Biggr),
 \end{equation}
therefore, in this many steps we will have that $\phi_t< \frac{\rho(\lambda_2-\lambda_3)}{10\tau\sqrt{d}}$.

\noindent \textit{Case 2:} $\psi_t \geq \phi_t$. 
If $\min\{1,\max\{\gamma^2,(\lambda_2/\lambda_1)^{t/2}\tan^2\theta_0 \})\}=1$ then $||G_t||\leq \psi_t=0$, i.e., we have 0 noise. So we consider the more interesting case where our minimum is $\max\{\gamma^2,(\lambda_2/\lambda_1)^{t/2}\tan^2\theta_0 \}$. We first note that for any $\alpha \leq 1$, we have that 
\begin{equation}
    \psi_t =\sqrt{2-2\sqrt{1-\alpha}}\leq \alpha.
\end{equation}
If for all $t$ our max is $\gamma^2$, we have that 
\begin{align}
||G_t||\leq \psi_t \leq \gamma^2 \leq  \frac{\rho^2(\lambda_2-\lambda_3)^2}{100\tau^2d}< \frac{\rho(\lambda_2-\lambda_3)}{10\tau\sqrt{d}}.
\end{align}
Otherwise, solving for 
\begin{equation}
    \left|\frac{\lambda_2}{\lambda_1}\right|^{t/2}\tan^2\theta(q_0,v_1)<\frac{\rho(\lambda_2-\lambda_3)}{10\tau\sqrt{d}}
\end{equation}
we get
\begin{equation}
 t=\mathcal{O}\Biggl(\frac{1}{\lambda_1-\lambda_2}\log \biggl(\frac{\tan^2 \theta_0\tau\sqrt{d}}{\rho(\lambda_2-\lambda_3)}\biggr)\Biggr). \end{equation}
Now, for our choice of $n$, after $J_1 := t=\mathcal{O}\Biggl(\frac{1}{\lambda_1-\lambda_2}\log \biggl(\frac{\tan^2 \theta_0\tau\sqrt{d}}{\rho(\lambda_2-\lambda_3)}\biggr)\Biggr)$ steps, we have that for all $t>J_1$,
\begin{equation}
    ||H_t+G_t||\leq ||H_t||+||G_t|| < \frac{\rho(\lambda_2-\lambda_3)}{10\tau\sqrt{d}} + \frac{\rho(\lambda_2-\lambda_3)}{10\tau\sqrt{d}} <
    \frac{\rho(\lambda_2-\lambda_3)}{5}.
\end{equation}
and 
\begin{equation}
    ||v_2(H_t+G_t)||\leq ||H_t||+||G_t|| < \frac{\rho(\lambda_2-\lambda_3)}{10\tau\sqrt{d}} + \frac{\rho(\lambda_2-\lambda_3)}{10\tau\sqrt{d}} <
    \frac{\lambda_2-\lambda_3}{5\tau\sqrt{d}},
\end{equation}
thereby satisfying the Hardt-Price bounds of equation \ref{hardt-price-bounds}, so after a further $J_2 :=\mathcal{O}\bigl(\frac{\lambda_2}{\lambda_2-\lambda_3}\log\frac{d\tau}{\rho} \bigr)$ steps, our pre-momentum phase, with a total of $J=J_1+J_2$ steps, outputs a vector $q_J$ such that $\sin\theta(q_J,v_1)<\rho$ with probability $(1-\frac{1}{n^2})(1-\tau^{-\Omega(1)}+e^{-\Omega(d)})$. Through dual application of Lemma \ref{general_rayleigh_bound} and Proposition $\ref{prop-rhoprecision}$, similar to the proof of Theorem \ref{thm:main-DMPM}, we have that $\beta=\mu_J^2/4\in [\lambda_2^2/4, \lambda_2^2/4)$, therefore, we may proceed to the momentum phase. 

By our assumptions on the variance of our unbiased estimates $\widehat{A}_t$ in relation to $J$, we have by Theorem \ref{momentum-stream-desa} that after a further $K=\frac{\sqrt{\beta}}{\sqrt{\lambda_1-4\beta}}\log\Bigl(\frac{32}{\delta\epsilon}\Bigr)$ steps our entire algorithm outputs a vector $q_{J+K}$ with $\sin^2 \theta(q_{J+K},v_1)<\epsilon$, with probability $(1-2\delta)$. By multiplying the probability of the pre-momentum phase succeeding with the probability of the momentum phase succeeding, our full claim follows. 
\end{proof}

\section{Experimental Data}\label{app:exp_data}

We provide raw experimental data used to generate various plots displayed in this paper (mostly related to iteration complexity) and include comparative wall-time benchmarks for vanilla power method, Power+M, DMPower, and the Lanczos algorithm.

\begin{table}[H]
\centering
\caption{Iterations required for Power+M to converge at various sub-optimal and optimal $\beta$ assignments. $\beta=\lambda_2^2/4=0.2025$ is the optimal momentum coefficient at this setting, where $\lambda_2 = 0.9$.}
\label{table:one}
\resizebox{\textwidth}{!}{
\begin{tabular}{|| c | c c c c c c c ||} 
 \hline
 \multicolumn{8}{|c|}{\textbf{Sub-optimal $\beta$ Selection for Power+M Data} \small{with Loose Eigengaps: \texttt{spec}=[1,0.9,0.8,\dots,0.8]}} \\
 \hline
 Error Threshold ($\epsilon$) & 10e-9 & 10e-8 & 10e-7 & 10e-6 & 10e-5 & 10e-4 & 10e-3 \\
 \hline\hline
 Vanilla Power Method & 81.097 & 70.309 & 59.230 &  48.746 & 37.263 & 26.873 & 16.671 \\ 
 \hline
 Power+M, \small{$\beta = 0.1025$} & 60.463 & 52.565 & 44.420 & 36.745 & 28.272 & 20.561 & 12.859 \\
 \hline
 Power+M, \small{$\beta = 0.2025$} & \textbf{34.986} & \textbf{30.954} & \textbf{26.764} & \textbf{22.879} & \textbf{18.497} & \textbf{14.472} & \textbf{10.205} \\
 \hline
 Power+M, \small{$\beta = 0.3025$} & 43.579 & 38.605 & 32.439 & 27.265 & 20.991 & 16.182 & 10.981 \\
 \hline
 Power+M, \small{$\beta = 0.4025$} & 89.440 & 76.524 & 65.099 & 54.101 & 42.651 & 31.200 & 19.501 \\
 \hline
 Power+M, \small{$\beta = 0.4225$} & 111.972 &  94.846 &  81.729 &  68.635 &  50.530 & 36.275 & 23.144 \\
 \hline
 Power+M, \small{$\beta = 0.4525$} & 179.062 & 155.476 & 130.219 & 106.71 &  81.776 &  58.348 & 32.412 \\
 \hline
\end{tabular}
}
\end{table}

\begin{table}[H]
\centering
\caption{Absolute difference between final approximation of $\lambda_2$ and true value of $\lambda_2$.}
\label{table:two}
\begin{tabular}{|| c | c c c c c c c ||} 
 \hline
 \multicolumn{8}{|c|}{\textbf{Accuracy vs Simultaneous Power Data} \small{with Loose Eigengaps: \texttt{spec}=[1,0.9,0.8,\dots,0.8]}} \\
 \hline
 Error Threshold ($\epsilon$) & 10e-9 & 10e-8 & 10e-7 & 10e-6 & 10e-5 & 10e-4 & 10e-3 \\
 \hline\hline
 Simultaneous PM & 0.1723 & 0.1721 & 0.1684 & 0.1628 & 0.1466 & 0.1107 & 0.1126 \\
 \hline
 DMPower, $\rho = \epsilon$ & \textbf{0.0000} & \textbf{0.0000} & \textbf{0.0003} & \textbf{0.0017} & \textbf{0.0054} & \textbf{0.0370} & 0.0678 \\
 \hline
 DMPower, $\rho = \sqrt{\epsilon}$ & 0.0065 & 0.0145 & 0.0316 & 0.0696 & 0.0672 & 0.0552 & 0.0574 \\
 \hline
 DMPower, $\rho = \sqrt[3]{\epsilon}$ & 0.0570 & 0.0692 &  0.0648 & 0.0545 & 0.0570 & 0.0577 & 0.0505 \\
 \hline
 DMPower, $\rho = \sqrt[4]{\epsilon}$ & 0.0667 & 0.0538 &  0.0537 & 0.0564 & 0.0587 & 0.0529 & \textbf{0.0488} \\
 \hline
\end{tabular}
\end{table}

\begin{table}[H]
\centering
\caption{Iterations for Vanilla PM, Power+M, and DMPower at \texttt{spec}=[1, 0.99, 0.98,\dots, 0.98].}
\label{table:three}
\begin{tabular}{|| c | c c c c c c ||} 
 \hline
 \multicolumn{7}{|c|}{\textbf{Iteration Complexity Data} \small{with $A\in\mathbb{R}^{10\times 10}$}} \\
 \hline
 Error Threshold ($\epsilon$) & 10e-2 & 10e-3 & 10e-4 & 10e-5 & 10e-6 & 10e-7 \\
 \hline\hline
 Vanilla Power Method & \textbf{2.0} & 77.18 & 185.4 & 285.4 & 385.42 & 474.62 \\ 
 \hline
 Power+M, $\beta = \lambda_2^2/4$ & 2.0 & 40.5 & 98.86 & 143.42 & 199.82 & \textbf{231.74} \\
 \hline
 DMPower, $\rho = \epsilon$ & 3.0 & 44.5 & 104.7 & 139.9 & 197.08 & 243.84 \\
 \hline
 DMPower, $\rho = \sqrt{\epsilon}$ & 3.0 & 33.86 & 93.7
 & 153.48 & 192.86 & 249.52 \\
 \hline
 DMPower, $\rho = \sqrt[3]{\epsilon}$ & 3.0 & 36.8 & 102.36 & \textbf{138.82} & 200.48 & 234.48 \\
 \hline
 DMPower, $\rho = \sqrt[4]{\epsilon}$ & 3.0 & 37.7 & \textbf{81.28} & 144.02 & \textbf{192.72} & 243.58 \\
 \hline
 Lanczos Algorithm & 10.0 & \textbf{10.0} & 89.08 & 184.66 & 284.78 & 388.56 \\ 
 \hline

\end{tabular}
\end{table}

\vspace*{-5mm}

\begin{table}[H]
\centering
\label{table:four}
\begin{tabular}{|| c | c c c c c c ||} 
 \hline
 \multicolumn{7}{|c|}{\textbf{Iteration Complexity Data} \small{with $A\in\mathbb{R}^{100\times 100}$}} \\
 \hline
 Error Threshold ($\epsilon$) & 10e-2 & 10e-3 & 10e-4 & 10e-5 & 10e-6 & 10e-7 \\
 \hline\hline
 Vanilla Power Method & \textbf{1.0} & 141.18 & 211.34 & 293.94 & 378.1 & 472.98 \\ 
 \hline
 Power+M, $\beta = \lambda_2^2/4$ & 2.0 & \textbf{68.66} & 120.4 & \textbf{152.32} & 197.84 & 262.8 \\
 \hline
 DMPower, $\rho = \epsilon$ & 3.0 & 76.54 & 113.9 & 156.6 & 203.26 & 259.2 \\
 \hline
 DMPower, $\rho = \sqrt{\epsilon}$ & 3.0 & 71.92 & 116.5 & 156.08 & \textbf{191.64} & 257.66 \\
 \hline
 DMPower, $\rho = \sqrt[3]{\epsilon}$ & 3.0 & 75.44 & 114.6 & 159.08 & 194.34 & \textbf{238.66} \\
 \hline
 DMPower, $\rho = \sqrt[4]{\epsilon}$ & 3.0 & 77.1 & \textbf{107.04} & 158.2 & 196.74 & 245.38 \\
 \hline
 Lanczos Algorithm & 100.0 & 100.0 & 206.06 & 315.62 & 431.12 & 551.76 \\ 
 \hline

\end{tabular}
\end{table}

\vspace*{-5mm}

\begin{table}[H]
\centering
\label{table:five}
\begin{tabular}{|| c | c c c c c c ||} 
 \hline
 \multicolumn{7}{|c|}{\textbf{Iteration Complexity Data} \small{with $A\in\mathbb{R}^{500\times 500}$}} \\
 \hline
 Error Threshold ($\epsilon$) & 10e-2 & 10e-3 & 10e-4 & 10e-5 & 10e-6 & 10e-7 \\
 \hline\hline
 Vanilla Power Method & \textbf{1.0} & 185.98 & 259.22 & 360.12 & 429.16 & 489.4 \\ 
 \hline
 Power+M, $\beta = \lambda_2^2/4$ & 1.0 & 93.16 & 133.48 & 163.18 & 214.64 & 259.0 \\
 \hline
 DMPower, $\rho = \epsilon$ & 2.0 & 102.94 & 133.18 & 163.98 & \textbf{203.88} & 264.72 \\
 \hline
 DMPower, $\rho = \sqrt{\epsilon}$ & 2.0 & \textbf{93.06} & \textbf{131.76} & 171.62 & 207.24 & 252.4 \\
 \hline
 DMPower, $\rho = \sqrt[3]{\epsilon}$ & 2.0 & 97.56 & 143.66 & 170.06 & 212.36 & \textbf{252.24} \\
 \hline
 DMPower, $\rho = \sqrt[4]{\epsilon}$ & 2.0 & 94.12 & 137.5 & \textbf{161.2} & 213.24 & 262.36 \\
 \hline
 Lanczos Algorithm & 500.0 & 500.0 & 607.24 & 717.84 & 833.56 & 964.18 \\ 
 \hline

\end{tabular}
\end{table} 

\begin{table}[H]
\centering
\caption{Percentage of iterations in the \mbox{\textit{momentum phase}} for \texttt{spec}=[1, 0.99, 0.98,\dots, 0.98] with $A\in\mathbb{R}^{100\times 100}$. DMPower across a variety of $\rho$ settings spends the vast majority of its time in the less computationally-intensive momentum phase.}
\label{table:six}
\begin{tabular}{|| c | c c c c ||} 
 \hline
 \multicolumn{5}{|c|}{\textbf{Phases Iteration Complexity Data} \small{with $A\in\mathbb{R}^{100\times 100}$}} \\
 \hline
 Error Threshold ($\epsilon$) & 10e-4 & 10e-5 & 10e-6 & 10e-7 \\
 \hline\hline
 DMPower, $\rho = \epsilon$ & 64.72\% & 88.15\% & 99.06\% & 98.71\% \\
 \hline
 DMPower, $\rho = \sqrt{\epsilon}$ & 99.6\% & 99.49\% & 99.28\%
 & 98.72\% \\
 \hline
 DMPower, $\rho = \sqrt[3]{\epsilon}$ & 99.6\% & 99.49\% & 99.29\% & 98.75\% \\
 \hline

\end{tabular}
\end{table}

\begin{table}[H]
\centering
\caption{Recorded wall-time convergence speeds (in nanoseconds) for vanilla power method, Power+M with optimal $\beta$ assignment, and various settings of DMPower, and the Lanczos algorithm. Performed 1000 calls using random PSD of various sizes with fixed spectrum $\lambda_1=1, \lambda_2=0.99, \lambda_3=0.98$, and remaining eigenvalues set to $0.98$. In several instances DMPower exhibits faster wall-time speeds than Power+M with optimal $\beta$ assignment, consistently outperforms Lanczos, and markedly accelerates the vanilla power method at all error thresholds ($\epsilon$) tighter than 0.1.}
\label{table:seven}
\begin{tabular}{|| c | c c c c c c ||} 
 \hline
 \multicolumn{7}{|c|}{\textbf{Wall-Time Performance Data} \small{with $A\in\mathbb{R}^{10\times 10}$}} \\
 \hline
 Error Threshold ($\epsilon$) & 10e-2 & 10e-3 & 10e-4 & 10e-5 & 10e-6 & 10e-7 \\
 \hline\hline
 Vanilla Power Method & \textbf{67.32} & 3186.6 & 7440.56 & 11497.34 & 15388.04 & 18846.76 \\ 
 \hline
 Power+M, $\beta = \lambda_2^2/4$ & 101.54 & 1941.46 & 4539.84 & \textbf{6140.58} & \textbf{8514.94} & \textbf{9984.74} \\
 \hline
 DMPower, $\rho = \epsilon$ & 251.28 & 2117.2 & 4813.14 & 6436.1 & 8982.4 & 11550.66 \\
 \hline
 DMPower, $\rho = \sqrt{\epsilon}$ & 262.54 & 1696.62 & 4383.94 & 7066.94 & 8869.68 & 11470.54 \\
 \hline
 DMPower, $\rho = \sqrt[3]{\epsilon}$ & 260.84  & 1837.14 & 4732.32 & 6508.14 & 9440.62 & 16254.96 \\
 \hline
 DMPower, $\rho = \sqrt[4]{\epsilon}$ & 256.94 & 1878.86 & \textbf{3830.36} & 6586.34 & 8938.38 & 11294.3 \\
 \hline
 Lanczos Algorithm & 1463.06 & \textbf{1471.86} & 5818.92 & 7836.14 & 11705.58 & 22829.88 \\ 
 \hline

\end{tabular}
\end{table}

\begin{table}[H]
\centering
\label{table:eight}
\begin{tabular}{|| c | c c c c c c ||} 
 \hline
 \multicolumn{7}{|c|}{\textbf{Wall-Time Performance Data} \small{with $A\in\mathbb{R}^{100\times 100}$}} \\
 \hline
 Error Threshold ($\epsilon$) & 10e-2 & 10e-3 & 10e-4 & 10e-5 & 10e-6 & 10e-7 \\
 \hline\hline
 Vanilla Power Method & \textbf{107.24} & 10027.98 & 16213.82 & 21231.46 & 29691.06 & 31801.24 \\ 
 \hline
 Power+M, $\beta = \lambda_2^2/4$ & 132.2 & \textbf{4890.36} & 9701.76 & \textbf{10840.92} & \textbf{13930.12} & 21100.88 \\
 \hline
 DMPower, $\rho = \epsilon$ & 432.06 & 5969.96 & 8910.08 & 11985.2 & 15986.18 & \textbf{19431.32} \\
 \hline
 DMPower, $\rho = \sqrt{\epsilon}$ & 457.82 & 5765.98 & 9028.7 & 12397.06 & 15048.34 & 19432.9 \\
 \hline
 DMPower, $\rho = \sqrt[3]{\epsilon}$ & 435.48 & 6113.44 & 9393.82 & 12575.0 & 15028.84 & 20037.1 \\
 \hline
 DMPower, $\rho = \sqrt[4]{\epsilon}$ & 432.32 & 6094.92 & \textbf{8143.68} & 12136.16 & 16063.48 & 20166.24 \\
 \hline
 Lanczos Algorithm & 7724.94 & 6672.18 & 13832.8 & 21457.2 & 29447.52 & 39152.46 \\ 
 \hline

\end{tabular}
\end{table}

\begin{table}[H]
\centering
\label{table:nine}
\begin{tabular}{|| c | c c c c c c ||} 
 \hline
 \multicolumn{7}{|c|}{\textbf{Wall-Time Performance Data} \small{with $A\in\mathbb{R}^{500\times 500}$}} \\
 \hline
 Error Threshold ($\epsilon$) & 10e-2 & 10e-3 & 10e-4 & 10e-5 & 10e-6 & 10e-7 \\
 \hline\hline
 Vanilla Power Method & \textbf{307.9} & 27496.5 & 35858.14 & 49056.32 & 66559.52 & 70891.42 \\ 
 \hline
 Power+M, $\beta = \lambda_2^2/4$ & 337.26 & \textbf{12912.76} & \textbf{20098.66} & 24924.28 & 31590.88 & 49575.94 \\
 \hline
 DMPower, $\rho = \epsilon$ & 1423.64 & 15366.22 & 20172.32 & \textbf{24663.36} & \textbf{30818.32} & 38748.8 \\
 \hline
 DMPower, $\rho = \sqrt{\epsilon}$ & 2094.86 & 15868.62 & 20382.9 & 27008.76 & 30883.86 & 38207.12 \\
 \hline
 DMPower, $\rho = \sqrt[3]{\epsilon}$ & 1490.0 & 15209.92 & 22283.78 & 26917.66 & 31475.8 & \textbf{36617.02} \\
 \hline
 DMPower, $\rho = \sqrt[4]{\epsilon}$ & 1623.94 & 17126.1 & 20743.88 & 25229.04 & 32106.58 & 39925.98 \\
 \hline
 Lanczos Algorithm & 58438.68 & 58700.02 & 80682.24 & 91623.18 & 106078.32 & 126685.16 \\ 
 \hline

\end{tabular}
\end{table}

\begin{table}[H]
\centering
\caption{Averaged $\log$ error, batch size = 500, with performance measured by $\log_{10}\bigl(1 - \frac{||X^\top q_K||}{||X^\top v_1||} \bigr)$. DMStream registers much better accuracy than Oja's algorithm and emulates the performance of optimal Mini-Batch Power+M. We notice that accuracy does not improve as we increase the number of epochs, which is commonly observed for streaming algorithm running with small batch sizes. We demonstrate in Figure \ref{fig:stream} that increasing batch size results in improved accuracy.}
\label{table:ten}
\begin{tabular}{|| c | c c c c c ||} 
 \hline
 \multicolumn{6}{|c|}{\textbf{Log Error Performance Data}  with batch size = 500} \\
 \hline
 Epochs & 10 & 20 & 30 & 40 & 50 \\
 \hline\hline
 DMStream, $\rho = 0.1$ & -1.900 & -1.894 & -1.983 & -1.969 & -1.959 \\
 \hline
 DMStream, $\rho = 0.01$ & -1.992 & -1.908 & -1.882 & -1.949 & -1.905 \\
 \hline
 DMSteam, $\rho = 0.001$ &   -1.929 & -1.9585 & -1.936 & -1.963 & -1.973 \\
 \hline
 Oja $\eta_t=3/t$ & -0.588 & -0.599 & -0.625 & -0.565 & -0.549 \\
 \hline
 Oja $\eta_t=9/t$ &  -0.629 & -0.592 & -0.599 & -0.531 & -0.638 \\
 \hline
 Oja $\eta_t=27/t$ &  -0.680 & -0.668 & -0.584 & -0.599 & -0.647 \\
 \hline
 Oja $\eta_t=81/t$ & -0.590 & -0.676 & -0.527 & -0.604 & -0.665 \\
 \hline
 Mini-Batch Power+M (optimal $\beta=\lambda_2^2/4$) &  -1.881 & -1.860 & -1.964 & -1.996 & -1.966 \\
 \hline
 
\end{tabular}
\end{table}
\section{Precision Bounds for Momentum}\label{app:precision_bounds}
In our main result Theorem $\ref{thm:main-DMPM}$, we assume that $\beta \in [\lambda_2^2/4,\lambda_1^2/4)$, while it is possible that we select $\beta<{\lambda_2^2}/4$. This poses no problem: as long as $\beta$ is near $\lambda_2^2/4$ we will experience similar acceleration effects. We state the more general version of Theorem \ref{PowerMThm} which reflects this fact and add our own modified condition, $\Delta_{2,3}:=\lambda_2-\lambda_3>0$.
\begin{theorem} [Generalized Convergence of Power+M~\citep{de2018accelerated}]
\label{PowerMThm-Restated}
Given a PSD matrix $A\in\mathbb{R}^{d\times d}$ with eigenvalues $ \lambda_1>\lambda_2 > \lambda_3\dots \lambda_d\geq 0$ with associated orthonormal eigenvectors $v_1,v_2,\dots,v_d$, for a unit $q_0\in\mathbb{R}^{n}$ non-orthogonal to $v_1$, running Power+M with $\beta\leq \lambda_1$ results in $q_k$ with

\begin{equation}
\sin^2(\theta_k)=1-(q_k^{\top}v_1)^2\leq \frac{1}{|q_0^{\top}v_1|^2}\cdot\begin{cases} 
      {4\biggl(\frac{2\sqrt{\beta}}{\lambda_1+\sqrt{\lambda_1^2-4\beta}}\biggr)}^{2k}, & \lambda_2 < 2\sqrt{\beta} \\
      \biggl(\frac{\lambda_2+\sqrt{\lambda_2^2-4\beta}}{\lambda_1+\sqrt{\lambda_1^2-4\beta}}\biggr)^{2k} ,& \lambda_2 \geq 2\sqrt{\beta}
      \end{cases}
\end{equation}
\end{theorem}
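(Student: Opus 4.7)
The plan is to diagonalize the Power+M recursion into $d$ decoupled scalar second-order linear recurrences in the eigenbasis of $A$, solve each in closed form via its characteristic roots, and then bound $\sin^2\theta_k$ as a ratio of squared coordinate sums. Since $\sin^2\theta_k$ depends only on the direction of the iterate, I would work with the unnormalized iterates $p_k$ defined by $p_k = Ap_{k-1} - \beta p_{k-2}$, $p_0 = q_0$, $p_{-1} = 0$. Expanding $q_0 = \sum_{i=1}^d c_i v_i$ in the orthonormal eigenbasis (with $c_1 = q_0^\top v_1 \neq 0$), the recursion commutes with the eigendecomposition, so $p_k = \sum_i a_k^{(i)} v_i$ where each coordinate independently satisfies
$$a_k^{(i)} = \lambda_i\, a_{k-1}^{(i)} - \beta\, a_{k-2}^{(i)}, \qquad a_0^{(i)} = c_i,\quad a_{-1}^{(i)} = 0.$$

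Each scalar recurrence has characteristic equation $r^2 - \lambda_i r + \beta = 0$ with roots $r_\pm^{(i)} = (\lambda_i \pm \sqrt{\lambda_i^2-4\beta})/2$ and closed form $a_k^{(i)} = c_i\,\beta^{k/2}\, U_k(\lambda_i/(2\sqrt\beta))$, where $U_k$ is the Chebyshev polynomial of the second kind. This single representation unifies the \emph{hyperbolic} regime $\lambda_i > 2\sqrt\beta$ (real positive roots; $U_k(\cosh\psi_i) = \sinh((k+1)\psi_i)/\sinh\psi_i$, asymptotic growth $(r_+^{(i)})^k$) with the \emph{trigonometric} regime $\lambda_i < 2\sqrt\beta$ (complex conjugate roots of modulus $\sqrt\beta$; $U_k(\cos\phi_i) = \sin((k+1)\phi_i)/\sin\phi_i$). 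Orthonormality of $\{v_i\}$ gives
$$\sin^2\theta_k \;=\; \frac{\sum_{i\geq 2}(a_k^{(i)})^2}{\sum_{i\geq 1}(a_k^{(i)})^2} \;\leq\; \frac{1}{c_1^2}\sum_{i\geq 2} c_i^2\,\frac{U_k(\lambda_i/(2\sqrt\beta))^2}{U_k(\lambda_1/(2\sqrt\beta))^2},$$
and since $\beta \leq \lambda_1^2/4$ the $v_1$-coordinate is always hyperbolic.

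The case split then mirrors the statement. When $\lambda_2 < 2\sqrt\beta$ every $i\geq 2$ is trigonometric, so the squared Chebyshev ratio becomes $\sin^2((k+1)\phi_i)\sinh^2\psi_1/(\sin^2\phi_i\,\sinh^2((k+1)\psi_1))$; bounding $|\sin((k+1)\phi_i)|\leq 1$, using $\sinh((k+1)\psi_1) \geq \tfrac{1}{2}(r_+^{(1)}/\sqrt\beta)^{k+1}(1-e^{-2(k+1)\psi_1})$, and summing against $\sum_i c_i^2 \leq 1$ produces the first-case bound $\frac{4}{|q_0^\top v_1|^2}\bigl(2\sqrt\beta/(\lambda_1+\sqrt{\lambda_1^2-4\beta})\bigr)^{2k}$ once the $\sinh\psi_1/\sin\phi_i$ factors are handled. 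When $\lambda_2 \geq 2\sqrt\beta$ every $i\geq 2$ is itself hyperbolic, and the ratio of $U_k$ factors reduces via the dominant-root asymptotics $U_k(\cosh\psi_i)\sim e^{k\psi_i}/(2\sinh\psi_i)$ to $(r_+^{(2)}/r_+^{(1)})^{2k}$ by monotonicity of $r_+(\lambda)$ in $\lambda$ at fixed $\beta$, yielding the second case with constant $1/|q_0^\top v_1|^2$.

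The main delicate point, and where the proof follows De Sa et al. essentially verbatim, is extracting the clean universal constant $4$ in the first case: both $\sinh\psi_1$ in the denominator lower bound and $\sin\phi_i$ in the numerator upper bound vanish at the common boundary $\lambda_1, \lambda_i \to 2\sqrt\beta$, and one must show that these near-singularities cancel exactly without leaving any residual $1/(\lambda_1^2-4\beta)$ or $1/(4\beta-\lambda_i^2)$ factor. The cleanest route is to write the full ratio directly in Chebyshev form, observe that $\sinh\psi_1/\sin\phi_i = \sqrt{(\lambda_1^2-4\beta)/(4\beta-\lambda_i^2)}$, and handle the two extreme regimes separately: when this quantity is large (i.e., $\lambda_1$ well above $2\sqrt\beta$) the improved hyperbolic lower bound $\sinh((k+1)\psi_1) \geq (r_+^{(1)}/\sqrt\beta)^{k+1}/4$ compensates, while when it is small (i.e., $\lambda_1$ close to $2\sqrt\beta$) the bound $\sinh((k+1)\psi_1) \geq (k+1)\sinh\psi_1$ takes over. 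This is pure constant chasing rather than conceptual work, but it is the only step where the proof is not formula-pushing from the closed-form solution of the scalar recurrence; the second case of the theorem has no such pathology since both coordinates sit strictly inside the hyperbolic regime and all relevant quantities remain uniformly bounded.
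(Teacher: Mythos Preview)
The paper does not prove this theorem. It is stated in Appendix~F purely as a citation from De~Sa et~al.~(2018), with no accompanying argument, and serves only to set up Proposition~\ref{prop-rhoprecision}. So there is no in-paper proof to compare against.

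Your sketch is the standard route and matches what De~Sa et~al.\ actually do in the cited work: pass to unnormalized iterates, decouple in the eigenbasis, recognize each scalar three-term recurrence as Chebyshev-$U_k$ in disguise, and bound the angle via the ratio of coordinate magnitudes. Your identification of the constant-$4$ extraction in the first case as the only non-mechanical step is accurate.

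One small correction in the second case: you write ``every $i\geq 2$ is itself hyperbolic,'' but the hypothesis $\lambda_2 \geq 2\sqrt\beta$ says nothing about $\lambda_3,\dots,\lambda_d$, which may well lie in the trigonometric regime. This does not break the bound, since $|U_k(x)| \leq k+1 = U_k(1) \leq U_k(\lambda_2/(2\sqrt\beta))$ for all $|x|\leq 1$ and $U_k$ is increasing on $[1,\infty)$; hence $|U_k(\lambda_i/(2\sqrt\beta))| \leq U_k(\lambda_2/(2\sqrt\beta))$ holds uniformly for $i\geq 2$ regardless of regime. With that observation the second case is indeed as clean as you claim, and the mixed-regime coordinates are automatically dominated by the $\lambda_2$ term.
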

Ultimately then, our $\beta$ can land on either side of $\frac{\lambda_2^2}{4}$ and we we will experience acceleration as long as we are "close." The next result establishes tolerance for our estimations $\widehat{\lambda}_2$ of $\lambda_2$.
\begin{proposition}[Proposition \ref{prop-rhoprecision} re-stated]
\label{prop-rhoprecision-restated}
The momentum phase of DMPower set with momentum coefficient $\beta=\widehat{\lambda}_2^2/4=\mu_J^2/4$ converges if and only if
\begin{equation}
 |\lambda_2-\widehat{\lambda}_2| \leq \Delta_{1,2}.
\end{equation}
\end{proposition}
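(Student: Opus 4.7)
}
The plan is to reduce the claim to a direct application of the generalized Power+M convergence result (Theorem \ref{PowerMThm-Restated}) and then algebraically rewrite the resulting condition on $\beta = \widehat{\lambda}_2^2/4$ as a bound on $|\lambda_2 - \widehat{\lambda}_2|$. The momentum phase of DMPower is literally a Power+M iteration with momentum coefficient $\beta$, so it converges (i.e., the rate stated in Theorem \ref{PowerMThm-Restated} is real and strictly less than $1$) exactly when both branches of the theorem yield a well-defined ratio below one.

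First I would split according to the sign of $\widehat{\lambda}_2 - \lambda_2$. When $\widehat{\lambda}_2 > \lambda_2$ we are in the branch $\lambda_2 < 2\sqrt{\beta}$, so the geometric ratio is $\frac{2\sqrt{\beta}}{\lambda_1+\sqrt{\lambda_1^2 - 4\beta}} = \frac{\widehat{\lambda}_2}{\lambda_1 + \sqrt{\lambda_1^2 - \widehat{\lambda}_2^{\,2}}}$. This expression is real and strictly less than $1$ if and only if $\widehat{\lambda}_2 \leq \lambda_1$, which is equivalent to $\widehat{\lambda}_2 - \lambda_2 \leq \lambda_1 - \lambda_2 = \Delta_{1,2}$. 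This is precisely the upper side of the claimed inequality.

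Second, when $\widehat{\lambda}_2 \leq \lambda_2$ we fall into the branch $\lambda_2 \geq 2\sqrt{\beta}$, and the ratio becomes $\frac{\lambda_2+\sqrt{\lambda_2^2 - \widehat{\lambda}_2^{\,2}}}{\lambda_1+\sqrt{\lambda_1^2 - \widehat{\lambda}_2^{\,2}}}$. A short monotonicity argument (each summand in the numerator is strictly smaller than the corresponding summand in the denominator whenever $\lambda_1 > \lambda_2$, which is the standing eigengap assumption) shows this ratio is always strictly below $1$, so convergence is automatic in this regime. To align with the symmetric bound in the proposition, I would then invoke the fact that $\widehat{\lambda}_2 = w_J^\top A w_J \geq 0$ since $A$ is PSD and $w_J$ is a unit vector, which yields the trivial lower-side bound $\lambda_2 - \widehat{\lambda}_2 \leq \lambda_2$; combined with the upper side this is conventionally stated as the symmetric inequality $|\lambda_2 - \widehat{\lambda}_2| \leq \Delta_{1,2}$.

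The main obstacle is really just bookkeeping the two branches of Theorem \ref{PowerMThm-Restated} and checking that the transition point $\widehat{\lambda}_2 = \lambda_2$ is handled consistently; there is no hard analytic step because the structure of Power+M already encodes all of the spectral constraints. A subtlety worth flagging is that the genuinely tight necessity is the one-sided condition $\widehat{\lambda}_2 \leq \lambda_1$; the absolute value in the statement bundles in the (essentially vacuous, given $\widehat{\lambda}_2 \geq 0$) lower-side inequality for a cleaner presentation of how close $\widehat{\lambda}_2$ must be to $\lambda_2$ in order for acceleration to persist.
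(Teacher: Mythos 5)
You take a genuinely different route from the paper for the ``if'' direction: rather than multiplying the conjugate $\lambda_2 + \widehat{\lambda}_2$ against the conjugate $\lambda_1 + \lambda_2$ to turn $|\lambda_2 - \widehat{\lambda}_2| \leq \Delta_{1,2}$ into $|\widehat{\lambda}_2^2 - \lambda_2^2| < |\lambda_1^2 - \lambda_2^2|$, you read the convergence condition directly off the two branches of the geometric rate in Theorem~\ref{PowerMThm-Restated}. That part is clean and arguably more transparent than the paper's algebraic manipulation.

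The gap is in the ``only if'' direction. Your own case analysis shows that the branch $\widehat{\lambda}_2 \leq \lambda_2$ always converges regardless of how far $\widehat{\lambda}_2$ undershoots $\lambda_2$, so convergence does \emph{not} force $\lambda_2 - \widehat{\lambda}_2 \leq \Delta_{1,2}$. You try to repair this by arguing that $\widehat{\lambda}_2 \geq 0$ (a PSD Rayleigh quotient) makes the lower-side inequality ``essentially vacuous,'' but that step fails: $\widehat{\lambda}_2 \geq 0$ only gives $\lambda_2 - \widehat{\lambda}_2 \leq \lambda_2$, and $\lambda_2 \leq \Delta_{1,2} = \lambda_1 - \lambda_2$ would require $\lambda_1 \geq 2\lambda_2$, which is false in every regime the paper considers (e.g.\ $\lambda_1 = 1$, $\lambda_2 = 0.99$). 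You correctly diagnose that the tight necessary condition is the one-sided $\widehat{\lambda}_2 < \lambda_1$ (indeed strictly less --- at $\widehat{\lambda}_2 = \lambda_1$ the rate in Theorem~\ref{PowerMThm-Restated} is exactly $1$), but the attempt to then recover the symmetric form does not go through. For what it is worth, the paper's own converse step (observing only that $\rho > \Delta_{1,2}$ \emph{permits} a choice $\mu_J > \lambda_1$ outside the guaranteed interval) is equally non-rigorous and never addresses the lower-side overshoot, so your sharper reading of the theorem has exposed a genuine looseness in the proposition's ``if and only if'' as stated.
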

\begin{proof}
If $|\lambda_2-\widehat{\lambda}_2|=|\lambda_2-\mu_J|\leq \rho=\Delta_{1,2}=|\lambda_1-\lambda_2|$, then
\begin{align}
    |\lambda_2-\mu_J|\cdot|\lambda_2+\mu_J|&<|\lambda_1-\lambda_2|\cdot |\lambda_1+\lambda_2|\\
    &\Rightarrow |\lambda_1^2-\mu_J^2|<|\lambda_1^2-\lambda_2^2|\\
    &\Rightarrow
    \frac{1}{4}|\lambda_1^2-\mu_J^2|<\frac{1}{4}|\lambda_1^2-\lambda_2^2|
\end{align}
The first line of our inequality follows from the fact that $\mu_J>0$ since $A$ is PSD, therefore implicitly we have that $0<\mu_J<\lambda_1$, implying $|\lambda_2+\mu_J|<|\lambda_2+\lambda_1|$. This shows we satisfy the constraint. To show that this is necessary and sufficient, we consider the case where $\rho>\Delta_{1,2}$. This allows for possible selection of $\mu_J>\lambda_1$, in which case we have that $\frac{1}{4}|{\lambda_2}^2-\mu_k^2|>\frac{1}{4}|\lambda_1^2-\lambda_2^2|$, which is outside of our guaranteed interval for convergence.
\end{proof}

\section{Data Matrix Generation for Non-Streaming Experiments}\label{app:data_matrix}
For every experiment, we ran variations of the power method on a random covariance matrix $A$ with a fixed spectrum. We constructed such matrices using a synthetic singular value decomposition (SVD). Specifically, we begin with a diagonal $d\times d$ matrix $\Sigma=\textrm{diag}\{1,\sqrt{\lambda_2},\sqrt{\lambda_3},\sqrt{\lambda_4},\dots,\sqrt{\lambda_d}\}$. Notice that by default we set $\lambda_1=1$. In practice, we set $\sqrt{\lambda_3}=\sqrt{\lambda_4}=\cdots=\sqrt{\lambda_d}$ for simplicity. We then drew two random orthogonal matrices $U\in\mathbb{R}^{1000 \times d}$ and $V\in\mathbb{R}^{d\times d}$ from the Haar distribution. We then form the data matrix $X=dU\Sigma V^{\top}\in\mathbb{R}^{1000\times d}$, from which we acquire our covariance matrix $A=\frac{1}{1000}XX^{\top}$ with spectrum $\textrm{diag}\{\lambda_1,\lambda_2,\dots,\lambda_d\}$. For every run in our non-streaming experiments, we would generate a \textrm{new} covariance matrix with our desired spectrum using this construction. 
\section{Spectral Clustering \& Experimental Data}
\label{app:spectralcluster}
We first describe provide and discuss the algorithm used for deflation-based power iteration clustering \mbox{\citep{thang2013deflation}}. We then provide the results of our spectral clustering experiments. 
\protect{\begin{algorithm}
\caption{Deflation-based Power Iteration Clustering (DPIC)}
\label{alg:piccluster}
\begin{algorithmic}[1]
\Require normalized affinity matrix $W\in\mathbb{R}^{d\times d}$
\State $W_0=W$
\For{$i=1,2$} \Comment{Top-2 component recovery}
\State $v_i = \textrm{PowerIteration}(W_{i-1})$ \Comment{Return the leading eigenvector}
\State $W_i = W_{i-1}-\frac{W_{i-1}v_iv_i^{\top}W_{i-1}}{v_i^{\top}W_{i-1}v_i}$ \Comment{Schur complement deflation}
\State $i \leftarrow i+1$
\EndFor
\State $\textrm{Use k-means on eigenvector approximates }v_1,v_2.$\\
\Return $C_1, C_2$ 
\end{algorithmic}
\end{algorithm}}
We note that PowerIteration in step 3 of Algorithm \mbox{\ref{alg:piccluster}} may be replaced with DMPower, Power+M, or any other variant. To recover the second eigenvector, DPIC uses a Schur complement deflation \mbox{\citep{saad2011numerical}} on $W$ once the leading eigenvector is computed. Briefly, deflating a matrix shifts its spectrum so that the second leading eigenvalue/eigenvector now becomes the leading eigenvalue/eigenvector, upon which a power iteration may be used again. Successive deflations allows one to recover as many eigenvectors as desired, although numerical instability is increased with each deflation. 

\begin{table}[H]
\centering
\caption{Proportion of data points correctly classified by spectral clustering combined with k-means. As the error-threshold (and therefore, accuracy) of the eigenvector output is tightened, accuracy improves, eventually achieving perfect classification. See Figure \mbox{\ref{fig:cluster}} for a visual depiction of data separation using DMPower ($\rho=\sqrt[3]{\epsilon})$. We further note that although the vanilla power method achieves perfect classification at lower error-thresholds, it is at the cost of significantly more iterations as indicated in Table \mbox{\ref{table:thirteen}}.}
\label{table:eleven}
\begin{tabular}{|| c | c c c c c ||} 
 \hline
 \multicolumn{6}{|c|}{\textbf{Spectral Clustering Accuracy Data} on Concentric Circles Dataset} \\
 \hline
 Error Threshold ($\epsilon$) & 10e-2 & 10e-4 & 10e-6 & 10e-8 & 10e-10 \\
 \hline\hline
 Vanilla Power Method & 0.6953 & 0.7854 & 1.0000 & 1.0000 & 1.0000 \\
 \hline
 Power+M, $\beta = \lambda_2^2/4$ & 0.6679 & 0.7158 & 0.7810 & 0.9886 & 1.0000 \\
 \hline
 DMPower, $\rho = \epsilon$ & 0.6997 & 0.7215 & 0.8191 & 0.9869 & 1.0000 \\
 \hline
 DMPower, $\rho = \sqrt{\epsilon}$ & 0.6908 & 0.6927 & 0.7779 & 0.9881 & 1.0000 \\
 \hline
 DMPower, $\rho = \sqrt[3]{\epsilon}$ & 0.6770 & 0.7056 & 0.7657 & 0.9872 & 1.0000 \\
 \hline
 
\end{tabular}
\end{table}

\vspace*{-5mm}

\begin{table}[H]
\centering
\label{table:twelve}
\begin{tabular}{|| c | c c c c c ||} 
 \hline
 \multicolumn{6}{|c|}{\textbf{Spectral Clustering Accuracy Data} on Half Moons Dataset} \\
 \hline
 Error Threshold ($\epsilon$) & 10e-2 & 10e-4 & 10e-6 & 10e-8 & 10e-10 \\
 \hline\hline
 Vanilla Power Method & 0.6164 & 0.7793 & 0.9808 & 1.0000 & 1.0000 \\
 \hline
 Power+M, $\beta = \lambda_2^2/4$ & 0.5966 & 0.6362 & 0.7560 & 1.0000 & 1.0000 \\
 \hline
 DMPower, $\rho = \epsilon$ & 0.6196 & 0.6616 & 0.8185 & 0.9696 & 1.0000 \\
 \hline
 DMPower, $\rho = \sqrt{\epsilon}$ & 0.6132 & 0.6112 & 0.7368 & 1.0000 & 1.0000 \\
 \hline
 DMPower, $\rho = \sqrt[3]{\epsilon}$ & 0.6084 & 0.6283 & 0.8054 & 1.0000 & 1.0000 \\
 \hline
 
\end{tabular}
\end{table}

\vspace*{-5mm}

\begin{table}[H]
\centering
\caption{Iteration complexity required to recover principal components. DMPower requires significantly fewer iterations for eigenvector recovery when compared to the vanilla power method, and closely mimics the performance of Power+M with $\beta=\lambda_2^2/4$.}
\label{table:thirteen}
\begin{tabular}{|| c | c c c c c ||} 
 \hline
 \multicolumn{6}{|c|}{\textbf{Spectral Clustering Iterations Data} on Concentric Circles Dataset} \\
 \hline
 Error Threshold ($\epsilon$) & 10e-2 & 10e-4 & 10e-6 & 10e-8 & 10e-10 \\
 \hline\hline
 Vanilla Power Method & 7.72 & 59.16 & 605.36 & 1457.56 & 2426.36 \\
 \hline
 Power+M, $\beta = \lambda_2^2/4$ & \textbf{3.00} & \textbf{9.24} & 85.12 & \textbf{642.76} & \textbf{1321.00} \\
 \hline
 DMPower, $\rho = \epsilon$ & 6.00 & 11.16 & 92.00 & 751.68 & 1449.08 \\
 \hline
 DMPower, $\rho = \sqrt{\epsilon}$ & 5.00 & 11.20 & \textbf{77.68} & 694.20 & 1452.84 \\
 \hline
 DMPower, $\rho = \sqrt[3]{\epsilon}$ & 5.00 & 10.64 & 78.48 & 642.92 & 1505.88 \\
 \hline
 
\end{tabular}
\end{table}

\vspace*{-5mm}

\begin{table}[H]
\centering
\label{table:fourteen}
\begin{tabular}{|| c | c c c c c ||} 
 \hline
 \multicolumn{6}{|c|}{\textbf{Spectral Clustering Iterations Data} on Half Moons Dataset} \\
 \hline
 Error Threshold ($\epsilon$) & 10e-2 & 10e-4 & 10e-6 & 10e-8 & 10e-10 \\
 \hline\hline
 Vanilla Power Method & 7.12 & 58.72 & 506.00 & 1061.68 & 1929.52 \\
 \hline
 Power+M, $\beta = \lambda_2^2/4$ & \textbf{3.00} & \textbf{11.20} & \textbf{84.84} & \textbf{601.12} & 1226.72 \\
 \hline
 DMPower, $\rho = \epsilon$ & 6.00 & 12.08 & 89.16 & 629.96 & 1192.68 \\
 \hline
 DMPower, $\rho = \sqrt{\epsilon}$ & 5.00 & 11.56 & 94.88 & 661.20 & 1225.08 \\
 \hline
 DMPower, $\rho = \sqrt[3]{\epsilon}$ & 5.00 & 11.52 & 92.64 & 711.00 & \textbf{1180.72} \\
 \hline
 
\end{tabular}
\end{table}

\end{document}